\documentclass[10pt]{article}

\usepackage[utf8]{inputenc} 
\usepackage{amsmath,amsfonts,amssymb} 
\usepackage{graphicx} 
\usepackage{cite} 
\usepackage{hyperref} 
\usepackage[margin=0.75in]{geometry} 
\usepackage{titlesec} 
\usepackage{abstract} 
\usepackage{float} 

\setlength{\columnsep}{0.25in} 


\usepackage[ruled,vlined,linesnumbered]{algorithm2e}

\SetCommentSty{mycommfont}

\usepackage{amsmath}
\usepackage{amssymb}
\usepackage{amsthm}
\usepackage{bbm}
\usepackage{xfrac} 
\usepackage{makecell} 
\usepackage{longtable} 

\usepackage{caption}
\usepackage{subcaption}
\usepackage{float}
\usepackage{url}

\usepackage{xcolor}
\usepackage{graphicx}

\usepackage[normalem]{ulem} 

\usepackage{tikz}
\usetikzlibrary{positioning}
\usetikzlibrary{arrows.meta}

\tikzset{
    mass/.style={draw, rectangle, minimum width=0.5cm, minimum height=0.5cm},
    spring/.style={thick, decorate, decoration={zigzag, segment length=6pt, amplitude=3pt}},
    damper/.style={thick,decoration={markings, mark connection node=dmp, mark=at position 0.5 with 
    {
        \node (dmp) [thick, inner sep=0pt, transform shape, rotate=-90, minimum width=15pt, minimum height=3pt, draw=none] {};
        \draw [thick] ($(dmp.north east)+(2pt,0)$) -- (dmp.south east) -- (dmp.south west) -- ($(dmp.north west)+(2pt,0)$);
        \draw [thick] ($(dmp.north)+(0,-5pt)$) -- ($(dmp.north)+(0,5pt)$);
    }}, decorate},
    mymassspringdamper/.pic={
        \draw[spring] (0,0) -- (0,-1);
        \node[mass] (m) at (0,-1) {};
        \draw (m.north west) ++(-0.2,0.2) -- ++(0.4,-0.4);
        \draw (m.south west) ++(-0.2,-0.2) -- ++(0.4,0.4);
        \draw[damper] (0,-1) -- (0,-2);
    }
}

\DeclareMathOperator*{\argmin}{arg\,min}
\DeclareMathOperator*{\argmax}{arg\,max}
\newcommand{\pluseq}{\mathrel{+}=}
\newcommand{\mdp}{\textup{MDP}}
\newcommand{\state}{\mathbf{x}}
\newcommand{\action}{\mathbf{u}} 
\newcommand{\statespace}{X} 
\newcommand{\actionspace}{U} 

\newtheorem{definition}{Definition}
\newtheorem{theorem}{Theorem}

\newtheorem{assumption}{Assumption}
\newtheorem*{proof*}{Proof}

\newtheorem{lemma}{Lemma}

\newcommand{\acronym}{SETS}
\newcommand{\name}{Spectral Expansion Tree Search}

\newcommand{\ev}[1]{\mathbbm{E}\left[ #1 \right]}
\newcommand{\prob}[1]{\mathbbm{P}\left[ #1 \right]}
\newcommand{\abs}[1]{\left| #1 \right|}
\newcommand{\ind}[1]{\mathbbm{1}\left[ #1 \right]}
\newcommand{\ceil}[1]{\left \lceil #1 \right \rceil}

\newcommand{\cev}[2]{\mathbbm{E}\left[ #1 \ | \ #2 \right]}
\newcommand{\cprob}[2]{\mathbbm{P}\left[ #1 \ | \ #2 \right]}

\newcommand{\conei}{c^{d}_1}

\newcommand{\ctwoi}{c^{d}_2}

\newcommand{\cthreei}{c^{d}_3}

\newcommand{\cfouri}{c_4^{d}}
\newcommand{\cfourj}{c_4^{d+1}}
\newcommand{\cfivei}{c^{d}_5}
\newcommand{\cfivej}{c^{d+1}_5}
\newcommand{\csixi}{c^{d}_6}
\newcommand{\csixj}{c^{d+1}_6}
\newcommand{\cseveni}{c_7^{d}}
\newcommand{\csevenj}{c_7^{d+1}}
\newcommand{\ceighti}{c_8^{d}}
\newcommand{\ceightj}{c_8^{d+1}}

\newcommand{\coneib}{c^{D-1}_1}

\newcommand{\ctwoib}{c^{D-1}_2}

\newcommand{\cthreeib}{c^{D-1}_3}

\newcommand{\cfourib}{c^{D-1}_4}

\newcommand{\cfiveib}{c^{D-1}_5}

\newcommand{\csixib}{c^{D-1}_6}

\newcommand{\csevenib}{c^{D-1}_7}

\newcommand{\ceightib}{c^{D-1}_8}

\title{\textbf{Monte Carlo Tree Search with Spectral Expansion for Planning with Dynamical Systems}}
\author{
    Benjamin Riviere$^{*1}$, John Lathrop$^{*1}$, Soon-Jo Chung$^{1}$ \\
    $^*$ The first two authors contributed equally to this article. \\ 
    $^1$ Department of Engineering and Applied Science, California Institute of Technology \\
    \textcolor{blue}{This is the accepted version of Science Robotics Vol 9, Issue 97} \\ 
    DOI: 10.1126/scirobotics.ado101, \ \
    \href{https://www.science.org/doi/10.1126/scirobotics.ado1010}{Link to paper}, \ \
    \href{https://www.youtube.com/watch?v=o2ctFPs7OD4}{Link to video}, \ \ 
    \href{https://github.com/aerorobotics/sets}{Link to code}
}
\date{}

\titleformat{\section}{\large\bfseries}{\thesection}{1em}{}
\titleformat{\subsection}{\normalsize\bfseries}{\thesubsection}{1em}{}
\titleformat{\subsubsection}{\normalsize\itshape}{\thesubsubsection}{1em}{}


\begin{document}

\maketitle

\begin{abstract}
\noindent\textbf{Abstract:} 
The ability of a robot to plan complex behaviors with real-time computation, rather than adhering to predesigned or offline-learned routines, alleviates the need for specialized algorithms or training for each problem instance. 
Monte Carlo Tree Search is a powerful planning algorithm that strategically explores simulated future possibilities, but it requires a discrete problem representation that is irreconcilable with the continuous dynamics of the physical world.
We present \name{} (\acronym{}), a real-time, tree-based planner that uses the spectrum of the locally linearized system to construct a low-complexity and approximately equivalent discrete representation of the continuous world. 
We prove \acronym{} converges to a bound of the globally optimal solution for continuous, deterministic and differentiable Markov Decision Processes, a broad class of problems that includes underactuated nonlinear dynamics, non-convex reward functions, and unstructured environments.
We experimentally validate \acronym{} on drone, spacecraft, and ground vehicle robots and one numerical experiment, each of which is not directly solvable with existing methods. We successfully show \acronym{} automatically discovers a diverse set of optimal behaviors and motion trajectories in real time.
\end{abstract}


\section*{Introduction}
\label{sec:introduction}

Endowing robots with high-performing and reliable autonomous decision-making is the ultimate goal of robotics research and will enable applications such as sea, air, and space autonomous exploration, self-driving cars, and urban air mobility.
These autonomous robots need to make decisions encompassing low-level physical movements (i.e., motion planning) and high-level strategy such as selecting goals, sequencing actions, and optimizing other decision variables.

This vision of robotic autonomy remains elusive because solving the decision-making problem exactly in high-dimensional continuous-space systems has ``the curse of dimensionality''~\cite{Bellman1957}.
In light of this complexity, many autonomous robots in deployment avoid a general problem formulation and instead exploit particular problem structure for computational benefits.
For example, motion planning can be solved with sampling-based methods~\cite{lavalle1998rapidly, kavraki1996probabilistic,orthey2023sampling}, trajectory optimization can be solved with convex optimization~\cite{morgan2014model,malyuta2021convex}, and high-level discrete decision-making can be solved with value iteration~\cite{sutton2018reinforcement}.
These methods can also be combined hierarchically for complex behavior, such as autonomous multi-agent inspection of spacecraft~\cite{Nakka_2022b}, robotic manipulation~\cite{kaelbling2011hierarchical,garrett2021integrated}, and self-driving urban vehicles~\cite{paden2016survey, schwarting2018planning}.
Problem-specific solutions are well-understood, can be made computationally efficient, and have proven to be effective.
However, these solutions are limited because they cannot be easily transferred to new problems, adding an expanding burden to the designer for multi-task autonomy.
Additionally, there exist problems that cannot easily be decomposed into computationally tractable sub-problems.

Reinforcement learning is an alternative approach that uses trajectory data to train optimal policies.
Unlike the aforementioned methods, this approach is a generalized procedure that can be applied directly to a broader class of problems.
This property enables important new capabilities such as drone racing~\cite{Song_2023}, helicopter flight~\cite{Abbeel_2006}, grasping~\cite{Lenz_2013} and bipedal locomotion~\cite{castillo2021robust}.
However, these methods usually require an offline training phase and their operation is fundamentally limited in new or changing environments.
In addition, these black-box approaches are unexplainable and provide limited guarantees on optimality, stability, or robustness. 

Some planning and model-based reinforcement learning approaches use tree-based algorithms~\cite{kearns2002sparse, kocsis2006bandit, munos2014bandits} that strategically explore simulated future trajectories from the current state using a family of methods collectively known as Monte Carlo Tree Search (MCTS)~\cite{browne2012survey}. 
In contrast with offline learning methods, MCTS generates high-quality solutions using real time computation: given a system's dynamics and a reward function, the goal is to return the best possible plan with the computational budget available. 
However, whereas the tree's nodes and edges are naturally defined for discrete spaces, the continuous space of robots presents new challenges. 
Uniform spatial and temporal discretization of continuous spaces leads to very large trees and slow convergence rates: a poor discrete representation of the underlying continuous problem.

In this work, we present \name{} (\acronym{}), a real-time and continuous-space planning algorithm that converges to globally optimal solutions.
The new capability of \acronym{} is enabled by efficiently representing continuous space with the system's natural motions, a concept formalized through the spectrum of the locally linearized controllability Gramian.
Without further assumptions on the dynamics or reward, the transformed low-complexity form is solved with Monte Carlo Tree Search (MCTS).
The \acronym{} tree is visualized in Fig~\ref{fig:overview}A. 
Compared to a direct discretization, our method reduces the branching factor and number of decisions in the time horizon, leading to an exponential reduction in tree size.
Our theoretical contribution is twofold: we show our algorithm efficiently discretizes continuous space and we provide a new finite-time convergence analysis of Monte Carlo Tree Search in our setting. 
This analysis proves fast convergence to a bound of the globally optimal solution for planning problems with deterministic and differentiable dynamics, state-dependent and Lipschitz rewards, and continuous state-action spaces.

\begin{figure}
    \centering 
    \includegraphics[width=0.9\linewidth]{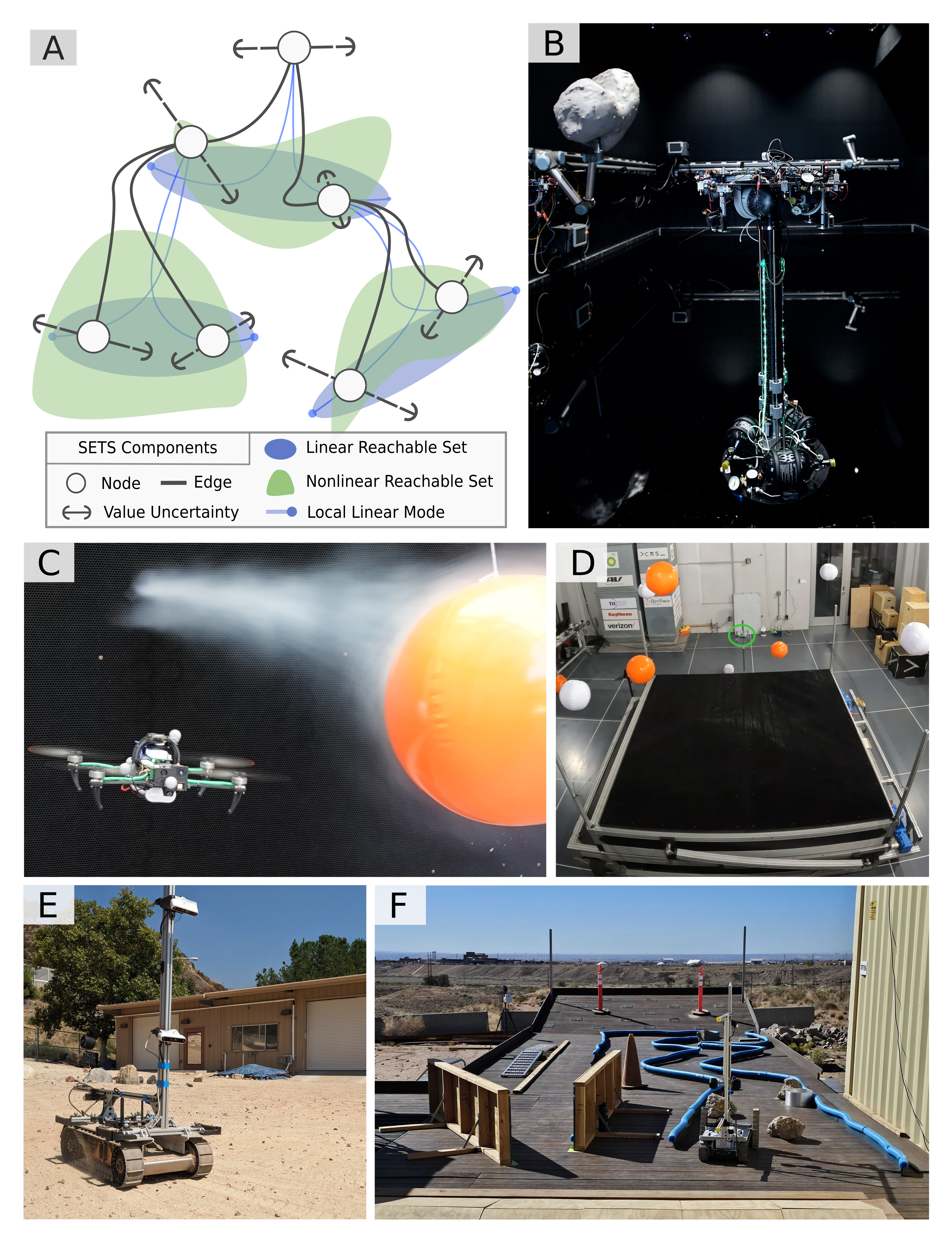}
    \caption{
    (A) Our method, \acronym{}, is a new tree-based planning algorithm for dynamical systems. 
    The tree's edges (shown in gray) are constructed by tracking the spectral modes of the local linearization (shown in blue) with nonlinear feedback control. 
    (B/C/D/E/F) We demonstrate \acronym{} is widely applicable in robotic domains, spanning ground, aerial, and space domains. 
    }
    \label{fig:overview}
\end{figure}

Our hardware and simulation experiments showcase a diverse set of ``discovered-not-designed'' behaviors generated in real time for various robot dynamics and objectives: 
first, a quadrotor solves a dynamically constrained Traveling Salesman Problem to quickly monitor multiple targets in a windy arena, selectively traversing wind gusts and avoiding floating ball obstacles; 
second, a tracked vehicle shares control with a driver through a concourse of ramps, chicane, and sawtooth tracks subject to adversarial degradation;
third, a team of spacecraft use a net to capture and redirect an uncooperative target in a frictionless environment; and
fourth, in simulation, a glider experiencing aerodynamic drag detours into a thermal to extract energy from the environment and survive long enough to achieve its directed task. 
We use this last experiment as a case study to empirically validate the theoretical result, compare our method to state-of-the-art baselines, and tune parameters in a systematic and informed procedure.

\section*{Results}
\label{sec:results}
We present results validating \acronym{} in four experiments, three on different robotic platforms and one in simulation.
The breadth of our experiments highlights the general applicability of \acronym{} on many robotic platforms.
We compare against state-of-the-art baselines in simulation and analyze the empirical results of our theoretical guarantees.

\subsection*{
Quadrotor navigates a dangerous wind field
}
\label{sec:quad}

\begin{figure}
    \centering 
    \includegraphics[width=0.99\linewidth]{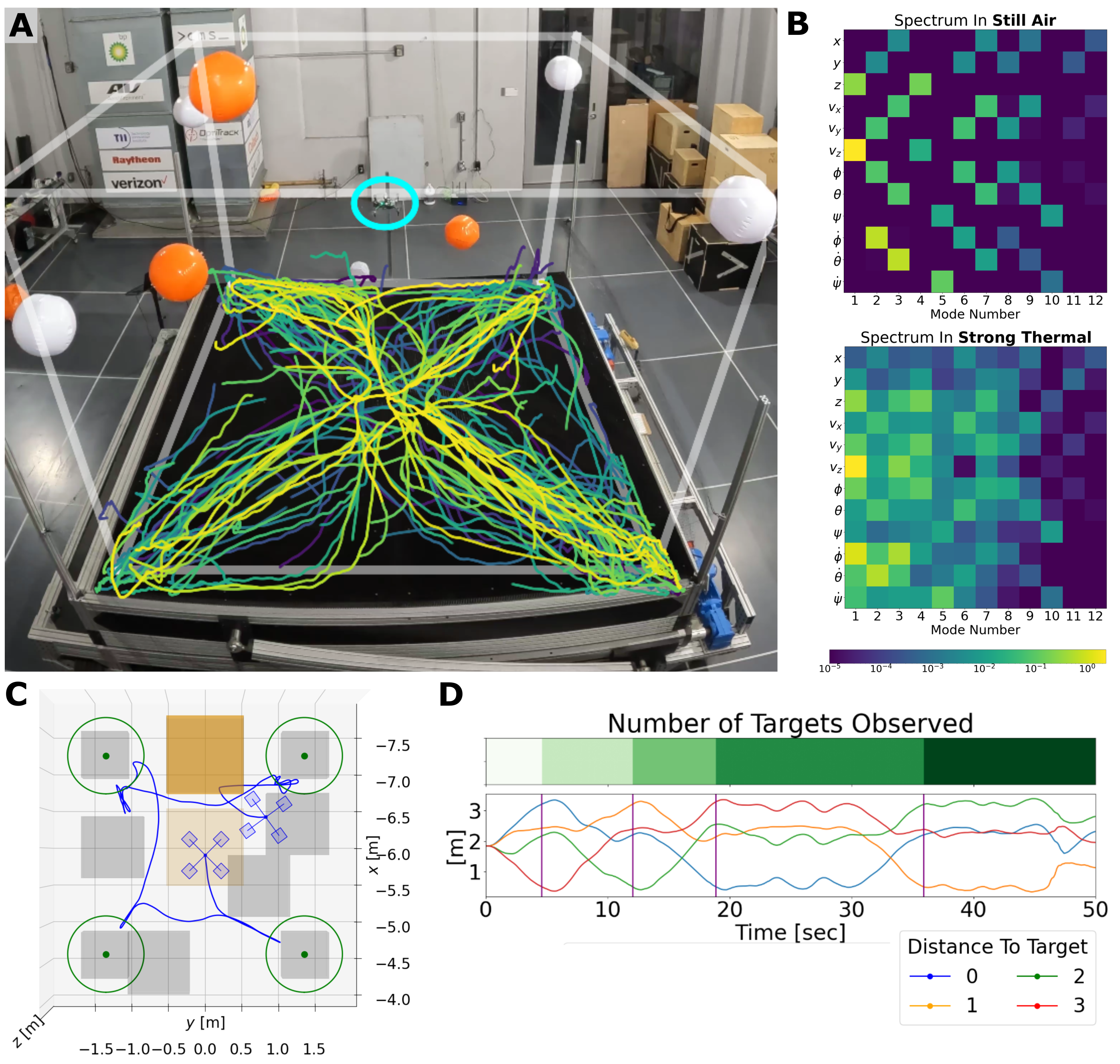}
    \caption{
    (\textbf{A}) 
    \acronym{} enables a drone, circled in blue, to plan trajectories to multiple targets (white) over a fan array and obstacles (orange) in real time. 
    The twelve dimensional search tree is projected onto the two dimensional fan surface. 
    The branches are colored by the order of expansion, with yellow indicating later trajectories. 
    (\textbf{B}) 
    The spectrum of the controllability Gramian is shown for flying in still air and flying through a thermal. 
    Each column corresponds to a natural motion of the system, each row corresponds to a dimension of the state, and each cell is colored by the magnitude of its controllability. 
    (\textbf{C}) A top-down view of the final trajectory, where the targets are shown in green, the thermals in orange, and the obstacles in gray. The thermals are shaded by their relative strength. 
    (\textbf{D}) The distance to each target over time. 
    The mission objective is to visit all four targets and is completed after 37 seconds.
    }
    \label{fig:quad}
\end{figure}

Our first experiment uses \acronym{} to plan trajectories for a quadrotor to monitor multiple targets in a cluttered 3D environment of dangerous wind drafts and moving obstacles and is shown in Fig.~\ref{fig:quad}. 
The experimental arena, shown in Fig.~\ref{fig:quad}A, is a cube with 3 m side lengths where the Real Weather Wind Tunnel in Caltech's Center for Autonomous Systems and Technologies (CAST) generates controlled columns of air to suspend and move spherical obstacles and observation targets. 
In addition to the physical obstacles, there exist dangerous and benign regions of flow of varying speed that affect the planning problem.
A video of the experiment is presented in Movie 1.

This experiment is designed to test the ability of \acronym{} to quickly plan in high-dimensional space subject to the quadrotor dynamics and external forces from aerodynamic interactions.
In particular, the dynamics model used by \acronym{} is a standard quadrotor model augmented with a Deep Neural Network (DNN) to model the learned residual wind force. 
In addition, the algorithm must run in real time to correct model error and to react to new information.
This problem is challenging for existing solutions because the complex interaction between varying wind strengths, dynamics, and obstacles determines path feasibility.
Therefore, the problem is not decomposable into position path planning then tracking control. 
Furthermore, an indicator reward function, a dense obstacle configuration, and multiple goal regions make it challenging to accurately model with conventional motion-planning or optimization-based frameworks. 
Instead, the algorithm must find a global solution by searching through complex dynamics. 

SETS is the planning module of the autonomy stack, generating trajectories of 10 second duration every 5 seconds and running in model-predictive control manner. 
SETS runs in real time by fixing the computation time, planning from the predicted future state, and closing the loop with feedback control. 
In Fig.~\ref{fig:quad}A, the search tree is visualized by projecting the twelve dimensional state trajectories onto the two dimensional surface of the fan array, where the branches are colored by when they were expanded: purple are the first trajectories in the tree and yellow are the last trajectories in the tree. 
Therefore the yellow trajectories serve as an indicator of the plan's progress: we can see that, over computation time, trajectories concentrate at the origin and at each of the target locations.

The \acronym{} tree is constructed with the spectrum of the locally linearized controllability Gramian, shown in Fig.~\ref{fig:quad}B. 
Each column of the spectrum plot is a natural motion (``mode'') of the locally linearized system, and each mode is used to create a branch in the tree search algorithm. 
The modes in still air are interpretable: the first mode of the spectrum corresponds to accelerating the vertical velocity $v_z$, and the next two modes correspond to a pitch and roll maneuver, $\phi$ and $\theta$, respectively. 
The modes in the thermal are more diffuse and are beyond human intuition, yet they are a provably efficient representation of the complex dynamics. 
For example, it becomes much harder for the quadrotor to excite the yaw, $\psi$, degree of freedom independently from the other degrees of freedom.  

The resulting trajectory from this experiment is shown in Fig.~\ref{fig:quad}C, where we see the quadrotor visits all the targets while avoiding obstacles and dangerous winds. 
In order to observe the target, the quadrotor has to be within a sensing radius of 50 cm of the center of the target (visualized in Fig.~\ref{fig:quad}C). 
The physical size of the target creates an obstacle of 30 cm radius, leaving a feasible viewing volume of 0.3 m$^3$ for each target. 
\acronym{}'s ability to quickly find trajectories to these small regions of the position state space while accounting for fast attitude and aggressive wind dynamics demonstrates a sophisticated balance of precision, exploration, and stability. 

The overall mission progress can be described by the the distance to each target over time.
This metric and the cumulative number of targets observed, are shown in Fig.~\ref{fig:quad}D.
The quadrotor visits all the targets in 37 seconds, with the final transition through the narrow corridor of thermals taking the longest time to find a solution. 
The solution's nature is reminiscent of the classical Traveling Salesman Problem~\cite{flood1956traveling}; here \acronym{} discovers the dynamics-dependent cost of traveling between targets and the optimal path to visit them all. 
Discovering this solution automatically, rather than prescribing a sequence of waypoints, has two advantages: 
first, the burden on the designer to enumerate and integrate many behaviors is relieved, extending the operational envelope of the robot and second, \acronym{} may solve problems beyond the intuition of the designer. 
For example, it is not clear to a designer at what fan strength and exposure duration a wind gust becomes too dangerous to cross safely.

\subsection*{
Tracked vehicle with human-in-the-loop
}

\label{sec:linc}

\begin{figure}
    \centering 
    \includegraphics[width=0.75\linewidth]{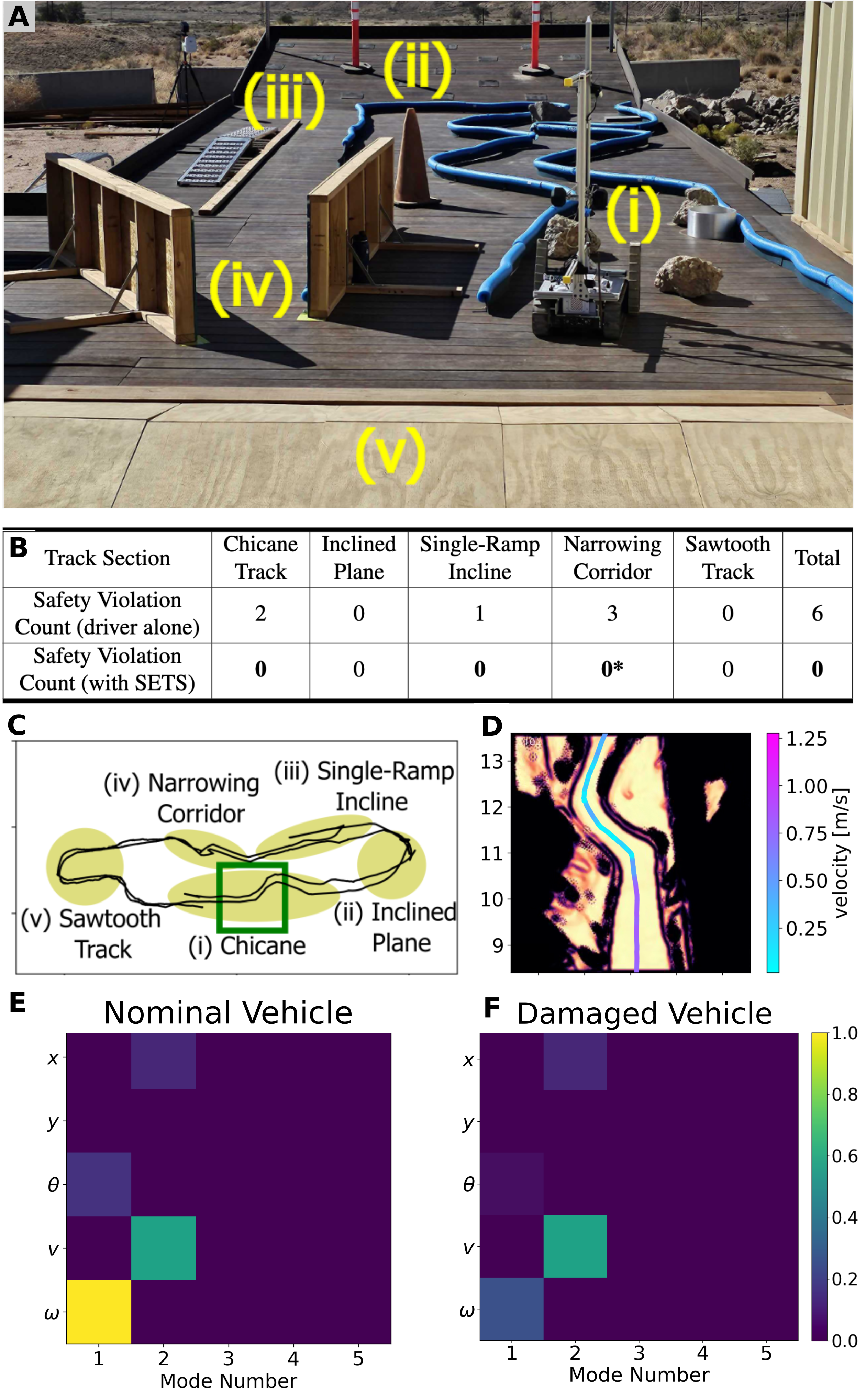}
    \caption{Caption next page.}
    \label{fig:linc_no_caption}
\end{figure}
\addtocounter{figure}{-1}
\begin{figure} [t!]
    \caption{
    (\textbf{A}) The five sections of the test circuit are indicated. 
    (\textbf{B}) We count safety violations of a human driver versus a human driver sharing autonomy with \acronym{} for a selected run through the test course, where a safety violation is a collision with obstacles or a vehicle tip-over. 
    \textbf{*}Due to human error, the autonomy stack was disabled for a few seconds, at which point a collision occurred.
    (\textbf{C}) The trajectory of our vehicle as it completes two loops of the track. Each section of the track is highlighted in yellow. 
    We box in green a segment of the chicane track, shown in (D), where \acronym{} slows the vehicle to prevent a collision with the wall.
    (\textbf{D}) We overlay the path of the vehicle onto a visualization of the hazard map. With no driver input, the intelligent reasoning of \acronym{} slows and turns the vehicle as the course narrows to maintain safety. 
    (\textbf{E}) For the tracked vehicle facing in the positive $x$-direction, we show the spectrum of the local controllability Gramian, where columns correspond to natural motions, rows are states, and each element is colored by its controllability. Under degradation to its actuators (\textbf{F}), the vehicle is limited in its turning ability. The spectrum of the controllability Gramian enables \acronym{} to efficiently interpret actuator degradation, allowing it to make updated plans according to the capabilities of the vehicle.
    }
    \label{fig:linc} 
\end{figure}

Our second experiment, shown in Fig.~\ref{fig:linc}, uses \acronym{} for driver-assist of a tracked vehicle subject to antagonistic terrain effects, tipping constraints, obstacle avoidance, and actuator degradation. 
This human-in-the-loop configuration is also called ``parallel autonomy'' in the self-driving car literature~\cite{schwarting2018planning}.
These experiments were developed during the DARPA Learning Introspective Control (LINC) research program, with various versions of the algorithm tested at the Sandia National Laboratory Robotic Vehicle Range. 
A video of this experiment is presented in Movie 1 and Supplemental Movie 1 and 2.

In the experiment, the tracked vehicle is tasked with assisting a driver to traverse a 20 m $\times$ 10 m test circuit with slippery slopes, a variety of obstacles, and a thin chicane track, shown in Fig.~\ref{fig:linc}A,C. 
In Fig.~\ref{fig:linc}D, we zoom in on one section of the chicane track, in which our algorithm automatically adjusts the driver's command to maintain safety.
Passing through a particularly narrow portion of the track (less than 5 cm clearance), without the need for an updated command from the driver, the robot slows and turns to avoid hitting the walls from a nominal speed of 1 m/s to ~0.25 m/s.
\acronym{} predicts a collision if no diversionary maneuver is taken, then creates and executes an updated plan.

This problem is challenging because of the complex human-robot interaction. 
In our experiments, human pilots prefer an interface that tracks commanded speed, rather than commanded waypoints, meaning a position-space goal region for motion planning does not capture the human-robot interaction well. 
In addition, local optimization-based approaches are inapplicable because exploration is needed to consider nearby candidate trajectories to assist the driver. 
For example, the human driver may command a forward velocity, not understanding there exists an impending collision or risk of tipping over, and instead of coming to a halt, our method explores alternative plans including slowing down, backing up, and turning that ultimately provide the commanded velocity tracking and match the operator intention.
\acronym{} adjusts the pilot's command to maintain safety while maintaining forward progress, displaying intricate maneuvers such as navigating closely around obstacles, decelerating when traversing ridges to avoid tip-over, and executing reverse turning maneuvers in tight corners, all while experiencing adversarial track degradation and time-varying dynamics.

While moving through the track, the vehicle is subject to actuator degradation that scales the control limits of each drive motor by $25\%$ in an alternating sequence of separate degradation and mixed degradation. 
This attack signal is randomly toggled on and off with a period of approximately ten seconds.
\acronym{} is able to efficiently interpret this attack signal through the local spectrum of the tracked vehicle, shown in Fig.~\ref{fig:linc}E,F. 
Here, the locally linearized dynamics have a two-dimensional controllable subspace, one dimension associated with moving forward/backward and one dimension associated with turning left/right.
First, we note the effect of degrading the actuator on the spectrum: the controllability of turning mode is decreased, providing the tree search an expressive interpretation of the current physical situation. 
Second, we note that the mode numbers 3, 4, and 5 have zero magnitude, implying the tracked vehicle is not locally controllable~\cite{Astrom_2008}. 
This illustrates an informal notion of nonlinear global controllability for systems that are not locally controllable:
by stitching together modes from multiple locally linearized systems, \acronym{} creates motions that are not available to a single linearized system. 
For example, a sideways translation can be achieved through the sequential combination of a forward snaking motion followed by a backward snaking motion, reminiscent to motion planning strategies in literature~\cite{murray1991robotic}.

In program demonstrations, the same professional driver drove through the track with and without an autonomy driver assist.
Throughout the experiment, the expert driver was asked to command the vehicle in the same adversarial manner with intentional commands to try to force a safety error. 
We show the number of safety violations in a selected run of the test circuit where adversarial and changing disturbances are present in Fig~\ref{fig:linc}B, where we see \acronym{} outperformed the driver alone and completely prevented safety violations. 
In the chicane section, at 2:36 in Movie 1, SETS causes the robot to turn and avoid a collision with the track, despite a driver command that would cause an impact. 
In the chicane section, at 2:45 in Movie 1, SETS causes the robot to act more conservative, slowing down as it traverses the narrow section.
In addition, during program demonstrations, using \acronym{}, even an inexperienced driver is able to safely navigate the course. 

\acronym{} acts as a planning module which interacts with custom perception, control, and safety algorithms.
Running in model-predictive control fashion, \acronym{} generates trajectories 1.6 seconds into the future every 0.1 seconds.
\acronym{}'s ability to solve problems with general rewards, dynamics, and constraints facilitates its integration with the other autonomy components. 
For example, \acronym{} interacts with a hazard map built with foundational vision models for segmentation to predict traversability.
\acronym{} uses the traversability map to generate a safe plan that takes into account complicated geometric and dynamic information about the robot's environment.
\acronym{} also interacts with an adaptive controller, which compensates for terrain changes and actuator degradation using parameter adaptation to rapidly update the corresponding dynamic function. 
\acronym{} uses the system identification of the adaptive controller to update the dynamics function.
The real-time nature of \acronym{} and its ability to plan over a wide class of dynamics allow it to incorporate nonlinear dynamics updates and automatically benefit from this interaction.

\subsection*{Two collaborative spacecraft redirect debris}

\begin{figure}
    \centering 
    \includegraphics[width=0.9\linewidth]{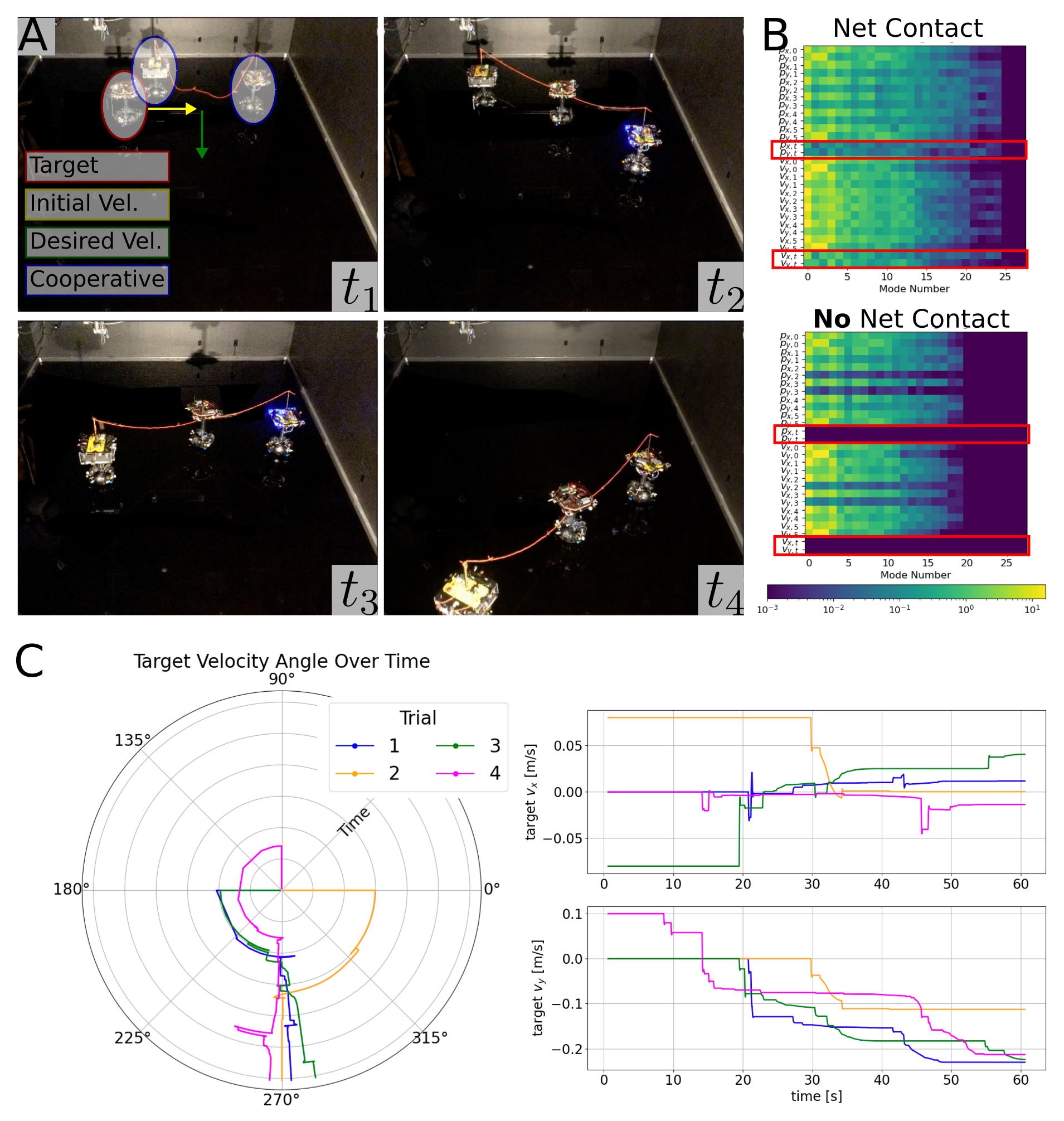}
    \caption{
    (\textbf{A}) Still shots of the spacecraft experiment. The two tethered spacecraft arrest the motion of the target spacecraft.
    Planning through the tether-contact model, the controlled spacecraft shepherd the target spacecraft in the desired direction, toward the camera.
    These frames have been modified to reduce glare off the floor.
    (\textbf{B}) The spectrum of the controllability Gramian captures the high-dimensional dynamics of this problem. At the moment of contact, the states associated with the target (outlined in red) become controllable.
    (\textbf{C}) The direction of the target spacecraft velocity over the course of each experiment.
    In each case, the motion of the target spacecraft is correctly redirected.
    }
    \label{fig:spacecraft}
\end{figure}

In the third and final hardware experiment, we deploy \acronym{} on a team of two spacecraft to capture and redirect a piece of space debris with a tether, reminiscent of a NASA mission concept to capture and redirect a near-Earth asteroid~\cite{nasaredirect}.
This experiment tests the ability of \acronym{} to coordinate multiple robots and plan through high-dimensional nonlinear dynamics induced from contact forces and tether dynamics.

The experiment, depicted in Fig.~\ref{fig:spacecraft}A, takes place in an arena with a smooth and ultra flat epoxy floor, where spacecraft robots hover on air bearings and are actuated with onboard thrusters to simulate a frictionless space environment. 
\acronym{} controls the two cooperative spacecraft tethered together. 
The third spacecraft is not controlled by \acronym{} and models an uncooperative target. 
The cooperative spacecraft are tasked to arrest the motion of the target and shepherd it to exit the arena in a desired direction.
\acronym{} predicts the motion of the three bodies and the tether,  which is modeled with a spring-mass-damper finite-element-model, while only controlling the thruster inputs of the two cooperative spacecraft.
This high-dimensional and underactuated problem tests the ability of \acronym{} to plan for complex dynamics over a horizon.
In particular, the two controlled spacecraft can only affect the dynamics of the target through contact with the center of the net, and the center of the net can only be influenced by the thrusters after propagating through the lattice elements of the tether. 
This chain of dynamics requires deliberate maneuvering of the controlled spacecraft to redirect the target.

Despite these challenges, \acronym{} automatically generates correct behavior due to its efficient representation of the dynamics: in Fig.~\ref{fig:spacecraft}B, the controllable modes of the system capture this networked structure of the finite element tether model.
The most controllable states are those associated with the outermost nodes in the network, which are exactly the controlled spacecraft.
The controllability of the nodes in the net decreases towards the center of the structure.
Importantly, the spectrum reveals that actuating the target is impossible before contact. 

\acronym{} interprets this discrete representation of the dynamics to efficiently find near-optimal solutions.
Four trials are tested by varying the target's initial position and velocity. 
In the first, the target spacecraft is stationary and \acronym{} finds a trajectory that sequentially deploys, captures, and redirects. 
In the second and third configurations, the target is initialized moving parallel to the initial cooperative spacecraft configuration, and quick motion is required of the tethered spacecraft to move into its way for capture. 
In the fourth, the target is initialized moving towards the initial cooperative spacecraft configuration, and they execute a ``trampoline-like'' maneuver, pulling the tether taut to springboard the target back in the desired direction. 

The trajectory data for each case are shown in Fig.~\ref{fig:spacecraft}C. 
The right plots show the absolute velocities over time for each trial.
The polar plot, which is coordinate-aligned with the snapshots in Fig.~\ref{fig:spacecraft}A, shows the relative angle between the $x$-and $y$-components of the target's velocity converge to 270$^{\circ}$, which is the desired mission behavior. 
Successfully controlling this high-dimensional and underactuated system in real time is only possible with \acronym{} and can ultimately enable new mission concepts.  

\subsection*{
Aerodynamic glider performs persistent observation
}
\label{sec:glider}

\begin{figure}
    \centering 
    \includegraphics[width=0.99\linewidth]{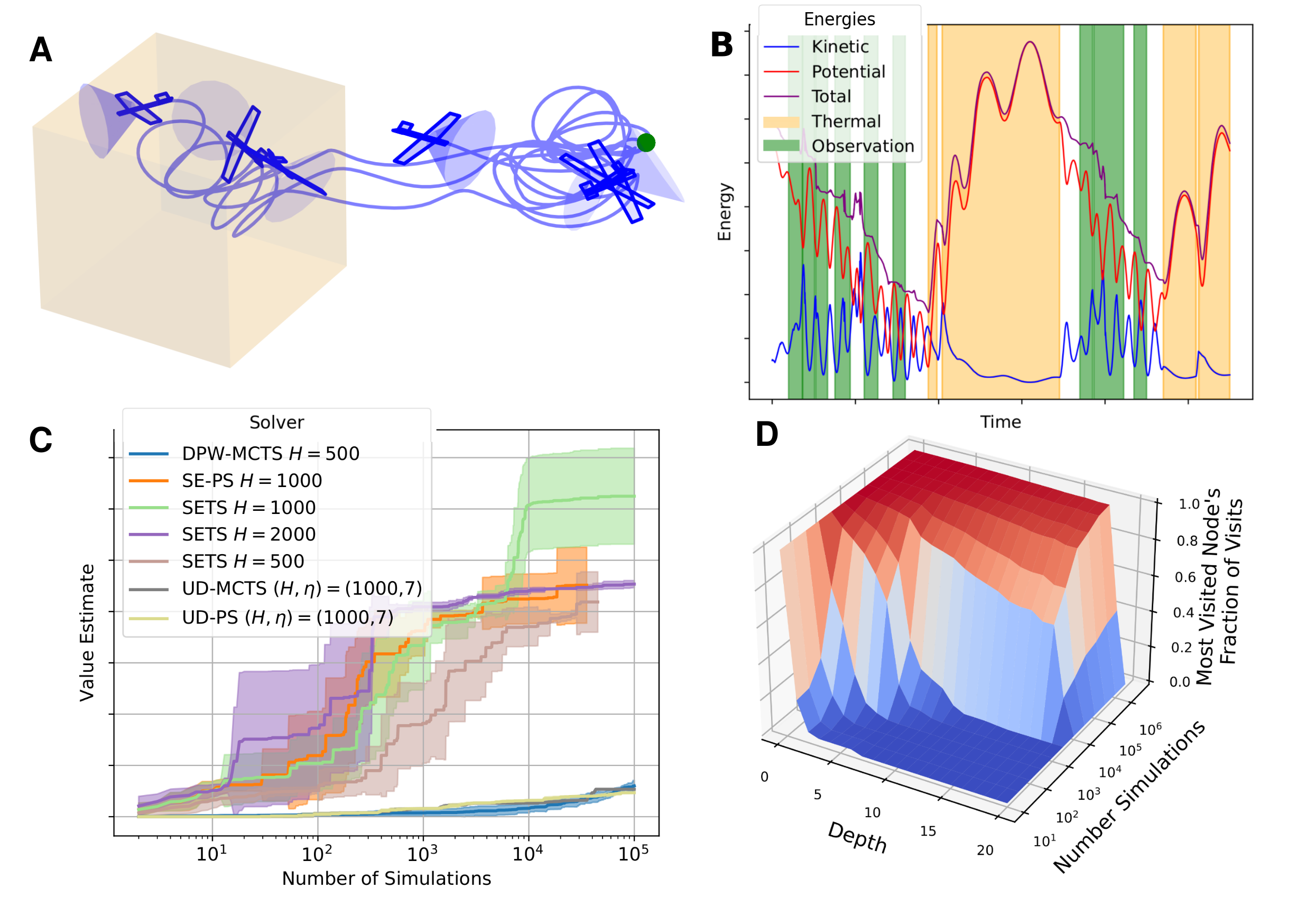}
    \caption{
    (\textbf{A}) The glider (blue) discovers an oscillation strategy between observing the target (green) to maximize its objective, and passing through the thermal (orange) to gain energy and maintain altitude.
    (\textbf{B}) Kinetic and potential energy of the glider over time. In nominal air conditions, the glider loses energy to drag and would eventually crash, but flying through the thermal (orange) provides upwards force and allows the glider to accumulate potential energy.
    (\textbf{C}) Value convergence: as the number of simulations grows, the value estimate increases. As predicted by theory, the branch length parameter $H$ controls the trade-off between convergence and error. Error bars shown are one standard deviation. We also plot the value estimate of each baseline. 
    (\textbf{D}) Policy convergence: as the number of simulations grows, the visits concentrate deeper in the tree and the plan becomes more refined.
    }
    \label{fig:glider}
\end{figure}

In this numerical experiment, we use \acronym{} on a 6 degree-of-freedom glider, which is tasked to observe a target in the presence of a thermal updraft, shown in Fig.~\ref{fig:glider}.
We analyze tree data to make observations about the policy and value convergence. 
We also use this experiment to compare against external baselines and make ablation studies.

The environment is a $2\times2\times1$ km$^3$ volume arena and takes place over 10 minutes in simulation time, shown in Fig.~\ref{fig:glider}A. 
The glider dynamics, parameters, and aerodynamic coefficients are from the Aerosonde UAV~\cite{beard2012small}. 
The interaction of aerodynamics, the observation objective, and the environmental thermal makes this a challenging planning problem: 
In order to generate enough lift to counteract gravity, the glider needs to average approximately 30 meters per second forward velocity. 
At the same time, drag drains the system's kinetic energy and, without exploiting the thermal, would cause the glider to crash into the ground after about 4 minutes. 
However, the glider can save itself by flying into the thermal, extracting energy from the environment, and resuming its observation task. 
\acronym{} discovers this solution in real time, running in model-predictive control fashion and generating 100 seconds of trajectory every 45 seconds. 
The trade-off between kinetic and potential energy in this periodic solution are visualized in Fig.~\ref{fig:glider}B. 
This periodic solution is challenging to generate with existing motion-planning or optimization frameworks because modeling the problem with a single static goal region would either cause the glider to crash or never observe the target. 
Furthermore, detouring to the thermal once at the target requires aggressive exploration and is in contradiction to goal-seeking behavior of local optimization.

The data for the two subfigures, Fig.~\ref{fig:glider}C,D, are generated by running \acronym{} from a given state, and analyzing the search tree. 
In Fig.~\ref{fig:glider}C, we plot \acronym{}'s root node value estimate versus the number of simulations. 
As predicted by our analysis in Sec.~Theoretical Results, our algorithm's branch length hyperparameter, $H$, controls the trade-off between convergence rate and error: short branches have slow convergence to small steady-state error, and long branches have fast convergence to large steady-state error. 
The $H$ $=$ $500$ case induces such a high complexity problem that it is unable to overtake the $H=1000$ estimate within the operating regime of $10^5$ simulations ($10$ minutes of wall-clock time). 
This plot informs our decision to select the optimal branch length $H=1000$.

We use the same value estimate data from the tree in the baseline study. 
This study is designed to isolate the effects of representation and exploration strategy. 
For choices of representation (low-level construction of nodes and edges), we implement
spectral expansion (SE), uniform discretization (UD), and double progressive widening (DPW)~\cite{couetoux2011continuous}. 
The SE method is our approach, the UD approach uniformly discretizes the action space with $\eta$ discrete points per dimension, and the DPW method adaptively samples from the action space, with more samples given to promising nodes. 
For choices of exploration strategy (high-level search on a set of existing nodes and edges), we implement Monte Carlo Tree Search (MCTS) (Algorithm~\ref{algo:spectral_search}) and predictive sampling (PS)~\cite{howell2022predictive}, where PS is a uniform sampling approach that returns the best trajectory found. 
The baselines and our method in Fig.~\ref{fig:glider}C are a permutation of these representation and exploration strategies. 
The baseline data in Fig.~\ref{fig:glider}C is generated by running each solver over $10$ random seeds for branch lengths $H=100,500,1000,2000$ and discretization levels $\eta=3,5,7,11$, where only the best variant of each baseline is shown. 
Other than our approach, the only algorithm that performs well is spectral expansion with predictive sampling, (SE-PS). 
In fact, SE-PS $H=1000$ outperforms SETS $H=500$, suggesting that representation, not exploration strategy, is the most important component of optimal planning for dynamical systems.

In Fig.~\ref{fig:glider}D, we define a measure of the tree's confidence at a particular depth as the most visited node's fraction of the total visit count, and plot it versus the depth and size of the tree. 
The relative visitation frequency is a metric for the tree's confidence in its action selection because of the Upper Confidence Bound rule of Monte Carlo Tree Search~\cite{kocsis2006bandit}: 
actions are selected by their optimistic value estimate so when the visitations are concentrated to a single action the optimistic value estimate of all the other actions is less than that of the highly visited (and well-estimated) selected action.
As the number of simulations increases, concentration occurs deeper in the tree, indicating the plan is being refined further into the future, and the tree has reached sufficient confidence in the plan until that point. 
When running in receding-horizon fashion, the time between replanning steps should be large enough to allow a highly confident plan to develop.

\section*{Discussion}
\label{sec:discussion}

We present a qualitative comparison with existing approaches and discuss how our work can impact the field of robotics.

\subsection*{Optimization and gradient-based planning}
Some planning problems can be solved with convex optimization techniques~\cite{morgan2014model, malyuta2021convex}.
Taking advantage of fast numerical solvers (e.g.~\cite{mosek}), this family of methods has been shown to solve, in real time, a variety of challenging dynamical problems including quadrotor drone racing~\cite{foehn2021time}, in-orbit assembly\cite{foust2020autonomous}, bipedal locomotion~\cite{hereid20163d}, swarm coordination~\cite{morgan2016swarm} and swarm coverage~\cite{schwager2011unifying}.
A variety of techniques, including sequential convex programming~\cite{morgan2014model,morgan2016swarm}, collocation~\cite{hereid20163d}, and single and multiple shooting methods~\cite{mayne1966second, bock2000direct, li2004iterative} enable this method to be applied to non-convex problems,
though theoretical analysis of these methods has shown convergence is limited to a local minima~\cite{dinh2010local, bonalli2019gusto}.
The general planning problem's state, input, dynamics, and reward specifications can each create local minima traps under which a pure exploitation strategy will fail to find the global optimal solution. 
The classical example is the bug trap problem~\cite{sucan2012}, where the obstacle configuration will cause methods that do not explore to converge to a highly sub-optimal point. 
Recent work~\cite{marcucci2023motion} has proposed avoiding local minima in motion planning by partitioning the environment into convex sets and solving a relaxed mixed-integer program.

Compared to these approaches, \acronym{} provides globally optimal solutions by performing a discrete search on a carefully constructed representation of natural motions. 
For example, the quadrotor avoids local minima traps from obstacles and wind gusts and the glider temporarily ignores the high reward of visiting the target and instead navigates to the thermal to maintain altitude and energy constraints.

\subsection*{Sampling-based motion planning}

The standard robotics approach to overcome the nonconvexities inherent in problem data, such as traps from obstacle configurations, is sampling-based search~\cite{orthey2023sampling}.
Foundational methods such as Probabilistic Roadmaps (PRM)~\cite{kavraki1996probabilistic}, Rapidly Exploring Random Trees (RRT)~\cite{lavalle1998rapidly} and their variations (e.g.~RRT*~\cite{Karaman_2011_ijrr}) sample configurations and use a local planner to build trees and graphs on which to perform global optimization. 
Although RRT was originally designed for kinodynamic planning, these planners are most commonly used as geometric path planners.

Applying sampling-based planning for dynamical systems has two challenges: 
first, the higher-order states increase the dimensionality of the search space, resulting in slow convergence and poor performance and
second, these methods rely on the existence of a local planner capable of solving arbitrary two-point boundary value problems. 
Addressing these challenges is an active area of research. 
For example, Stable-Sparse-RRT (SST)~\cite{Li2016} relaxes the knowledge of a local planner by sampling control inputs to generate dynamically feasible paths, then prunes redundant edges to maintain a sparse tree. 
Another work~\cite{poccia2017deterministic} considers quasi-Monte Carlo sampling to speed convergence of tree search for control-affine and driftless control-affine systems.
Discontinuity-bounded A*~\cite{hoenig_2022} is similar to our work in that they plan for high-dimensional dynamical systems over a discrete structure of motion primitives, and is different because their motion primitives are generated offline by sampling input and goal states. 
The generation and integration of motion primitives into discrete planning has a rich history~\cite{frazzoli2002real, frazzoli2005maneuver, saveriano2021dynamic}. 

As in our work, the Differential Fast Marching Tree (DFMT) method~\cite{schmerling2015optimal} uses the controllability Gramian for planning with dynamical constraints. 
Whereas DFMT uses the Gramian to verify reachability of a given two point boundary value problem, with their terminal point uniformly sampled from the state space, our work explicitly uses the spectrum of the Gramian to generate the terminal point in a provably efficient exploration strategy. 
This distinction is significant because uniform sampling poorly covers high dimensional spaces. 
In addition, whereas DFMT considers linear systems and a single global Gramian, we consider nonlinear systems, construct a local Gramian at each node, and bound the linearization error with an additional tracking control procedure. 
Other inspired works that blend control and planning are LQR-trees~\cite{tedrake2009lqr} and funnel library planning~\cite{majumdar2017funnel}, both of which rely on sum-of-square programming to compute regions of attraction upon which the discrete planner searches. 

Most continuous space planning algorithms scale poorly to high-dimensional spaces because they generate nodes and edges by sampling from the state or action space. 
To the best of our knowledge, \acronym{} is the first to use the spectrum of the locally-linearized system to generate nodes and edges, resulting in a branching factor that is linear in the state dimension. 
This conceptual difference manifests itself in new search capabilities: to the best of our knowledge, there do not exist kinodynamic motion planners that \emph{search} in real time through high-dimensional nonlinear dynamics such as a 12-dimensional quadrotor with DNN-modeled wind effects. 

In addition to the differences in search strategies, our work differentiates itself in the generality of the problem assumptions. 
Whereas motion planning problems from the robotics community focus on static, position-space goal regions~\cite{karaman2010optimal}, the planning problem defined by the computer science community~\cite{moerland2023model} relaxes the problem setting.
This is useful in instances where a goal region does not capture the essence of the problem, or when a goal region is difficult or impossible to define.
For example, in chess, a notion of a goal region could be ``the set of all positions in which the opponent's king is checkmated'', but this set is challenging to enumerate or path to with a motion planning method.
With this in mind, the experiments in this paper are selected because they are easily formulated as decision-making problems, but they cannot be cast as motion planning.
For example, in the tracked vehicle experiments, human drivers preferred interfacing with an algorithm that tracks their commanded velocity rather than navigates to a waypoint, implying a conventional motion planning framework with a position goal region does not model the human-robot interaction well.
In the glider experiment, a static goal region does not exist because the optimal behavior is to oscillate between observing the target and the thermal.
Besides immediately growing the set of feasible robot behaviors, the more general problem framework places decision-making as a more natural foundation to extend to stochastic dynamics~\cite{auger2013continuous}, partially observable~\cite{Silver_2010, sunberg2018online, Ragan_2023}, and game-theoretic~\cite{Lisy_2013} settings.

There is a recent body of work on sampling-based model predictive control methods, including cross-entropy motion planning~\cite{kobilarov2012cross} and model predictive path integral control~\cite{williams2016aggressive}.
By sampling many trajectories then performing a weighted average, these methods have shown impressive performance in scenarios including autonomous driving~\cite{williams2016aggressive} and manipulation~\cite{bhardwaj2022storm}.
These methods have also shown straightforward integration with high-fidelity simulators~\cite{pezzato2023sampling} and learned dynamics models~\cite{williams2016aggressive}.
These strategies traditionally perturb inputs with Gaussian noise to promote exploration. 
However, without careful tuning of noise distributions and averaging temperature, this can lead to slow exploration or the generated policies getting stuck in local minima.
Similar to our work, it is possible to consider sampling over abstractions such as splines in state space~\cite{bhardwaj2022storm}, and these are active areas of research. 
Whereas these methods explore by perturbing trajectories with Gaussian noise, our approach uses a theoretically rigorous balance of exploration and exploitation, and we prove this results in convergence to globally optimal solutions.

Sampling-based search is also investigated in the partially observable case.
Algorithms for handling continuous state and action spaces have focused on particle filtering methods, with specializations to overcome the challenges inherent in uncertain observations such as sharing observation data in the tree~\cite{garg2019despot}, progressive widening~\cite{sunberg2018online}, or linear filtering~\cite{Ragan_2023}.
We bring our attention to the fully observed case to focus on the difficulty of decision making, rather than combined decision making and information gathering.
There is recent work that transforms a stochastic planning problem into a deterministic planning problem~\cite{nakka2022trajectory}, moving the difficulty of dealing with stochastic dynamics and observations into same decision-making framework we consider.

\subsection*{Reinforcement learning}
Our problem setting is closest to planning in model-based reinforcement learning: once the dynamics and reward model are learned, the planning component finds the optimal value and policy.
The classical planning methods using dynamics and reward models are value and policy iteration~\cite{Bellman1957, sutton2018reinforcement}. 
The fundamental issue with these methods, which persists in modern approaches, is that representing a high-dimensional continuous space has a high complexity~\cite{SUTTON1991353, moerland2023model}. 
The direct approach is to discretize the state and action space and run value iteration, but this has a storage complexity that is exponential in the state dimension, making these methods computationally infeasible for high-dimensional robotics applications. 
Research has developed in multi-grid~\cite{chow1991optimal} and adaptive-grid~\cite{munos2002variable} representations, but the practical gains in scaling to high-dimensional systems are marginal. 
Recent work~\cite{gorodetsky2018high} has proposed a tensor-based method that exploits a low-rank decomposition of value functions, enabling efficient discretization and improved policy generation on systems of comparable dimensionality to our experiments.

An alternate reinforcement learning approach is that of model-free methods that directly learn correlations between observations and optimal actions without explicitly using a dynamics or reward function~\cite{polydoros2017survey, arulkumaran2017deep}.
Policy gradient methods have offered a powerful technique to handle continuous state and action spaces, with methods such as Proximal Policy Optimization~\cite{schulman2017proximal} becoming a standard tool in the community.
In special cases, such as linear-quadratic regulation~\cite{fazel2018global}, global convergence results exist.
However, the general convergence of these algorithms in continuous space is limited to local stationary points~\cite{zhang2020global}. 
Additional work has gone on to classify these stationary points between problems, drawing conclusions about structural similarities that help policy gradient methods converge globally~\cite{bhandari2024global}.
These methods, while powerful and general, require large datasets and an offline training phase. 
These methods suffer fundamentally from domain shift, the difference between the offline training environments and the environment of the deployed system. 
In contrast to data-driven reinforcement learning methods, our algorithm is able to run on a never-before-seen problem with guaranteed global convergence to an optimal solution, thus avoiding the danger of domain shift.

A natural direction to develop real-time intelligence is via anytime algorithms that can be stopped at an arbitrary point, returning satisfactory solutions that increase in quality as more time is given. 
In the world of decision-making, the prototypical example is Monte Carlo Tree Search~\cite{browne2012survey}, an algorithm that simulates random trajectories while biasing towards actions of high reward. 
Exhaustive~\cite{schmerling2018multimodal} and uniformly random searches~\cite{howell2022predictive} have been shown to be effective in some scenarios, but the improved strategic exploration of MCTS enables convergence in problems where simpler techniques fail to efficiently search the combinatorially large space.
Although MCTS can perform very well in traditional artificial intelligence settings such as games, the complexity of the high-dimensional and continuous world presents a fundamental challenge. 
Random search using simulators~\cite{howell2022predictive} has been investigated, but the investigated search strategies scale poorly. 
Directly sampling the action space~\cite{auger2013continuous}, even with sophisticated strategies~\cite{kim2020monte,williams2017model}, induces trees with large width and depth: sampling high-dimensional continuous action spaces creates a high branching factor and discretizing time creates a large number of decisions over the horizon.

Temporal abstraction, or options~\cite{SUTTON1999}, is a principled framework to make decisions over sequences of actions and reduce the complexity of decision-making. 
Options are analogous to motion primitives in motion planning. 
There has been work in option construction in planning~\cite{Lee-RSS-21} and using MCTS~\cite{Bai_2016,de2016monte}, where options are hand-crafted.
There has also been work in option construction~\cite{jamgochian2023constrained} in the partially observable setting. 
An interpretation of our method is that it automatically generates options in a dynamically informed and provably correct framework.

Data-driven methods~\cite{sutton1999policy}, and combinations with gradient-based techniques~\cite{deisenroth2011pilco,levine2013guided,riviere_2020}, have also been deployed for decision-making.
However, their reliance on large amounts of demonstration data during an offline training phase limits their applicability to systems and scenarios where the complete problem data is known ahead of time.
In contrast, our method can be deployed on a never-before-seen problem, and, for any allowed computational budget, produce a plan of approximately optimal decisions that increases in quality with more time.

\subsection*{Significance}

We expect \acronym{} to positively impact the field of autonomous robots and research in decision-making. 
From a design perspective, \acronym{} provides many important advantages: 
first, \acronym{} solves a broad class of MDPs and therefore interfaces naturally with other autonomy components and can be used for new and diverse tasks and systems. This relieves the burden on the designer and extends the operational envelope of autonomous behavior. 
Second, because the search tree of \acronym{} can be visualized and analyzed, it has a high degree of explainability and can be tuned and verified by the user. 
Third, because \acronym{} is efficient enough to run in real time, it can react to new information on the fly with real-time computation. 
For these reasons, we believe \acronym{} can be a default choice for planners in a wide range of autonomy applications.

From a research perspective, \acronym{} builds an important connection between dynamical systems and machine learning. 
To develop this connection, we interpret the tree policy as an online learning process of a categorical distribution on each node's children. 
Our first theoretical result Theorem~\ref{thm:mdp_spectral_search_value_estimate}, which applies to a broad class of dynamical systems, suggests the spectrum of the local controllability Gramian can be used as provably correct and widely applicable learning features. 
These features enable real-time learning for complex dynamical systems by simplifying the decision-making problem to selecting among a set of natural motions of the system.
Although beyond the scope of this work, we believe these features can be used for offline policy learning, kinodynamic motion planning, and other sampling-based planning, providing reduction in computational complexity and improved convergence rates.

From the algorithmic complexity perspective, \acronym{} reduces complexity in two separate mechanisms as compared to the uniform-discretization approach:
First, as \acronym{} plans over trajectories instead of individual actions, the total tree depth is decreased by a factor of $H$, where $H$ is the duration of a trajectory generated by the spectral nodal expansion. 
Second, the width of the tree (branching factor) is decreased from an exponential dependency on control dimension to a linear dependency on state dimension. 
This is enabled because each child node uniquely tracks one of the basis vectors of the controllable subpsace and, from linear theory, the number of basis vectors is upper bounded by the state dimension. 
In contrast, for a uniform discretization, the number of elements in the $m$-cube has an exponential dependence on $m$. 
The number of combinations in the tree, (i.e. width to the power of depth) is important because it appears in the convergence rate analysis of MCTS. 
This fact is revealed by our second theoretical result, Theorem~\ref{thm:mcts_convergence}, which is, to the best of our knowledge, the first complete MCTS value convergence result that does not assume knowledge of privileged problem information (the gap). 

\section*{Methods}
\label{sec:methods}

In this section, we present the problem formulation, algorithmic overview, and theoretical analysis of convergence to global optimality. 

\subsection*{Problem formulation}
We consider a Markov Decision Process (MDP)~\cite{puterman2014markov}, an abstract problem description written as a tuple of components: $\left<\statespace, \actionspace, F, R, D, \Omega, K, \gamma \right>$. 
Here $\statespace \subseteq \mathbbm{R}^{n}$ is a compact state space, $\actionspace \subseteq \mathbbm{R}^{m}$ is a compact action space, and $F:\statespace \times \actionspace \rightarrow \statespace$ are the discrete-time dynamics.
$R:\statespace \times \actionspace \rightarrow [0,1]$ is the stage reward and $D:\statespace \rightarrow \mathbbm{R}_{\geq 0}$ is the terminal reward.
$\Omega \subseteq \statespace$ is a set of unsafe states, $K \in \mathbbm{N}$ is the time horizon, and $\gamma \in [0,1)$ is the discount factor. 

At an initial state $\state_0$, the planning problem is to select a sequence of actions that maximizes the sum of the stage reward plus the terminal reward, subject to the dynamics and state/action constraints: 
\begin{align}
\label{eq:value_function}
V^*, \state_{[K]}^*, &\action_{[K]}^* = \underset{\state_{[K]}, \ \action_{[K]}}{\operatorname{argmax}}
\sum_{k=1}^{K} \gamma^k R(\state_{k}, \action_{k+1}) + \gamma^K D(\state_K) \\
\text{s.t.}  
\quad \state_k &= F(\state_{k-1}, \action_k), 
\quad \state_k \in \statespace \setminus \Omega, 
\quad \action_k \in \actionspace,
\quad \forall k \in [1,K] \nonumber
\end{align}
where $k$ is the timestep index and a bracket subscript indicates a sequence, e.g. $\state_{[K]} = [\state_1^\top, \state_2^\top, \dots, \state_K^\top]^\top \in \mathbb{R}^{nK}$.

Similar frameworks of decision-making over a horizon are considered by the reinforcement learning and optimal control community. 
The space of problems we consider is focused on smooth, deterministic, and nonlinear dynamics with continuous state and action spaces.
Because the systems we consider operate in continuous time, we discretize time uniformly so each discrete timestep $k$ index has duration of $\Delta t$ seconds.

\subsection*{Monte Carlo Tree Search}
\label{sec:mcts} 

\SetAlFnt{\small}
\begin{algorithm}

\caption{\name{} (\acronym{}) 
} 
\label{algo:spectral_search}

\SetKwProg{Def}{def}{:}{}

\Def{$\textup{SpectralExpansionTreeSearch}(\state_0, \mdp, H)$}{
    \tcc{set root}
    $i_0 = \text{Node}(\state_0)$ \;
    \For{$\ell=1,\hdots,$}{
        \tcc{initialize path at root then rollout}
        $p = [i_0]$ \; 
        \For{$d=1,\hdots,\lceil K/H \rceil$}{
            \tcc{best child (if not fully expanded, choose randomly)}
            \vspace{-2.5pt}
            $i^* = \argmax_{i \in C(p[-1])} 
            V(i,\ell) 
            + c_1 \frac{T(p[-1],\ell)^{c_3}}{T(i,\ell)^{c_2}}$ \label{algo_line:ucb} \;
            \vspace{-2.5pt}
            \uIf{$i^*$ \textup{is not expanded}}{
                \label{lin:spectral_expansion} $\textup{SpectralExpansion}(i^*, \ \state_0, \ \bar{\action}, \ \mdp, \ H)$ \;
            }
            \lIf{$i^* \in \Omega$}{\textup{break}}
            $p\text{.append}(i^*)$ \; 
        }
        \vspace{-3pt} 
        \tcc{backup}
        \For{$d=1,\dots,|p|$}{
            $\tilde{V}(p[d], \ell) \pluseq \sum_{t=d}^{\lceil K/H \rceil} \gamma^{t H} r(p[t])$ \; 
            $T(p[d], \ell) \pluseq 1$ \; 
        }
    }
    $\textbf{yield} \ V(\state_0, \ell)$ \;
}

\Def{$\textup{SpectralExpansion}(i, \ \state_0, \ \bar{\action}, \ \mdp, \ H)$}{
    \tcc{compute local linearization}
    $A_k, B_k, c_k = \textup{Linearization}(F, \ \state_0, \ \bar{\action}_{[H]})\ \quad \forall k \in [0, H-1]$ \; 
    $\textbf{z}_{k+1} = L_k(\textbf{z}_k, \action_{k+1}) = A_k \textbf{z}_k + B_k \action_{k+1} + c_k \quad \forall k \in [0, H-1]$ \; 
    \tcc{compute spectrum of normalized controllability Gramian}
    $S = \text{diag}(\{ \frac{2}{\overline{u}_j - \underline{u}_j} \ | \ \forall j \in [1,m] \})$ 
    \label{line:rescaling} \; 
    $\mathcal{C}= 
    \begin{bmatrix}
        \left(\prod_{k=1}^{H-1} A_k \right) B_0 S, &
        \left(\prod_{k=2}^{H-1} A_k \right) B_1 S, & 
        \dots, & 
        A_{H-1} B_{H-2} S, & 
        B_{H-1} S
    \end{bmatrix}$ \label{line:controllability_matrix}\;
    $[\mathbf{v}_1, \mathbf{v}_2, \dots, \mathbf{v}_n], [\lambda_1, \lambda_2, \dots, \lambda_n] = \textup{eig}(\mathcal{C}  \mathcal{C}^\top)$ \;
    \tcc{compute linear reference trajectory for the $i$th branch}
    $\mathbf{z}_H = (-1)^{i\%2} \sqrt{\lambda}_{i//2} \mathbf{v}_{i//2} + \left(\left( \prod_{k=0}^{H-1} A_k \right) \mathbf{z}_0 + \sum_{k=0}^{H-1} \left( \prod_{j=k}^{H-1} A_j \right) c_k \right) $ \;
    $\action^{\mathrm{ref}}_{[H]} = \text{clip}(\mathcal{C}^{\dagger} \mathbf{z}_H, \actionspace) $ \;
    $\mathbf{z}^{\mathrm{ref}}_{[H]} = L^H(\mathbf{z}_0, \action^{\mathrm{ref}}_{[H]}) $ \;
    \tcc{track reference trajectory with nonlinear system}
    $M_k = \text{DARE}(A_k, B_k, \Gamma_x, \Gamma_u)$ \; 
    $\mathcal{K}_k = (\Gamma_u + B_k^\top M_k B_k)^{-1} B_k^\top M_k A_k$ 
    \label{line:controller}\;
    $\action_{[H]} = \{ \text{clip}(\action^{\mathrm{ref}}_{k} - \mathcal{K}_{k-1} (\state_{k-1} - \mathbf{z}^{\mathrm{ref}}_{k-1}), \actionspace) \ | \ \forall k \in [1,H] \}$ \; 
    $\state_{[H]} = F^H(\state_0, \action_{[H]}) $ \;
    \textbf{return} $(\state_{[H]}, \action_{[H]}, R^H(\state_0, \action_{[H]}))$ \; 
}
\end{algorithm}

The pseudocode for \acronym{} is shown in Algorithm~\ref{algo:spectral_search}. 
\acronym{} performs a Monte Carlo Tree Search (MCTS) with a specialized nodal expansion operator, defined as Spectral Expansion in the pseudocode. 
We briefly summarize the procedure of MCTS, with more in-depth descriptions available in the literature~\cite{browne2012survey}:
while the robot has remaining computational budget, simulate future state trajectories from the current world state forward to the horizon of the MDP. 
At each node, the tree policy selects the best child by balancing exploration of visit counts and exploitation of observed reward. 
If a node is not fully expanded, generate a new child by taking an action and stepping forward in time.
When a simulated trajectory terminates, the accumulated reward and visit count information is backpropagated up the tree to update the total value and the number of visits at each node in the path, and the process iterates. 
We adopt for following notation for the pseudocode: 
$p$ is the ``path'', the list of nodes in one rollout, 
$i$ is a single node, 
$r(i)$ is the reward to node $i$, 
$C(i)$ is the set of node $i$'s children,
$\tilde{V}(i, \ell)$ is the total value of node $i$ after $\ell$ rollouts, 
$T(i, \ell)$ is the number of visits of node $i$ after $\ell$ rollouts, 
$V(i,\ell)$ is the average value of node $i$ after $\ell$ rollouts. 

Whereas the standard MCTS variant~\cite{kocsis2006bandit} uses logarithmic exploration term, \acronym{} uses a polynomial exploration term (see Line~\ref{algo_line:ucb} of Algorithm~\ref{algo:spectral_search}), with constants $c_1=1$, $c_2=0.5$ and $c_3=1$. 
This technique is supported by our theoretical analysis in Theorem~\ref{thm:mcts_convergence}, as well as other MCTS variants, both in theory~\cite{auger2013continuous,shah2020non} and practice~\cite{Silver_2017}.
Although the dynamics, reward, and spectral expansion operator are deterministic, SETS is a stochastic algorithm because, when multiple children of a node have not been visited, the algorithm randomly selects one. 
This stochasticity, which is the same as the original MCTS algorithm, is noted in Line~\ref{algo_line:ucb} of Algorithm~\ref{algo:spectral_search}.

\subsection*{Spectral Expansion Operator}

The Spectral Expansion operator, specified in Algorithm~\ref{algo:spectral_search} Lines 16-29, computes a trajectory of length $H$. 
The first step is to compute the linearization and eigendecomposition of the local controllability Gramian. 
We consider both time-invariant and time-varying linearizations, where the linearized system data is either computed once at the current state and a single nominal control input, or over a time-varying trajectory initialized at the current state, subject to a nominal control sequence. 
In practice, we found that time-varying linearization produces more stable trajectories, especially for highly agile platforms such as a quadrotor, and we select the unforced dynamics as the default nominal control input, ($\bar{\action} \equiv \mathbf{0}$). 

From the linearized system, we construct the controllability matrix $\mathcal{C}$, which is a linear mapping from sequences of control actions to the resulting terminal state.
This matrix and its associated Gramian are well-known in linear control theory~\cite{boyd1994linear}, with two important properties:
first, for linear systems, the set of states reachable with one unit of control energy is an ellipse parameterized by the spectrum of the Gramian 
and second, the pseudoinverse of the controllability matrix applied to a feasible desired terminal state computes the minimum energy control input that drives the system to that state. 

In Lines 19-20 of Algorithm~\ref{algo:spectral_search}, the inputs are scaled by their control limits before computing the Gramian. 
This procedure scales the notion of bounded-energy of the reachable set to the control limits rather than inputs that potentially differ by orders of magnitude.
An alternate preconditioning matrix $S$ could be selected here to create a different trade-off in control energy.
In addition to informing the input rescaling, the elliptical reachable sets property reveals a simple exploration strategy for linear systems: 
the vertices of the ellipse, which are computed via the spectrum of the Gramian, form a bounded covering of the reachable set. 


In Lines 22-24, we compute the linearized reference trajectory. 
First, we select the desired state for this branch using the spectrum of the controllability Gramian. 
We iterate through the modes using the integer floor division and modulus operators, $//$ and $\%$, respectively.
Visiting each mode in the plus and minus direction imply a total branching factor of $2 n$, where $n$ is the state dimension. 
The pseudoinverse of the controllability matrix mapped onto the desired state yields a trajectory for the linearized system that, by the psuedoinverse mapping property, is minimum energy and reaches the target. 
However, the notion of minimum and bounded energy is over the entire trajectory and, to verify each individual control actions are valid, 
we impose the bounded input constraint with a clip operation: given a vector and an interval, the values outside the interval are clipped to the interval edges.


Whereas the reference trajectory is dynamically feasible for the locally linearized system, the actual branch trajectory must satisfy nonlinear dynamics. 
In Lines 25-28, we compute a feedback controller from the Discrete Algebraic Riccati Equation (DARE)~\cite{zhou1998essentials} and rollout a trajectory of the nonlinear system that tracks the linear reference trajectory to the desired mode. 
In Line 28-29, we use the multi-step notation for the dynamics and reward, e.g.: $F^H(\state_0, \action_{[H]})=F(\dots F(F(\state_0, \action_{1}), \action_{2}), \dots, \action_{H})$. 
We include a proof of the controller stability in Lemma 14 in the Supplementary Materials.


\subsection*{Heuristics}

At the cost of an increased branching factor, the user can add manually designed nodal expansion heuristics to guide the search.
In the quadrotor experiments, we use a heuristic to guide the system to the nearest target by including the projection of the nearest target onto the linear set as a branching option.
This is similar to goal biased sampling used in the sampling-based motion planning community~\cite{sucan2012}. 
For the remaining experiments, we did not use heuristics and relied only on the natural exploration of \acronym{}. 
In a manner similar to AlphaZero~\cite{Silver_2017} and our own work of Neural Tree Expansion (NTE)~\cite{riviere2021neural}, it is also possible to incorporate \emph{DNN-based heuristics} to predict the value of the next child state. 

The user can also manually decrease the branching factor by prioritizing certain degrees of freedom.
In the quadrotor experiment, we found that only searching among the velocity and angular velocity modes led to higher performance.
Physically, the system maintains its ability to translate and make attitude adjustments, as the eigenvectors that maximally excite the velocity and angular velocity coordinates also create changes in the position and attitude, respectively. 
We expect this reasoning to be applicable in many second-order systems, and we apply it in all of our experiments. 
The final implementation detail is that we return the maximum valued trajectory (rather than the child of highest average value). 
This practice maintains theoretical guarantees because the maximum valued trajectory always has a higher value than the average value, due to our deterministic setting.

\subsection*{Theoretical Results}
\label{sec:theory_results}

Now we present our two main theoretical results which are combined to show that SETS quickly converges to a bounded-error solution of the planning problem~\eqref{eq:value_function}. 
The proofs and supporting lemmas are in the Supplemental Material.

We make the following assumptions: 
\begin{assumption}
    \label{assumption:mdp_se}
    The dynamics $F$ are twice differentiable, the reward $R$ is Lipschitz and depends only on the state, and the input set $U$ is bounded in a product of intervals.
    \begin{gather*}
        F \in C^2(\statespace \times \actionspace; \statespace) 
        \qquad
        R \in \textup{Lip}_1(\statespace; \mathbb{R})
        \qquad 
        \actionspace \subseteq \{ \action \in \mathbbm{R}^{m} \ | \ \underline{\action}_j \leq \action_j \leq \overline{\action}_j \}
    \end{gather*}
where $\textup{Lip}_1(\statespace; \mathbb{R})$ is the space of Lipschitz functions from $X$ to $\mathbb{R}$.
As $\statespace$ is compact and $R$ is Lipschitz, $R$ is therefore bounded. Without loss of generality, we suppose $R(\state) \in [0,1]$.
\end{assumption}

Our first result, Theorem~\ref{thm:mdp_spectral_search_value_estimate}, states that spectral expansion creates a bounded equivalent discrete representation of continuous MDPs.

\begin{theorem}
\label{thm:mdp_spectral_search_value_estimate}
Consider an MDP $\left<\statespace, \actionspace, F, R, D, K, \gamma \right>$.
For initial state $\state_0$, Spectral Expansion with horizon $H$ creates a discrete representation with a bounded equivalent optimal value function:
\begin{align}
    | V^*(\state_0) - V^*_\textup{SETS}(\state_0) | 
    &\leq
    \frac{\kappa_0 + \gamma^H }{1-\gamma^H}\left(\kappa_1 (1+\kappa_2 \Delta t)^H + \kappa_3\right),
\end{align}
\noindent where $\kappa_{0,1,2,3}$ are problem-specific constants, and $\Delta t$ is the discretization of continuous-time dynamics.
\end{theorem}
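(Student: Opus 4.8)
The plan is to cast both the continuous problem and its spectral discretization as $H$-step Bellman fixed-point problems, and then to bound the distance between their fixed points by a single per-epoch Bellman error amplified by the contraction factor $\tfrac{1}{1-\gamma^H}$. Define the $H$-step Bellman operator of the true MDP,
\[
(\mathcal{T}V)(\state_0) = \max_{\action_{[H]}} \Big[ \sum_{k=1}^{H} \gamma^k R(\state_k) + \gamma^H V(\state_H) \Big],
\]
and the analogous operator $\mathcal{T}_{\textup{SETS}}$ in which the maximization ranges only over the $2n$ spectral modes produced by the Spectral Expansion operator. Both are $\gamma^H$-contractions in $\|\cdot\|_\infty$ with fixed points $V^*$ and $V^*_{\textup{SETS}}$, so the standard perturbed-fixed-point estimate gives
\[
\|V^* - V^*_{\textup{SETS}}\|_\infty \le \frac{1}{1-\gamma^H}\, \sup_{\state_0}\big|(\mathcal{T}V^*)(\state_0) - (\mathcal{T}_{\textup{SETS}}V^*)(\state_0)\big|,
\]
which already supplies the $\tfrac{1}{1-\gamma^H}$ prefactor; everything else reduces to bounding the per-epoch error $\epsilon := \sup_{\state_0}|\mathcal{T}V^* - \mathcal{T}_{\textup{SETS}}V^*|$.

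To bound $\epsilon$, I would fix a state $\state_0$, take the optimal $H$-step action sequence for $\mathcal{T}V^*$ driving the nonlinear system to a terminal state $\state_H^\star$, and exhibit a single spectral mode whose tracked trajectory $\hat{\state}_{[H]}$ stays uniformly close to this optimal trajectory. The comparison is assembled from three deviations combined by the triangle inequality: (i) a covering estimate, showing that the $2n$ vertices $\pm\sqrt{\lambda_i}\,\mathbf{v}_i$ of the controllability ellipse form a bounded covering of the linearized reachable set, so the linearized target of some mode lies within a fixed radius of the linearization of $\state_H^\star$ --- this residual contributes the $\kappa_3$ term; (ii) a linearization-error estimate, comparing the linearized reference $\mathbf{z}^{\mathrm{ref}}_{[H]}$ to the true nonlinear trajectory, where the $C^2$ regularity of $F$ (Assumption~\ref{assumption:mdp_se}) makes the one-step model mismatch $O(\Delta t^2)$ and a discrete Gr\"onwall argument with $\|A_k\|\le 1+\kappa_2\Delta t$ compounds it into the $\kappa_1(1+\kappa_2\Delta t)^H$ term; and (iii) the closed-loop tracking error of the DARE feedback law, which keeps the realized rollout near its linear reference and is controlled by the controller-stability result of Lemma~14, also absorbing the input-clipping step.

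The final step converts these state-space deviations into a value error. Since $R\in\textup{Lip}_1$ and depends only on the state, the stage-reward mismatch over the epoch is bounded by $\sum_{k=1}^{H}\gamma^k$ times the per-step trajectory deviation, producing the $\kappa_0$ contribution in the numerator, while the continuation mismatch $\gamma^H|V^*(\state_H^\star)-V^*(\hat{\state}_H)|$ is bounded using boundedness and Lipschitzness of $V^*$ and contributes the $\gamma^H$ factor. Collecting (i)--(iii) weighted by $(\kappa_0+\gamma^H)$ and dividing by $1-\gamma^H$ yields the stated inequality.

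I expect the main obstacle to be step (ii) together with the clipping in step (iii): carefully controlling how the second-order Taylor remainder of $F$ propagates along the nonlinear rollout relative to the linear reference, and ensuring that saturating the minimum-energy control $\mathcal{C}^{\dagger}\mathbf{z}_H$ to $\actionspace$ does not destroy the reachability and tracking guarantees, are the delicate quantitative pieces. By contrast, the covering estimate (i) and the contraction bookkeeping are comparatively routine.
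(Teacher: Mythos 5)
Your proposal follows essentially the same route as the paper: the $\frac{1}{1-\gamma^H}$ prefactor comes from the same perturbed-fixed-point/contraction argument (the paper's Lemma~\ref{lemma:mdp_equivalence}, which reduces the value gap to a Lipschitz constant times the Hausdorff distance between reachable sets), and your three deviations (i)--(iii) correspond exactly to the paper's Lemmas~\ref{lemma:continuous_linear_finite_linear_set_distance}, \ref{lemma:continuous_nonlinear_linear_set_distance}, and~\ref{lemma:set_distance_linear_modes_sets_trajs} combined by the same triangle inequality. The decomposition, the Gr\"onwall compounding that yields $\kappa_1(1+\kappa_2\Delta t)^H$, the ellipse-vertex covering that yields $\kappa_3$, and the DARE/contraction tracking step all match the paper's argument.
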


To derive this theorem, we first show that the distance between optimal value functions of two discounted MDPs is upper bounded by a constant times the distance between their reachable sets. 
Then, we show that the finite number of trajectories generated by Spectral Expansion efficiently cover the reachable set of the continuous nonlinear system. 
We achieve this by introducing two intermediate sets: the reachable set of the linear system subject to energy constraints and the collection of states constructed from the spectrum of the controllability Gramian. 
This procedure is visualized in Fig.~\ref{fig:method_se}.

\begin{figure}
    \centering
    \includegraphics[width=0.9\linewidth]{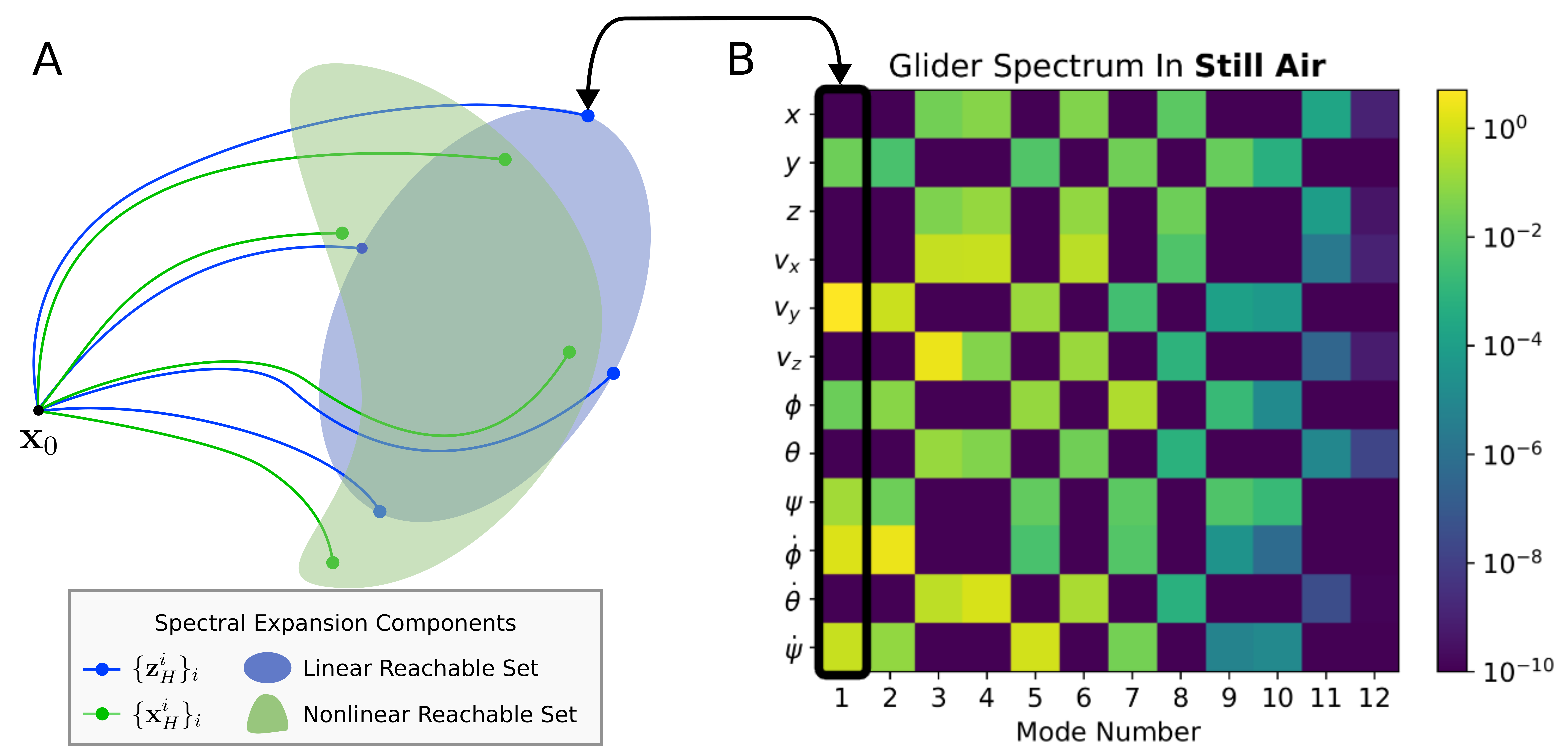}
    \caption{Spectral Expansion: 
    (\textbf{A}) The Spectral Expansion Operator: 
    The nonlinear reachable set is shaded in green and the locally linearized reachable set is shaded in blue. 
    The modes of the controllability spectrum are the axes of the ellipse $\{\mathbf{z}_H^i\}_{i=1}^{2n}$, and these are used to compute SETS trajectories $\{\state_H^i\}_{i=1}^{2n}$ with contraction-based feedback control. 
    (\textbf{B}) We show the glider's spectrum, where each column corresponds to a natural motion, each row corresponds to a state dimension, and each element is colored according to its controllability. We include this figure here to connect the theory to the data of the three experiments: each column of the image in (\textbf{B}) corresponds to a controllable mode $\mathbf{z}^i_H$ in (\textbf{A}). 
    }
    \label{fig:method_se}
\end{figure}

Our second result, Theorem~\ref{thm:mcts_convergence}, states that the value estimate of MCTS converges to the optimal value function for discrete MDPs. 

\begin{theorem}
\label{thm:mcts_convergence}
Consider an MDP $\left<\statespace, \actionspace, F, R, D, K, \gamma \right>$ with a finite action set. Consider a node $i$ at depth $d$ in the tree, where the tree is grown using Algorithm~\ref{algo:spectral_search}. 
Let $T(i,\ell)$ be the number of visits to node $i$ after $\ell$ iterations and let $V(i,\ell)$ be the average value after $\ell$ iterations. 
Then, there exist non-negative constants such that the expected value converges to the optimal value at node $i$: 
\begin{align}
    \abs{ V^*(i) - \cev{V(i,\ell)}{T(i,\ell) = \tau} } \leq \frac{\kappa_4}{\tau^{\kappa_5}}
\end{align}
and the distribution of rollout values sampled in the tree concentrates to the expected value. $\forall z > 0$, $\forall \tau \geq 1$:
\begin{align}
    \cprob{ \abs{ \ev{V(i,\ell)} - V(i,\ell) } \geq \frac{z}{\tau^{\kappa_6}} }{T(i,\ell)=\tau} \leq \frac{\kappa_7}{z^{\kappa_8}}
\end{align}
Our analysis shows $\kappa_5 = \kappa_6 = 0.5$ under our choice of exploration law.
\end{theorem}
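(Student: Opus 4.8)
The plan is to prove both claims jointly by backward induction on the tree depth $d$, from the leaves at depth $D = \ceil{K/H}$ up to the root, carrying two coupled invariants at every node $i$: a bias bound $\abs{V^*(i) - \cev{V(i,\ell)}{T(i,\ell)=\tau}} \leq \conei / \tau^{\ctwoi}$ and a polynomial concentration bound $\cprob{\abs{\ev{V(i,\ell)} - V(i,\ell)} \geq z/\tau^{\cthreei}}{T(i,\ell)=\tau} \leq \cfouri / z^{\cfivei}$, where the depth-indexed constants $\conei,\dots,\cfivei$ (together with the auxiliary book-keeping constants $\csixi,\cseveni,\ceighti$) are the quantities propagated through the recursion. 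This mirrors the non-asymptotic analysis of MCTS under non-stationary, heavy-tailed feedback~\cite{shah2020non,auger2013continuous}. The reason the concentration tail must be polynomial in $z$ rather than sub-Gaussian — and the reason the exploration term on Line~\ref{algo_line:ucb} is polynomial rather than logarithmic — is that the reward the bandit at an internal node receives from a given child is itself a random, time-varying estimate whose mean only drifts toward the child's optimal value as that child's subtree is explored, so the aggregated rollout values are genuinely heavy-tailed.

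For the base case at depth $D$, a leaf's backed-up value is the deterministic terminal reward evaluated along a deterministic spectral-expansion rollout, so the estimate is exact after a single visit; both invariants then hold trivially with zero bias ($\conei = 0$) and a degenerate point-mass concentration bound. For the inductive step I would fix a node $i$ at depth $d$ and analyze the non-stationary multi-armed bandit induced by the selection rule, where each of the finitely many ($2n$) children is an arm. The first sub-step is an exploration guarantee: with $c_2 = 0.5$, $c_3 = 1$, the bonus $c_1\,T(p[-1],\ell)^{c_3}/T(i,\ell)^{c_2}$ blows up for any under-visited child, forcing every child to accumulate at least polynomially many visits in the parent count $\tau$, so that the depth-$(d{+}1)$ invariants apply to each child. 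The second sub-step upper-bounds pulls of suboptimal children: combining the inductive concentration bound at depth $d+1$ with the monotone decay of the child biases shows that a child whose backed-up value lags the best child by a gap is selected only an $O(\tau^{-\beta})$ fraction of the time.

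The third sub-step assembles the node-level bias bound by writing $V(i,\ell)$ as the immediate reward plus $\gamma^H$ times the visit-weighted average of the children's estimates, then splitting the error into the best child's contribution (controlled by the depth-$(d{+}1)$ bias invariant) and the suboptimal children's contribution (controlled by the pull-count bound); both decay like $\tau^{-1/2}$, which yields an updated constant $\conei$ while preserving the exponent $\ctwoi = 0.5$. The concentration invariant propagates analogously: the concentration of each child's estimate is combined with the concentration of the random selection counts through a union bound and a polynomial tail inequality, fixing $\cfouri,\cfivei$ and keeping $\cthreei = 0.5$. The main obstacle, and the step I expect to demand the most care, is verifying that the two exponents form a self-reproducing fixed point across the recursion — that the choice $c_2 = 0.5$ makes $\ctwoi = \cthreei = 0.5$ regenerate at each depth so the rates never degrade — while showing the constants $\conei,\dots,\cfivei$, although growing with $d$, stay finite because the depth is capped at $D = \ceil{K/H}$. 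Unwinding the recursion to the root then delivers the stated bound with $\kappa_5 = \kappa_6 = 0.5$ and $\kappa_4,\kappa_7,\kappa_8$ the corresponding root-level constants.
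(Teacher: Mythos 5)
Your overall skeleton is the paper's: a backward induction from the leaves to the root carrying two coupled invariants (a bias bound decaying polynomially in the visit count and a polynomial tail bound on the deviation from the mean), with each internal node analyzed as a non-stationary bandit over its finitely many children, and with the heavy-tailed/polynomial-exploration rationale you give matching the paper's motivation from~\cite{shah2020non}. However, two of your concrete steps do not survive contact with the details. First, the exploration constants you quote, $c_2 = 0.5$ and $c_3 = 1$, are the ones the paper uses \emph{in practice}, not the ones its analysis supports: the recursion forces the bias exponent to be $1 - c_3/c_2$, and the tail-exponent bookkeeping requires $c_3 + \csixj - c_2 > 0$ with $\csixj = 1/2$, so the proof needs $c_2 = 2c_3$ with $c_3 \in [\tfrac14, \tfrac12)$. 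With your values, $1 - c_3/c_2 = -1$ and the suboptimal-arm pull count is bounded only by $O(\tau^{c_3/c_2}) = O(\tau^2)$, which is vacuous; the claimed $\kappa_5 = \kappa_6 = 0.5$ does not come out of those constants.

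Second, and more structurally, your plan handles the optimal child by first proving a polynomial \emph{lower} bound on every child's visit count and then invoking the depth-$(d{+}1)$ invariant at that lower bound. If that lower bound is $\tau^{\beta}$ with $\beta < 1$, the child's bias bound only gives $O(\tau^{-\beta/2})$, and composing this over $D = \ceil{K/H}$ levels degrades the exponent geometrically with depth --- exactly the degradation you say must be avoided, but for which you supply no mechanism. The paper avoids it without any visit lower bound: because rewards are deterministic and $Q^*(j^*) - q(j^*,t) \geq 0$ pointwise, the optimal child's total regret contribution $\sum_{t \in L(j^*,\ell)} \left( Q^*(j^*) - q(j^*,t) \right)$ can be padded up to a sum over all $\tau$ visits of the parent, yielding
\begin{align}
    \frac{1}{\tau} \cev{ \textstyle\sum_{t\in L(j^*,\ell)} Q^*(j^*) - q(j^*,t) }{T(i,\ell) = \tau} \leq Q^*(j^*) - \cev{ Q(j^*,\ell) }{T(j^*,\ell) = \tau},
\end{align}
so the inductive hypothesis is applied at the \emph{full} count $\tau$ and the exponent $1/2$ reproduces exactly at every depth. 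This marginalization step (and the analogous padding in the concentration propagation) is the crux of the depth-independent rate, and it is the piece missing from your argument. Your base case at the leaves themselves (zero bias, point-mass concentration) rather than at depth $D-1$ is fine and folds into the first application of the inductive step.
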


Our analysis draws from the recent convergence analysis in the literature~\cite{shah2020non}. 
We are particularly inspired by their proof strategy of backwards induction and non-stationary bandits. 
However, we study a fundamentally different problem setting with deterministic rewards. 
Deterministic rewards, the standard setting for robotics, isolates the source of uncertainty in the tree search to the uncertainty in the future policy, rather than combined uncertainty from both stage rewards and future policy. 
This difference leads to a departure in the low-level analysis, as well as some tangible improvements: 
First, our exploration law requires less information about the problem: whereas the existing work's exploration law~\cite{shah2020non} requires knowledge of the gap (which is typically unknown), we do not require this knowledge.  
Second, our exploration law is simpler: whereas the existing work's exploration law~\cite{shah2020non} is a complex function of depth of the tree, our constants are depth-independent. 
Both our analysis and that of~\cite{shah2020non} share the same final convergence rate of order $\ell^{-1/2}$.

Application of these two results with the triangle inequality (using $V^*_\text{SETS}$ as a cross term) and evaluating Theorem~\ref{thm:mcts_convergence} at the root shows that SETS quickly converges to the optimal solution of the continuous-space planning problem:

\begin{theorem}
\label{cor:converge_ss_error_tradeoff}
Consider an MDP $\left<\statespace, \actionspace, F, R, D, K, \gamma \right>$.
For initial state $\state_0$, \acronym{} (Algorithm~\ref{algo:spectral_search}) with horizon $H$ yields a value estimate as a function of number of iterations $\ell$ satisfying:
\begin{align}
    | V^*(\state_0) - \mathbb{E}[V(\state_0, \ell)] | 
    &\leq
    \underbrace{
        \frac{\kappa_4}{\sqrt{\ell}}}_{\textup{convergence}} + 
    \underbrace{ \frac{\kappa_0 + \gamma^H }{1-\gamma^H}\left(\kappa_1 (1+\kappa_2 \Delta t)^H + \kappa_3\right)}_{\textup{steady-state error}},
\end{align}
\noindent where $\kappa_{0,1,2,3}>0$ are as in Theorem~\ref{thm:mdp_spectral_search_value_estimate} and $\kappa_4$ is as in Theorem~\ref{thm:mcts_convergence}.
\end{theorem}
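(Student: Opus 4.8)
The plan is to treat this statement as a direct consequence of the two preceding theorems, glued together by the triangle inequality, introducing the optimal value of the discrete SETS representation $V^*_\textup{SETS}(\state_0)$ as the cross term:
\begin{align}
    | V^*(\state_0) - \mathbb{E}[V(\state_0, \ell)] |
    \leq
    | V^*(\state_0) - V^*_\textup{SETS}(\state_0) |
    + | V^*_\textup{SETS}(\state_0) - \mathbb{E}[V(\state_0, \ell)] |. \nonumber
\end{align}
The first term is exactly the quantity bounded in Theorem~\ref{thm:mdp_spectral_search_value_estimate}, so it is at most $\frac{\kappa_0 + \gamma^H}{1-\gamma^H}\left(\kappa_1(1+\kappa_2\Delta t)^H + \kappa_3\right)$; this is the steady-state error contribution, and it is independent of $\ell$.

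For the second term I would apply Theorem~\ref{thm:mcts_convergence} at the root node $i_0$. The key observation that makes the argument clean is that the root is visited on every iteration of Algorithm~\ref{algo:spectral_search}: each rollout begins by setting $p = [i_0]$, so $T(i_0, \ell) = \ell$ holds deterministically. Consequently the conditioning event $\{T(i_0,\ell) = \tau\}$ in Theorem~\ref{thm:mcts_convergence} is satisfied with $\tau = \ell$ almost surely, the conditional expectation coincides with the unconditional $\mathbb{E}[V(\state_0,\ell)]$, and $V^*(i_0)$ is precisely $V^*_\textup{SETS}(\state_0)$ because Theorem~\ref{thm:mcts_convergence} is stated for the discrete MDP produced by Spectral Expansion. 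Substituting $\tau = \ell$ and $\kappa_5 = \tfrac12$ into the first inequality of Theorem~\ref{thm:mcts_convergence} bounds the second term by $\kappa_4/\sqrt{\ell}$, the convergence contribution. Adding the two bounds yields the claimed inequality.

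The main obstacle is not the triangle inequality itself but justifying the two identifications used when invoking Theorem~\ref{thm:mcts_convergence} at the root: that its discrete-MDP optimal value $V^*(i_0)$ equals the continuous-problem quantity $V^*_\textup{SETS}(\state_0)$, and that the deterministic root-visitation count lets us discharge the conditioning and set $\tau = \ell$. Both follow from the construction in Algorithm~\ref{algo:spectral_search}, but they must be stated explicitly so that the $\tau^{-\kappa_5}$ rate of Theorem~\ref{thm:mcts_convergence} becomes the $\ell^{-1/2}$ rate advertised here. I would also emphasize that the two error sources are structurally decoupled — one vanishes as $\ell \to \infty$ while the other is controlled only by the horizon $H$ and step size $\Delta t$ — which is exactly what exposes the convergence-versus-steady-state-error trade-off observed empirically in the glider experiment (Fig.~\ref{fig:glider}C).
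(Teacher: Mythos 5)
Your proposal is correct and follows the same route as the paper: a triangle inequality with $V^*_\textup{SETS}(\state_0)$ as the cross term, Theorem~\ref{thm:mdp_spectral_search_value_estimate} for the steady-state term, and Theorem~\ref{thm:mcts_convergence} evaluated at the root (where $T(i_0,\ell)=\ell$ deterministically and $\kappa_5 = \tfrac12$) for the convergence term. The extra care you take in discharging the conditioning at the root and identifying the discrete-MDP optimum with $V^*_\textup{SETS}$ is a welcome explicit statement of steps the paper leaves implicit.
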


The first term in Theorem~\ref{cor:converge_ss_error_tradeoff}, labeled convergence, goes to zero as the number of iterations $\ell$ increases. 
In addition, its leading constant $\kappa_4$ scales with a polynomial power of the total number of action sequences.  
The tree formed by SETS has a branching factor upper bounded by $2 n$ (the number of controllable modes) and number of decisions equal to $\ceil{K/H}$. 
Therefore, an increase of the hyperparameter $H$ improves the convergence speed of the value estimate: a longer branch length results in fewer overall decisions, and therefore a faster convergence.

While providing faster convergence, the trade-off of a longer branch length is a larger asymptotic error, shown in the second term, labeled steady-state error. 
We empirically validate this trade-off with an illustrative example in Fig.~\ref{fig:theory}. 
This error term can also be controlled with other parameters. 
For example, the term with the worst growth behavior, $(1+\kappa_2 \Delta t)$, can be made arbitrarily small by decreasing the integration time of the simulator. However, making this change incurs a higher computational cost per trajectory, limiting the number of trajectories $\ell$ finished by the end of the planning budget. 
Similarly, the entire error term can also be controlled by decreasing the discount factor $\gamma$, at the cost of using less information about the future.

\begin{figure}
    \centering 
    \includegraphics[width=0.99\linewidth]{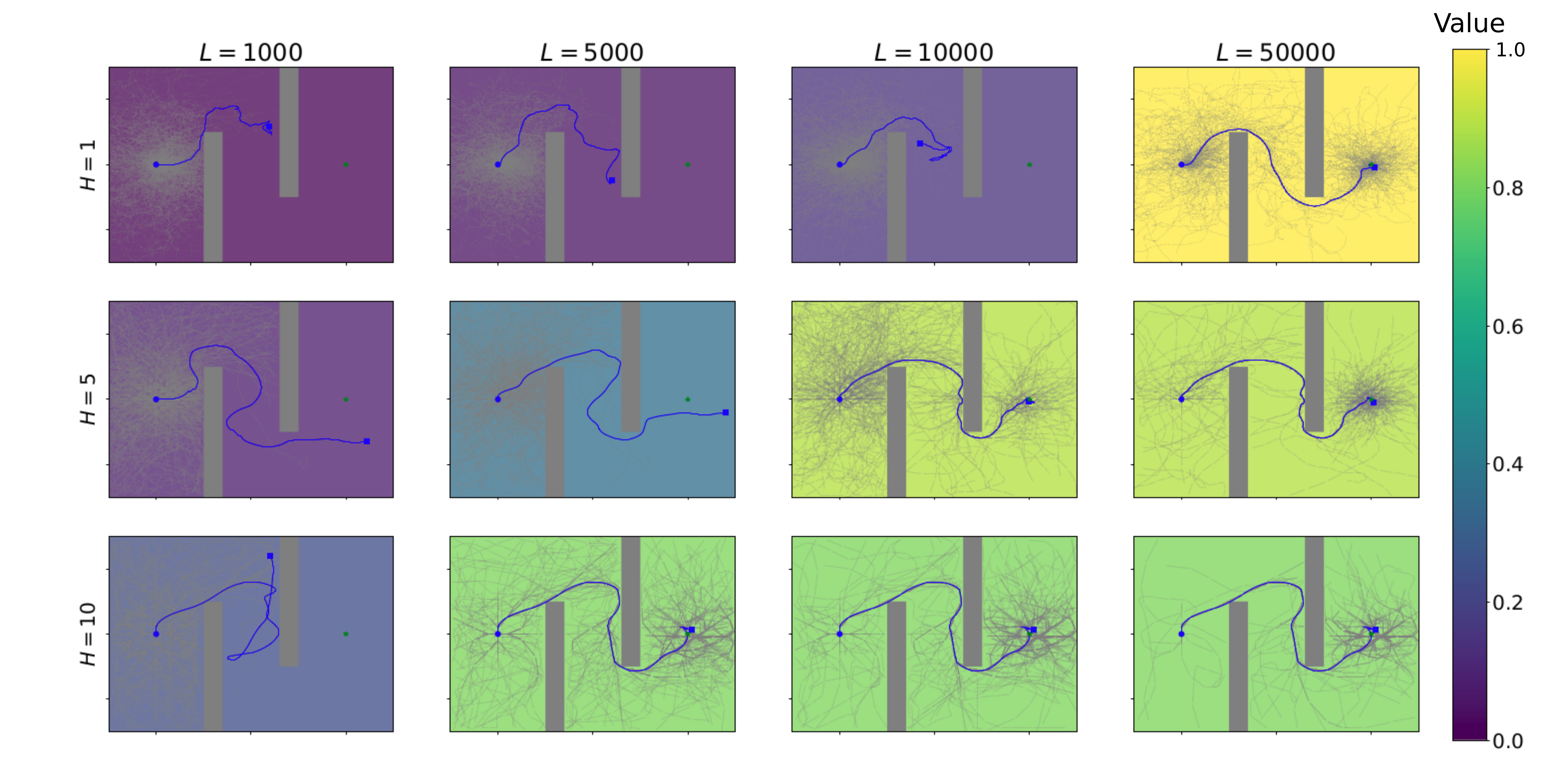}
    \caption{
    We evaluate \acronym{} on a motion planning problem, where a 2D double integrator starts at the blue dot on the left and is tasked to reach the green dot on the right. 
    We vary the branch length parameter $H$ and the number of simulations $L$.
    Each plot is shaded by the value of its trajectory. 
    The empirical trend of the value is predicted by theory: trees with large $H$ converge quickly to sub-optimal solutions, and trees with small $H$ converge slowly to highly optimal solutions. 
    }
    \label{fig:theory}
\end{figure}

The fast optimality convergence of \acronym{} also assists the robot to find a feasible solution.
Under a well-selected terminal reward $D$, the trajectory corresponding to the optimal value function is guaranteed to be feasible over the full horizon. 
One possible choice is $D(x) = H \max_{\mathbf{x} \in X} R(\mathbf{x})$ where the maximum reward is known from Assumption~\ref{assumption:mdp_se}. 
From this terminal reward, the highest-valued infeasible trajectory incurs less reward than the least-valued feasible trajectory, and therefore the optimal trajectory must be feasible if a feasible solution exists. 
A proof of this reasoning is made in recent work~\cite{sfeast}.
Although the optimal trajectory of the transformed problem is guaranteed to be feasible, the trajectory returned by SETS may not be if the number of iterations $\ell$ is insufficient to converge to a feasible trajectory. 
In practice, the algorithm converged to a sufficiently optimal value that the solution was also feasible.

Our theoretical analysis validates the algorithm and provides an explainable interpretation of the planning process. 
In addition, the discussion of the effect of branch length $H$ and, to a lesser extent $\Delta t$ and $\gamma$, makes it clear that our analysis also enables systematic parameter design of various decision-making agents. 
For example, if a robot operates in a dynamic environment and has to re-plan frequently to react to new information, the designer can tune parameters to sacrifice some combination of asymptotic error (larger $H$), dynamics fidelity (larger $\Delta t$), or long-term planning (smaller $K$). 
Alternatively, when a robot operates in a relatively static environment, but has complex long-term behavior to discover, the designer should allocate more budget to each plan. 
The former example corresponds to our quadrotor experiment in Sec.~Quadrotor navigates a dangerous wind field, and the latter example corresponds to the glider experiment in Sec.~Aerodynamic glider performs persistent observation.

Finally, although we combined our main theoretical results into the SETS algorithm, they might be of independent interest as individual components. 
For example, a predictive sampling or model-predictive path integral approach might benefit from a spectral expansion representation, and a traditional game-playing artificial intelligence agent might benefit from our improved MCTS analysis.

\bibliographystyle{ieeetr}
\bibliography{papers}

\begin{thebibliography}{100}

\bibitem{Bellman1957}
R.~Bellman, {\em Dynamic programming}.
\newblock Princeton University Press, 1957.

\bibitem{lavalle1998rapidly}
S.~LaValle, ``Rapidly-exploring random trees: A new tool for path planning,'' {\em Research Report 9811}, 1998.

\bibitem{kavraki1996probabilistic}
L.~E. Kavraki, P.~Svestka, J.-C. Latombe, and M.~H. Overmars, ``Probabilistic roadmaps for path planning in high-dimensional configuration spaces,'' {\em IEEE Transactions on Robotics and Automation}, vol.~12, no.~4, pp.~566--580, 1996.

\bibitem{orthey2023sampling}
A.~Orthey, C.~Chamzas, and L.~E. Kavraki, ``Sampling-based motion planning: A comparative review,'' {\em Annual Review of Control, Robotics, and Autonomous Systems}, vol.~7, 2023.

\bibitem{morgan2014model}
D.~Morgan, S.-J. Chung, and F.~Y. Hadaegh, ``Model predictive control of swarms of spacecraft using sequential convex programming,'' {\em Journal of Guidance, Control, and Dynamics}, vol.~37, no.~6, pp.~1725--1740, 2014.

\bibitem{malyuta2021convex}
D.~Malyuta, T.~P. Reynolds, M.~Szmuk, T.~Lew, R.~Bonalli, M.~Pavone, and B.~Acikmese, ``Convex optimization for trajectory generation,'' {\em arXiv preprint arXiv:2106.09125}, 2021.

\bibitem{sutton2018reinforcement}
R.~S. Sutton and A.~G. Barto, {\em Reinforcement learning: An introduction}.
\newblock MIT press, 2018.

\bibitem{Nakka_2022b}
Y.~K. Nakka, W.~H\"{o}nig, C.~Choi, A.~Harvard, A.~Rahmani, and S.-J. Chung, ``Information-based guidance and control architecture for multi-spacecraft on-orbit inspection,'' {\em Journal of Guidance, Control, and Dynamics}, vol.~45, no.~7, pp.~1184--1201, 2022.

\bibitem{kaelbling2011hierarchical}
L.~P. Kaelbling and T.~Lozano-P{\'e}rez, ``Hierarchical task and motion planning in the now,'' in {\em IEEE International Conference on Robotics and Automation}, pp.~1470--1477, 2011.

\bibitem{garrett2021integrated}
C.~R. Garrett, R.~Chitnis, R.~Holladay, B.~Kim, T.~Silver, L.~P. Kaelbling, and T.~Lozano-P{\'e}rez, ``Integrated task and motion planning,'' {\em Annual Review of Control, Robotics, and Autonomous Systems}, vol.~4, pp.~265--293, 2021.

\bibitem{paden2016survey}
B.~Paden, M.~{\v{C}}{\'a}p, S.~Z. Yong, D.~Yershov, and E.~Frazzoli, ``A survey of motion planning and control techniques for self-driving urban vehicles,'' {\em IEEE Transactions on Intelligent Vehicles}, vol.~1, no.~1, pp.~33--55, 2016.

\bibitem{schwarting2018planning}
W.~Schwarting, J.~Alonso-Mora, and D.~Rus, ``Planning and decision-making for autonomous vehicles,'' {\em Annual Review of Control, Robotics, and Autonomous Systems}, vol.~1, pp.~187--210, 2018.

\bibitem{Song_2023}
Y.~Song, A.~Romero, M.~Müller, V.~Koltun, and D.~Scaramuzza, ``Reaching the limit in autonomous racing: Optimal control versus reinforcement learning,'' {\em Science Robotics}, vol.~8, no.~82, p.~eadg1462, 2023.

\bibitem{Abbeel_2006}
P.~Abbeel, A.~Coates, M.~Quigley, and A.~Ng, ``An application of reinforcement learning to aerobatic helicopter flight,'' in {\em Advances in Neural Information Processing Systems} (B.~Sch\"{o}lkopf, J.~Platt, and T.~Hoffman, eds.), vol.~19, MIT Press, 2006.

\bibitem{Lenz_2013}
I.~Lenz, H.~Lee, and A.~Saxena, ``Deep learning for detecting robotic grasps,'' in {\em Proceedings of Robotics: Science and Systems}, (Berlin, Germany), June 2013.

\bibitem{castillo2021robust}
G.~A. Castillo, B.~Weng, W.~Zhang, and A.~Hereid, ``Robust feedback motion policy design using reinforcement learning on a 3d digit bipedal robot,'' in {\em IEEE/RSJ International Conference on Intelligent Robots and Systems}, pp.~5136--5143, 2021.

\bibitem{kearns2002sparse}
M.~Kearns, Y.~Mansour, and A.~Y. Ng, ``A sparse sampling algorithm for near-optimal planning in large markov decision processes,'' {\em Machine Learning}, vol.~49, pp.~193--208, 2002.

\bibitem{kocsis2006bandit}
L.~Kocsis and C.~Szepesv{\'a}ri, ``Bandit based monte-carlo planning,'' in {\em European Conference on Machine Learning}, pp.~282--293, Springer, 2006.

\bibitem{munos2014bandits}
R.~Munos {\em et~al.}, ``From bandits to monte-carlo tree search: The optimistic principle applied to optimization and planning,'' {\em Foundations and Trends{\textregistered} in Machine Learning}, vol.~7, no.~1, pp.~1--129, 2014.

\bibitem{browne2012survey}
C.~B. Browne, E.~Powley, D.~Whitehouse, S.~M. Lucas, P.~I. Cowling, P.~Rohlfshagen, S.~Tavener, D.~Perez, S.~Samothrakis, and S.~Colton, ``A survey of monte carlo tree search methods,'' {\em IEEE Transactions on Computational Intelligence and AI in Games}, vol.~4, no.~1, pp.~1--43, 2012.

\bibitem{flood1956traveling}
M.~M. Flood, ``The traveling-salesman problem,'' {\em Operations research}, vol.~4, no.~1, pp.~61--75, 1956.

\bibitem{Astrom_2008}
K.~J. Astrom and R.~M. Murray, {\em Feedback systems: An introduction for scientists and engineers}.
\newblock USA: Princeton University Press, 2008.

\bibitem{murray1991robotic}
R.~M. Murray, {\em Robotic Control and Nonholonomic Motion Planning}.
\newblock University of California, Berkeley, 1991.

\bibitem{nasaredirect}
``Asteroid redirect mission reference concept.'' \url{https://www.nasa.gov/wp-content/uploads/2015/04/asteroid_redirect_mission_reference_concept_description_tagged.pdf}.
\newblock Accessed: 2024-05-16.

\bibitem{beard2012small}
R.~W. Beard and T.~W. McLain, {\em Small unmanned aircraft: Theory and practice}.
\newblock Princeton University Press, 2012.

\bibitem{couetoux2011continuous}
A.~Cou{\"e}toux, J.-B. Hoock, N.~Sokolovska, O.~Teytaud, and N.~Bonnard, ``Continuous upper confidence trees,'' in {\em Learning and Intelligent Optimization: 5th International Conference, LION 5, Rome, Italy, January 17-21, 2011. Selected Papers 5}, pp.~433--445, Springer, 2011.

\bibitem{howell2022predictive}
T.~Howell, N.~Gileadi, S.~Tunyasuvunakool, K.~Zakka, T.~Erez, and Y.~Tassa, ``Predictive sampling: Real-time behaviour synthesis with mujoco,'' {\em arXiv preprint arXiv:2212.00541}, 2022.

\bibitem{mosek}
M.~ApS, {\em MOSEK Fusion for C++ 10.1.21}, 2019.

\bibitem{foehn2021time}
P.~Foehn, A.~Romero, and D.~Scaramuzza, ``Time-optimal planning for quadrotor waypoint flight,'' {\em Science Robotics}, vol.~6, no.~56, p.~eabh1221, 2021.

\bibitem{foust2020autonomous}
R.~C. Foust, E.~S. Lupu, Y.~K. Nakka, S.-J. Chung, and F.~Y. Hadaegh, ``Autonomous in-orbit satellite assembly from a modular heterogeneous swarm,'' {\em Acta Astronautica}, vol.~169, pp.~191--205, 2020.

\bibitem{hereid20163d}
A.~Hereid, E.~A. Cousineau, C.~M. Hubicki, and A.~D. Ames, ``3d dynamic walking with underactuated humanoid robots: A direct collocation framework for optimizing hybrid zero dynamics,'' in {\em 2016 IEEE International Conference on Robotics and Automation (ICRA)}, pp.~1447--1454, IEEE, 2016.

\bibitem{morgan2016swarm}
D.~Morgan, G.~P. Subramanian, S.-J. Chung, and F.~Y. Hadaegh, ``Swarm assignment and trajectory optimization using variable-swarm, distributed auction assignment and sequential convex programming,'' {\em The International Journal of Robotics Research}, vol.~35, no.~10, pp.~1261--1285, 2016.

\bibitem{schwager2011unifying}
M.~Schwager, D.~Rus, and J.-J. Slotine, ``Unifying geometric, probabilistic, and potential field approaches to multi-robot deployment,'' {\em The International Journal of Robotics Research}, vol.~30, no.~3, pp.~371--383, 2011.

\bibitem{mayne1966second}
D.~Mayne, ``A second-order gradient method for determining optimal trajectories of non-linear discrete-time systems,'' {\em International Journal of Control}, vol.~3, no.~1, pp.~85--95, 1966.

\bibitem{bock2000direct}
H.~Bock, M.~Diehl, D.~Leineweber, and J.~Schl{\"o}der, ``A direct multiple shooting method for real-time optimization of nonlinear dae processes,'' in {\em Nonlinear Model Predictive Control}, pp.~245--267, Springer, 2000.

\bibitem{li2004iterative}
W.~Li and E.~Todorov, ``Iterative linear quadratic regulator design for nonlinear biological movement systems,'' in {\em First International Conference on Informatics in Control, Automation and Robotics}, vol.~2, pp.~222--229, SciTePress, 2004.

\bibitem{dinh2010local}
Q.~T. Dinh and M.~Diehl, ``Local convergence of sequential convex programming for nonconvex optimization,'' in {\em Recent Advances in Optimization and its Applications in Engineering: The 14th Belgian-French-German Conference on Optimization}, pp.~93--102, Springer, 2010.

\bibitem{bonalli2019gusto}
R.~Bonalli, A.~Cauligi, A.~Bylard, and M.~Pavone, ``Gusto: Guaranteed sequential trajectory optimization via sequential convex programming,'' in {\em 2019 International Conference on Robotics and Automation (ICRA)}, pp.~6741--6747, IEEE, 2019.

\bibitem{sucan2012}
I.~A. {\c{S}}ucan, M.~Moll, and L.~E. Kavraki, ``The {O}pen {M}otion {P}lanning {L}ibrary,'' {\em {IEEE} Robotics \& Automation Magazine}, vol.~19, pp.~72--82, December 2012.
\newblock \url{https://ompl.kavrakilab.org}.

\bibitem{marcucci2023motion}
T.~Marcucci, M.~Petersen, D.~von Wrangel, and R.~Tedrake, ``Motion planning around obstacles with convex optimization,'' {\em Science Robotics}, vol.~8, no.~84, p.~eadf7843, 2023.

\bibitem{Karaman_2011_ijrr}
S.~Karaman and E.~Frazzoli, ``Sampling-based algorithms for optimal motion planning,'' {\em Int. J. Robotics Res.}, vol.~30, no.~7, pp.~846--894, 2011.

\bibitem{Li2016}
Y.~Li, Z.~Littlefield, and K.~E. Bekris, ``Asymptotically optimal sampling-based kinodynamic planning,'' {\em The International Journal of Robotics Research}, vol.~35, no.~5, pp.~528--564, 2016.

\bibitem{poccia2017deterministic}
E.~Poccia, {\em Deterministic sampling-based algorithms for motion planning under differential constraints}.
\newblock PhD thesis, Master’s thesis, Pisa Univ., Pisa, Italy, 2017.

\bibitem{hoenig_2022}
W.~H{\"{o}}nig, J.~O. de~Haro, and M.~Toussaint, ``db-a*: Discontinuity-bounded search for kinodynamic mobile robot motion planning,'' in {\em {IEEE/RSJ} International Conference on Intelligent Robots and Systems, {IROS} 2022, Kyoto, Japan, October 23-27, 2022}, pp.~13540--13547, {IEEE}, 2022.

\bibitem{frazzoli2002real}
E.~Frazzoli, M.~A. Dahleh, and E.~Feron, ``Real-time motion planning for agile autonomous vehicles,'' {\em Journal of Guidance, Control, and Dynamics}, vol.~25, no.~1, pp.~116--129, 2002.

\bibitem{frazzoli2005maneuver}
E.~Frazzoli, M.~A. Dahleh, and E.~Feron, ``Maneuver-based motion planning for nonlinear systems with symmetries,'' {\em IEEE Transactions on Robotics}, vol.~21, no.~6, pp.~1077--1091, 2005.

\bibitem{saveriano2021dynamic}
M.~Saveriano, F.~J. Abu-Dakka, A.~Kramberger, and L.~Peternel, ``Dynamic movement primitives in robotics: A tutorial survey,'' {\em The International Journal of Robotics Research}, 2021.

\bibitem{schmerling2015optimal}
E.~Schmerling, L.~Janson, and M.~Pavone, ``Optimal sampling-based motion planning under differential constraints: the drift case with linear affine dynamics,'' in {\em 2015 54th IEEE Conference on Decision and Control (CDC)}, pp.~2574--2581, IEEE, 2015.

\bibitem{tedrake2009lqr}
R.~Tedrake, ``Lqr-trees: Feedback motion planning on sparse randomized trees,'' {\em Robotics: Science and Systems}, 2009.

\bibitem{majumdar2017funnel}
A.~Majumdar and R.~Tedrake, ``Funnel libraries for real-time robust feedback motion planning,'' {\em The International Journal of Robotics Research}, vol.~36, no.~8, pp.~947--982, 2017.

\bibitem{karaman2010optimal}
S.~Karaman and E.~Frazzoli, ``Optimal kinodynamic motion planning using incremental sampling-based methods,'' in {\em 49th IEEE Conference on Decision and Control (CDC)}, pp.~7681--7687, IEEE, 2010.

\bibitem{moerland2023model}
T.~M. Moerland, J.~Broekens, A.~Plaat, and C.~M. Jonker, ``Model-based reinforcement learning: A survey,'' {\em Foundations and Trends in Machine Learning}, vol.~16, no.~1, pp.~1--118, 2023.

\bibitem{auger2013continuous}
D.~Auger, A.~Couetoux, and O.~Teytaud, ``Continuous upper confidence trees with polynomial exploration--consistency,'' in {\em Machine Learning and Knowledge Discovery in Databases: European Conference, ECML PKDD 2013, Prague, Czech Republic}, pp.~194--209, Springer, 2013.

\bibitem{Silver_2010}
D.~Silver and J.~Veness, ``Monte-carlo planning in large pomdps,'' in {\em Advances in Neural Information Processing Systems} (J.~Lafferty, C.~Williams, J.~Shawe-Taylor, R.~Zemel, and A.~Culotta, eds.), vol.~23, 2010.

\bibitem{sunberg2018online}
Z.~Sunberg and M.~Kochenderfer, ``Online algorithms for pomdps with continuous state, action, and observation spaces,'' in {\em Proceedings of the International Conference on Automated Planning and Scheduling}, vol.~28, pp.~259--263, 2018.

\bibitem{Ragan_2023}
J.~Ragan, B.~Riviere, and S.-J. Chung, ``Bayesian active sensing for fault estimation with belief space tree search,'' in {\em AIAA Scitech}, 01 2023.

\bibitem{Lisy_2013}
V.~Lisy, V.~Kovarik, M.~Lanctot, and B.~Bosansky, ``Convergence of monte carlo tree search in simultaneous move games,'' in {\em Advances in Neural Information Processing Systems}, vol.~26, 2013.

\bibitem{kobilarov2012cross}
M.~Kobilarov, ``Cross-entropy motion planning,'' {\em The International Journal of Robotics Research}, vol.~31, no.~7, pp.~855--871, 2012.

\bibitem{williams2016aggressive}
G.~Williams, P.~Drews, B.~Goldfain, J.~M. Rehg, and E.~A. Theodorou, ``Aggressive driving with model predictive path integral control,'' in {\em 2016 IEEE International Conference on Robotics and Automation (ICRA)}, pp.~1433--1440, IEEE, 2016.

\bibitem{bhardwaj2022storm}
M.~Bhardwaj, B.~Sundaralingam, A.~Mousavian, N.~D. Ratliff, D.~Fox, F.~Ramos, and B.~Boots, ``Storm: An integrated framework for fast joint-space model-predictive control for reactive manipulation,'' in {\em Conference on Robot Learning}, pp.~750--759, PMLR, 2022.

\bibitem{pezzato2023sampling}
C.~Pezzato, C.~Salmi, M.~Spahn, E.~Trevisan, J.~Alonso-Mora, and C.~H. Corbato, ``Sampling-based model predictive control leveraging parallelizable physics simulations,'' {\em arXiv preprint arXiv:2307.09105}, 2023.

\bibitem{garg2019despot}
N.~P. Garg, D.~Hsu, and W.~S. Lee, ``Despot-alpha: Online pomdp planning with large state and observation spaces.,'' in {\em Robotics: Science and Systems}, vol.~3, pp.~3--2, 2019.

\bibitem{nakka2022trajectory}
Y.~K. Nakka and S.-J. Chung, ``Trajectory optimization of chance-constrained nonlinear stochastic systems for motion planning under uncertainty,'' {\em IEEE Transactions on Robotics}, 2022.

\bibitem{SUTTON1991353}
R.~S. Sutton, ``Planning by incremental dynamic programming,'' in {\em Machine Learning Proceedings}, (San Francisco, CA), pp.~353--357, 1991.

\bibitem{chow1991optimal}
C.-S. Chow and J.~N. Tsitsiklis, ``An optimal one-way multigrid algorithm for discrete-time stochastic control,'' {\em IEEE Transactions on Automatic Control}, vol.~36, no.~8, pp.~898--914, 1991.

\bibitem{munos2002variable}
R.~Munos and A.~Moore, ``Variable resolution discretization in optimal control,'' {\em Machine Learning}, vol.~49, pp.~291--323, 2002.

\bibitem{gorodetsky2018high}
A.~Gorodetsky, S.~Karaman, and Y.~Marzouk, ``High-dimensional stochastic optimal control using continuous tensor decompositions,'' {\em The International Journal of Robotics Research}, vol.~37, no.~2-3, pp.~340--377, 2018.

\bibitem{polydoros2017survey}
A.~S. Polydoros and L.~Nalpantidis, ``Survey of model-based reinforcement learning: Applications on robotics,'' {\em Journal of Intelligent \& Robotic Systems}, vol.~86, no.~2, pp.~153--173, 2017.

\bibitem{arulkumaran2017deep}
K.~Arulkumaran, M.~P. Deisenroth, M.~Brundage, and A.~A. Bharath, ``Deep reinforcement learning: A brief survey,'' {\em IEEE Signal Processing Magazine}, vol.~34, no.~6, pp.~26--38, 2017.

\bibitem{schulman2017proximal}
J.~Schulman, F.~Wolski, P.~Dhariwal, A.~Radford, and O.~Klimov, ``Proximal policy optimization algorithms,'' {\em arXiv preprint arXiv:1707.06347}, 2017.

\bibitem{fazel2018global}
M.~Fazel, R.~Ge, S.~Kakade, and M.~Mesbahi, ``Global convergence of policy gradient methods for the linear quadratic regulator,'' in {\em International Conference on Machine Learning}, pp.~1467--1476, PMLR, 2018.

\bibitem{zhang2020global}
K.~Zhang, A.~Koppel, H.~Zhu, and T.~Basar, ``Global convergence of policy gradient methods to (almost) locally optimal policies,'' {\em SIAM Journal on Control and Optimization}, vol.~58, no.~6, pp.~3586--3612, 2020.

\bibitem{bhandari2024global}
J.~Bhandari and D.~Russo, ``Global optimality guarantees for policy gradient methods,'' {\em Operations Research}, 2024.

\bibitem{schmerling2018multimodal}
E.~Schmerling, K.~Leung, W.~Vollprecht, and M.~Pavone, ``Multimodal probabilistic model-based planning for human-robot interaction,'' in {\em 2018 IEEE International Conference on Robotics and Automation (ICRA)}, pp.~3399--3406, IEEE, 2018.

\bibitem{kim2020monte}
B.~Kim, K.~Lee, S.~Lim, L.~Kaelbling, and T.~Lozano-P{\'e}rez, ``Monte carlo tree search in continuous spaces using voronoi optimistic optimization with regret bounds,'' in {\em Proceedings of the AAAI Conference on Artificial Intelligence}, vol.~34, pp.~9916--9924, 2020.

\bibitem{williams2017model}
G.~Williams, A.~Aldrich, and E.~A. Theodorou, ``Model predictive path integral control: From theory to parallel computation,'' {\em Journal of Guidance, Control, and Dynamics}, vol.~40, no.~2, pp.~344--357, 2017.

\bibitem{SUTTON1999}
R.~S. Sutton, D.~Precup, and S.~Singh, ``Between mdps and semi-mdps: A framework for temporal abstraction in reinforcement learning,'' {\em Artificial Intelligence}, vol.~112, no.~1, pp.~181--211, 1999.

\bibitem{Lee-RSS-21}
Y.~Lee, P.~Cai, and D.~Hsu, ``{Magic: Learning macro-actions for online POMDP planning },'' in {\em Proceedings of Robotics: Science and Systems}, (Virtual), July 2021.

\bibitem{Bai_2016}
A.~Bai, S.~Srivastava, and S.~Russell, ``Markovian state and action abstractions for mdps via hierarchical {MCTS},'' in {\em Proceedings of the Twenty-Fifth International Joint Conference on Artificial Intelligence, New York, NY} (S.~Kambhampati, ed.), pp.~3029--3039, {IJCAI/AAAI} Press, 2016.

\bibitem{de2016monte}
M.~De~Waard, D.~M. Roijers, and S.~C. Bakkes, ``Monte carlo tree search with options for general video game playing,'' in {\em 2016 IEEE Conference on Computational Intelligence and Games (CIG)}, pp.~1--8, IEEE, 2016.

\bibitem{jamgochian2023constrained}
A.~Jamgochian, H.~Buurmeijer, K.~H. Wray, A.~Corso, and M.~J. Kochenderfer, ``Constrained hierarchical monte carlo belief-state planning,'' {\em arXiv preprint arXiv:2310.20054}, 2023.

\bibitem{sutton1999policy}
R.~S. Sutton, D.~McAllester, S.~Singh, and Y.~Mansour, ``Policy gradient methods for reinforcement learning with function approximation,'' {\em Advances in Neural Information Processing Systems}, vol.~12, 1999.

\bibitem{deisenroth2011pilco}
M.~Deisenroth and C.~E. Rasmussen, ``Pilco: A model-based and data-efficient approach to policy search,'' in {\em Proceedings of the 28th International Conference on Machine Learning (ICML-11)}, pp.~465--472, 2011.

\bibitem{levine2013guided}
S.~Levine and V.~Koltun, ``Guided policy search,'' in {\em International conference on machine learning}, pp.~1--9, PMLR, 2013.

\bibitem{riviere_2020}
B.~Rivière, W.~Hönig, Y.~Yue, and S.-J. Chung, ``Glas: Global-to-local safe autonomy synthesis for multi-robot motion planning with end-to-end learning,'' {\em IEEE Robotics and Automation Letters}, vol.~5, no.~3, pp.~4249--4256, 2020.

\bibitem{puterman2014markov}
M.~L. Puterman, {\em Markov Decision Processes: Discrete Stochastic Dynamic Programming}.
\newblock John Wiley \& Sons, 2014.

\bibitem{shah2020non}
D.~Shah, Q.~Xie, and Z.~Xu, ``Non-asymptotic analysis of monte carlo tree search,'' in {\em Abstracts of the 2020 SIGMETRICS/Performance Joint International Conference on Measurement and Modeling of Computer Systems}, pp.~31--32, 2020.

\bibitem{Silver_2017}
D.~Silver, J.~Schrittwieser, K.~Simonyan, I.~Antonoglou, A.~Huang, A.~Guez, T.~Hubert, L.~Baker, M.~Lai, A.~Bolton, Y.~Chen, T.~P. Lillicrap, F.~Hui, L.~Sifre, G.~van~den Driessche, T.~Graepel, and D.~Hassabis, ``Mastering the game of go without human knowledge,'' {\em Nature}, vol.~550, no.~7676, pp.~354--359, 2017.

\bibitem{boyd1994linear}
S.~Boyd, L.~El~Ghaoui, E.~Feron, and V.~Balakrishnan, {\em Linear Matrix Inequalities in System and Control Theory}.
\newblock SIAM, 1994.

\bibitem{zhou1998essentials}
K.~Zhou and J.~C. Doyle, {\em Essentials of Robust Control}, vol.~104.
\newblock Prentice Hall Upper Saddle River, NJ, 1998.

\bibitem{riviere2021neural}
B.~Riviere, W.~H{\"o}nig, M.~Anderson, and S.-J. Chung, ``Neural tree expansion for multi-robot planning in non-cooperative environments,'' {\em IEEE Robotics and Automation Letters}, vol.~6, no.~4, pp.~6868--6875, 2021.

\bibitem{sfeast}
J.~Ragan, B.~Riviere, F.~Hadaegh, and S.-J. Chung, ``Online tree-based planning for active spacecraft fault estimation and collision avoidance,'' {\em Science Robotics}, 2024.
\newblock To appear.

\bibitem{neuralfly}
M.~O’Connell, G.~Shi, X.~Shi, K.~Azizzadenesheli, A.~Anandkumar, Y.~Yue, and S.-J. Chung, ``Neural-fly enables rapid learning for agile flight in strong winds,'' {\em Science Robotics}, vol.~7, no.~66, 2022.

\bibitem{oquab2023dinov2}
M.~Oquab, T.~Darcet, T.~Moutakanni, H.~Vo, M.~Szafraniec, V.~Khalidov, P.~Fernandez, D.~Haziza, F.~Massa, A.~El-Nouby, M.~Assran, N.~Ballas, W.~Galuba, R.~Howes, P.-Y. Huang, S.-W. Li, I.~Misra, M.~Rabbat, V.~Sharma, G.~Synnaeve, H.~Xu, H.~Jegou, J.~Mairal, P.~Labatut, A.~Joulin, and P.~Bojanowski, ``Dinov2: Learning robust visual features without supervision,'' {\em arXiv preprint arXiv:2304.07193}, 2023.

\bibitem{slotine1991applied}
J.-J.~E. Slotine and W.~Li, {\em Applied Nonlinear Control}, vol.~199.
\newblock Prentice Hall Englewood Cliffs, NJ, 1991.

\bibitem{richter2016polynomial}
C.~Richter, A.~Bry, and N.~Roy, ``Polynomial trajectory planning for aggressive quadrotor flight in dense indoor environments,'' in {\em Robotics Research: The 16th International Symposium ISRR}, pp.~649--666, Springer, 2016.

\bibitem{hoenig_2018}
W.~H{\"{o}}nig, J.~A. Preiss, T.~K.~S. Kumar, G.~S. Sukhatme, and N.~Ayanian, ``Trajectory planning for quadrotor swarms,'' {\em {IEEE} Trans. Robotics}, vol.~34, no.~4, pp.~856--869, 2018.

\bibitem{foust2020optimal}
R.~Foust, S.-J. Chung, and F.~Y. Hadaegh, ``Optimal guidance and control with nonlinear dynamics using sequential convex programming,'' {\em Journal of Guidance, Control, and Dynamics}, vol.~43, no.~4, pp.~633--644, 2020.

\bibitem{MAGIC}
E.~S. Lupu, F.~Xie, J.~Preiss, M.~Anderson, J.~Alindogan, and S.-J. Chung, ``Magic-vfm meta adaptive control for ground vehicle with visual foundation models,'' {\em IEEE Transactions on Robotics}, 2024.
\newblock Under review.

\bibitem{Bert05}
D.~P. Bertsekas, {\em Dynamic Programming and Optimal Control}, vol.~I.
\newblock Belmont, MA, USA: Athena Scientific, 3rd~ed., 2005.

\bibitem{stuart1998dynamical}
A.~Stuart and A.~R. Humphries, {\em Dynamical Systems and Numerical Analysis}, vol.~2.
\newblock Cambridge University Press, 1998.

\bibitem{LohmillerS98}
W.~Lohmiller and J.~E. Slotine, ``On contraction analysis for non-linear systems,'' {\em Automatica}, vol.~34, no.~6, pp.~683--696, 1998.

\bibitem{audibert2009exploration}
J.-Y. Audibert, R.~Munos, and C.~Szepesv{\'a}ri, ``Exploration--exploitation tradeoff using variance estimates in multi-armed bandits,'' {\em Theoretical Computer Science}, vol.~410, no.~19, pp.~1876--1902, 2009.

\bibitem{TsukamotoCS21}
H.~Tsukamoto, S.~Chung, and J.~E. Slotine, ``Contraction theory for nonlinear stability analysis and learning-based control: {A} tutorial overview,'' {\em Annual Reviews in Control}, vol.~52, pp.~135--169, 2021.

\bibitem{kirk2012optimal}
D.~Kirk, {\em Optimal control theory: An introduction}.
\newblock Dover Books on Electrical Engineering, Dover Publications, 2012.

\end{thebibliography}

\section*{Acknowledgements}
We thank the DARPA Learning Introspective Control (LINC) team at Caltech and JPL, who contributed to the full autonomy stack of the results shown in Figure 3. We thank J. Preiss (planning), S. Lupu (control), and M. Anderson (ROS/software) for their support and collaboration on the tracked vehicle experiment. Other LINC team members include J. Burdick, Y. Yue, A. Rahmani, L. Gan (localization), F. Xie (control), J. Becker (segmentation and mapping), T. Touma, and J. Alindogan (experiment).

We would like to thank J. Cho for his support and collaboration on the spacecraft experiment. 
We would also like to thank H. Tsukamoto for discussions on discrete contraction, F. Hadaegh for discussions on space autonomy, and the Sandia National Labs team (T. Blada, E. Lu, D. Wood) for LINC testing support (Figure 3). 
 The drone experiment was conducted at Caltech's Center for Autonomous Systems and Technologies (CAST).
\textbf{Funding}:
We acknowledge the funding support of Supernal (R. Stefanescu, H. Park), The Aerospace Corporation (J. Brader, B. Bycroft), and DARPA LINC (JF Mergen).  
This material is based upon work supported by the National Science Foundation Graduate Research Fellowship Program under Grant No.~2139433. Any opinions, findings, and conclusions or recommendations expressed in this material are those of the authors and do not necessarily reflect the views of the National Science Foundation.
\textbf{Author Contributions}:
The algorithm conceptualization, theory, and software implementation was equally developed by B.R. and J.L. under the guidance and critical reviews of S.-J.C.
The quadrotor and spacecraft experiments were equally designed by B.R. and J.L.
Both B.R. and J.L. contributed to the tracked vehicle experiment and glider experiment, with J.L. leading the tracked vehicle experiment and B.R. leading the glider experiment. 
The manuscript and figures were equally developed by B.R. and J.L. with feedback and multiple iterations with S.-J.C.
All authors reviewed the manuscript. S.-J.C supervised the research.
\textbf{Competing Interests}:
The authors declare no competing interests.
\textbf{Data and materials availability}:
All data and methods needed to evaluate this work are included in the main text and Supplementary Materials.

\section*{Supplementary Materials}

\setcounter{figure}{0}  
\setcounter{equation}{0}  

\renewcommand{\figurename}{Supplemental Figure}
\renewcommand{\thefigure}{\arabic{figure}}

\label{sec:supplemental}

\subsection*{Supplemental Problem Data and Equations of Motion}
\label{sec:supplemental_param}

\subsubsection{Supplemental Quadrotor}
\label{sec:supplemental_param_quad}

\setcounter{MaxMatrixCols}{20}
The physical state is the positions in meters in world frame, velocities in meters per second in body frame, Euler angle attitudes in radians, and attitude rates in radians per second. The control inputs are the thrust in Newtons and body torques in Newton meters. 
\begin{align}
    \mathbf{q} &= \begin{bmatrix}
        p_n & p_e & p_d & u & v & w & \phi & \theta & \psi & \dot{\phi} & \dot{\theta} & \dot{\psi}
    \end{bmatrix}^\top \\ 
    \action &= \begin{bmatrix}
        f_\text{th} & \tau_x & \tau_y & \tau_z 
    \end{bmatrix}^\top
\end{align}
In still air, the quadrotor only experiences external body forces and torques from the rotors. 

The state and action spaces are specified as hypercubes with the following parameters: 
\begin{align}
    \statespace &= \begin{bmatrix}
       -1.3 & -1.3 & -3.2 & -6.0 & -6.0 & -6.0 & -0.8 & -0.8 & -10.0 & -5.0 & -5.0 & -5.0 \\
       1.3 & 1.3 & -2.2 & 6.0 & 6.0 & 6.0 & 0.8 & 0.8 & 10.0 & 5.0 & 5.0 & 5.0
    \end{bmatrix}^\top \\ 
    \actionspace &= \begin{bmatrix}
       0.0 & -0.012 & -0.012 & -0.002  \\ 
       12.0 & 0.012 & 0.012 & 0.002
    \end{bmatrix}^\top
\end{align}

For the dynamical model, we opt for the convention from~\cite{beard2012small} for a 6DOF model of the quadrotor, rather than standard compact notation~\cite{neuralfly}, because it will be reused for the glider model in Sec.~\ref{sec:supplemental_param_glider}. 
The dynamics of the nominal physical system are the following: 
\begin{align*}
    \begin{bmatrix}
        \dot{p}_n \\ \dot{p}_e \\ \dot{p}_d
    \end{bmatrix} &= 
    \begin{bmatrix}
        c_\theta c_\psi & 
        s_\phi s_\theta c_\psi - c_\phi s_\psi & 
        c_\phi s_\theta c_\psi + s_\phi s_\psi \\ 
        c_\theta s_\psi & 
        s_\phi s_\theta s_\psi + c_\phi c_\psi &
        c_\phi s_\theta s_\psi - s_\phi c_\psi \\
        -s_\theta & 
        s_\phi c_\theta &
        c_\phi c_\theta 
    \end{bmatrix}
    \begin{bmatrix}
        u \\ v \\ w
    \end{bmatrix} \nonumber \\ 
    \begin{bmatrix}
        \dot{u} \\ \dot{v} \\ \dot{w}
    \end{bmatrix} &= 
    \begin{bmatrix}
        r v - q w \\
        p w - r u \\ 
        q u - p v 
    \end{bmatrix} + 
    \frac{1}{\mathrm{m}}
    \begin{bmatrix}
        f_x \\ f_y \\ f_z
    \end{bmatrix} \nonumber \\ 
    \begin{bmatrix}
        \dot{\phi} \\ \dot{\theta} \\ \dot{\psi}
    \end{bmatrix} &= 
    \begin{bmatrix}
        1 & s_\phi t_\theta & c_\phi t_\theta \\
        0 & c_\phi & -s_\phi \\ 
        0 & \frac{s_\phi}{c_\theta} & \frac{c_\phi}{c_\theta}
    \end{bmatrix}
    \begin{bmatrix}
        p \\ q \\ r
    \end{bmatrix} \nonumber \\ 
    \begin{bmatrix}
        \dot{p} \\ \dot{q} \\ \dot{r}
    \end{bmatrix} &= 
    \begin{bmatrix}
        \Gamma_1 p q - \Gamma_2 q r \\
        \Gamma_5 p r - \Gamma_6 (p^2 - r^2) \\ 
        \Gamma_7 p q - \Gamma_1 q r 
    \end{bmatrix} + 
    \begin{bmatrix}
        \Gamma_3 l + \Gamma_4 n \\ 
        \frac{1}{J_y} m \\ 
        \Gamma_4 l + \Gamma_8 n 
    \end{bmatrix}
    \label{eq:6dof_dynamics}
\end{align*}
where (i) $s$, $c$, $t$ are shorthand notation for $\sin$, $\cos$, and $\tan$, (ii) $\mathrm{m}$, $\{ \Gamma_i\}_{i=1}^{8}$ are mass and moment of inertia constants, (iii) $f_x$, $f_y$, $f_z$ are the external forces in body frame and $n$, $m$, $l$ are the external torques.

The state also stores the number of timesteps since viewing each target $\xi^i_k$, and these states are updated according to an observation rule:  
\begin{alignat}{2}
    \state_k &= \begin{bmatrix}
        \mathbf{q}_k^\top & \xi^1_k & \hdots & \xi^{n_t}_k
    \end{bmatrix}^\top , \quad  
    \xi^i_{k+1} = \begin{cases}
        0 & \mathbf{q}_k \in \mathcal{G}^i \\
        1 + \xi^i_k & \text{else}
    \end{cases}
\end{alignat}
where $n_t$ is the number of targets. We set the $i$th goal region $G_i$ to be the states within 0.75 meters of the center of the $i$th target. This radius is chosen because the target ball's radius is 0.20 meters, the quadrotor's radius is 0.10 meters, and we leave some tolerance to account for process noise and perception error.  
The obstacles are inflated to account for the quadrotor's 20 cm diameter, so collision checking is done with the quadrotor's center-of-mass.
The complete discrete dynamics $F$ are computed from Euler integration of the continuous physical dynamics subject to the wind's external forcing, and the discrete augmented states transition is computed separately.

The final component of the MDP is the objective: the reward function is an affine function of the number of each targets observed within their associated timescale $T^i$, and the terminal value is uniformly zero: 
\begin{align}
    R(\state) &= r_0 + \frac{1 - r_0}{n_t} \sum_{i=1}^{n_t} \mathbf{1}_{(\xi^i < T^i)}(\state), \quad D(\state) \equiv 0 
\end{align}

Next, we discuss the data generation and model training for the wind model. The wind's effect is modeled as an external body force and torque as a function of the physical state, action and fan speed: 
\begin{align}
    \textup{wind} \left( 
        \mathbf{q}_k, \action, \textup{fan speed}
    \right) &= 
    \begin{bmatrix}
        f_{x,w} & f_{y,w} & f_{z,w} & n_w & m_w & l_w
    \end{bmatrix}^\top 
\end{align}
To train our wind model, we collect a set of trajectories $\{(\state_k, \action_k, \textup{fan speed}_k)\}_{k=1}^{|\mathcal{D}|}$, generate labels of wind effects using the residual error between the predicted and observed state, and train our model using supervised learning of a feedforward neural network: 
\begin{align}
    \mathcal{D} &= \{ (\underbrace{\state_k, \action_{k+1}, \textup{fan speed}_k}_{\text{input}}, \ \underbrace{\state_{k+1} - F_n(\state_k, \action_{k+1})}_{\text{label}}) \} \\ 
    \widehat{\textup{wind}} &= \argmin_{w \in W} \sum_{(\text{input}, \text{label}) \in \mathcal{D}} \| w(\text{input}) - \text{label} \|_2
\end{align}
where $F_n$ is the nominal predicted dynamics without wind. 

The trajectories are generated by randomly sampling terminal points inside the environment arena, fitting a polynomial, and tracking the polynomial trajectory with the built-in controller. 
We collect 2 minutes of data for 5 different fan settings, for a total of 76,237 datapoints. 
Our neural network architecture includes two hidden layers with eight hidden dimensions each, with ReLU activations.
Once the wind model is trained, it is incorporated into the MDP as external forces and torques. 

The autonomy components are as follows: 
we use a commercial off-the-shelf quadrotor and its built-in autopilot for tracking control and a combination of Visual Inertial Odometry (VIO) and motion capture for state estimation, where the obstacle and target locations are specified at runtime. 
\acronym{} is the planning module, generating trajectories of 10 second duration every 5 seconds and running in real-time model-predictive control manner. 
The dynamics model used by \acronym{} is the standard quadrotor model augmented with a DNN to model the learned residual wind force. 
The data collection and model training is a simplified procedure from previous work~\cite{neuralfly}.

\subsubsection{Supplemental Tracked Vehicle}
\label{sec:supplemental_param_linc}

We model the vehicle as a 3DOF car specified by $x$ and $y$ positions in meters, heading angle $\theta$ in radians, and linear and angular velocities $v$, $\omega$ in m/s and rad/s, respectively. The input is a desired linear and angular velocity, which enter the dynamics as a first-order filtration.

The state space and action space are specified with the following parameters: 
\begin{align}
    \statespace = \begin{bmatrix}
        -100 & -100 & -10\pi & -1.8 & -1.5 \\ 
         100 &  100 &  10\pi &  1.8 &  1.5 \\ 
    \end{bmatrix}^\top
    \quad
    \actionspace = \begin{bmatrix}
        -1.0,& -1.0 \\
        1.0,& 1.0
    \end{bmatrix}^\top
\end{align}

The nominal tracked vehicle dynamics are: 
\begin{align}
    F\left(\begin{bmatrix}
        x \\ y \\ \theta \\ v \\ \omega
    \end{bmatrix},
    \begin{bmatrix}
        v_d \\ \omega_d
    \end{bmatrix}\right)
    =
    \begin{bmatrix}
        x \\ y \\ \theta \\ v \\ \omega
    \end{bmatrix} +
    \Delta t
    \begin{bmatrix}
        v \cos{\theta} \\
        v \sin{\theta} \\
        \omega \\
        \frac{1}{\tau_v}(-v + v_d) \\
        \frac{1}{\tau_\omega}(-\omega + \omega_d)
    \end{bmatrix}
\end{align}
for parameters $\Delta t = 0.1$, $\tau_v = 0.2, \tau_\omega = 0.15$.
In the experiment, the dynamics are updated according to the adapted parameters as described in Equation~\eqref{eq:dynamics_update}

The reward is a negative quadratic function of the difference between the driver's intended velocity and the actual velocity, and the terminal value is uniformly zero.
\begin{align}
    R(\state, \action) = \max{(1.0 - 0.8(v-v_d)^2 - 0.6(\omega - \omega_d)^2,0)}, \quad D(\state) \equiv 0
\end{align}

An onboard vision system fuses visible and infrared images, forming an egocentric elevation map.
From the map, a traversability layer is created by evaluating the static stability of the vehicle on every point in the terrain.
Additionally, using the DinoV2 visual transformer~\cite{oquab2023dinov2}, we segment traffic cones, rocks, and other terrain features out of visible-light images to further refine the traversability layer.
During the \acronym{} search, the traversability layer is thresholded to form an unsafe set $\Omega$, with each node's state $\state$ checked to ensure $\state \in \statespace \setminus \Omega$.

An adaptive controller tracks the reference trajectory produced by \acronym{}, using composite adaptation to simultaneously minimize tracking error and converge physical parameter estimates to meaningful values.
At each recomputation, the updated parameter values are passed to the planner, forming a new estimated dynamics function $F$ for the search.
The form of adaptation we use is a disturbance estimator over the space of desired velocity disturbances, where the updated parameters $\{a_1, \dots, a_4\}$ affect the tracked vehicle dynamics as
\begin{align}
    \begin{bmatrix}
        v_d' \\ \omega_d'
    \end{bmatrix}
    =
    \begin{bmatrix}
        1+a_1 & a_2 \\ a_3 & 1+a_4
    \end{bmatrix}
    \begin{bmatrix}
        v_d \\ \omega_d
    \end{bmatrix}
    \label{eq:dynamics_update}
\end{align}
where the parameters $\{a_1, \dots, a_4\}$ are updated with conventional composite adaptation~\cite{slotine1991applied}. 

We consider a decision horizon of 16 timesteps with a discount factor of $\gamma=1$ and a branch length hyperparameter of $H=2$.
The planner recomputes every 0.1 seconds. 
Though our analysis relies on $\gamma < 1$, this demonstration of \acronym{} deployed on an undiscounted MDP demonstrates it can be practically implemented beyond the class of smooth and discounted MDPs.

We augment our algorithm with a Sequential Convex Programming (SCP) post-processing operation to locally optimize the trajectory output of \acronym{}. 
The ``search then optimize'' framework has been used before in robotics community for motion planning problems~\cite{richter2016polynomial, hoenig_2018}, and here we extend it to the more general decision-making problem. 
In serial combination, these two components synergize well: search provides a global notion of optimality that avoids low-valued local minima, and the optimization refines the search's solution with a local notion of optimality.
In practice, this combination led to smoother interactions with the driver while maintaining exploration of nearby candidate trajectories. 
The SCP post-processing iteratively convexifies and solves the convexified problem, allowing a refinement of the search-based solution.
The convexification is achieved by linearizing the dynamics, linearizing the state constraint $\state \in \statespace \setminus \Omega$, and taking a second-order approximation of the reward around the previous iteration's trajectory solution.
The full details of the convexification procedure are similar to the implementation in~\cite{foust2020optimal}.

When deploying our algorithm on the tracked vehicle, we run our algorithm onboard on an NVIDIA Orin (12-core, 2.2 GHz ARM CPU, 32 GB RAM).
We use the OpenVINS package for state estimation and close the loop with a custom learning-based adaptive controller~\cite{MAGIC}. 

\subsubsection{Supplemental Spacecraft}
\label{sec:supplemental_param_spacecraft}

We model each spacecraft as a planar double integrator, with $p_x$ and $p_y$ positions in meters and $v_x$ and $v_y$ velocities in meters per second.
We use a 1-dimensional finite element mesh for the net model, with four nodes.

The state and action spaces are hypercubes, where each $x$ and $y$ position is constrained to the hypercube $(p_x,p_y) \in [-3.0, 5.0]^2$, and velocities to $(v_x,v_y) \in [-0.25, 0.25]^2$.

We model the external forces on each spacecraft and net nodes to model the dynamics.
The two controlled spacecraft are each actuated by forces in the $x$ and $y$ direction.

Between the net nodes and the edge net nodes and the controlled spacecraft, we consider a linear tension-only spring-damper.
The symmetric force, aligned in the direction of the net, for net segment $i$ is
\begin{align}
    F_{\text{net},i} = (l_i > l) (k_n (l_i - l) - c_n \dot{l}_i)
\end{align}
for nominal length $l$, net stiffness $k_n$, and damping coefficient $c_n$.
Here the length $l_i$ is the magnitude of the position difference between two nodes:
$\|[p_{x,i} \ p_{y,i}]^\top - [p_{x,i-1} \ p_{y,i-1}]^\top\|$.

Between each of the net nodes and the target spacecraft, we model the collision as a stiff spring-damper, with a similar form to above.
For node $i$, the contact force between the net node and the target is
\begin{align}
    F_{\text{contact},i} = (d_i < r_t) (k_c (d_i - r_t) - c_c \dot{d}_i)
\end{align}
for target radius $r_t$, collision stiffness $k_c$, and collision damping $c_c$.
Here $d_i$ is the distance between node $i$ and the target: $\|[p_{x,i} \ p_{y,i}]^\top - [p_{x,t} \ p_{y,t}]^\top\|$.
A visualization of the forces involved in this scenario is available in Fig.~\ref{fig:spacecraft_capture}.

\begin{figure}[htbp]
    \centering
    \begin{tikzpicture}[>=Stealth]
        \coordinate (A) at (0,-3);
        \coordinate (B) at (3,0);
        \coordinate (C) at (6,3);
        \coordinate (D) at (9,0);
        \coordinate (E) at (4.5,-3.0);
      
        \foreach \point in {A,D,E} {
            \fill (\point) circle (4pt);
        }

        \foreach \point in {B,C} {
            \fill (\point) circle (2pt);
        }

        \draw[->] (E) -- ++(0,0.75) node[above] {$F_\text{contact,4,y}$};
        \draw[->] (E) -- ++(0.75,0) node[above] {$F_\text{contact,4,x}$};

        \draw[->] (B) -- ++(0,0.75) node[above] {$F_\text{contact,1,y}$};
        \draw[->] (B) -- ++(0.75,0) node[below] {$F_\text{contact,1,x}$};
        \draw[->] (C) -- ++(0,0.75) node[above] {$F_\text{contact,2,y}$};
        \draw[->] (C) -- ++(0.75,0) node[above] {$F_\text{contact,2,x}$};

        \draw[->] (B) -- ++(0.75,0.75) node[right] {$F_\text{net,1,+}$};
        \draw[->] (B) -- ++(-0.75,-0.75) node[left] {$F_\text{net,1,-}$};
        \draw[->] (C) -- ++(0.75,-0.75) node[right] {$F_\text{net,2,+}$};
        \draw[->] (C) -- ++(-0.75,-0.75) node[left] {$F_\text{net,2,-}$};
        \draw[->] (A) -- ++(0.75,0.75) node[right] {$F_\text{net,0,+}$};
        \draw[->] (D) -- ++(-0.75,0.75) node[left] {$F_\text{net,3,-}$};

        \draw[->] (A) -- ++(0,0.75) node[above] {$F_\text{control,0,y}$};
        \draw[->] (A) -- ++(0.75,0) node[below] {$F_\text{control,0,x}$};
        \draw[->] (D) -- ++(0,0.75) node[above] {$F_\text{control,3,y}$};
        \draw[->] (D) -- ++(0.75,0) node[above] {$F_\text{control,3,x}$};

        
        \draw (A) -- (B);
        \draw (B) -- (C);
        \draw (C) -- (D);
        
        \node[below=5pt] at (A) {$0$};
        \node[below=5pt] at (B) {$1$};
        \node[below=5pt] at (C) {$2$};
        \node[below=5pt] at (D) {$3$};
        \node[below=5pt] at (E) {$4$};
    \end{tikzpicture}
    \caption{Forces on Spacecraft Capture Problem for two net nodes.}
    \label{fig:spacecraft_capture}
\end{figure}
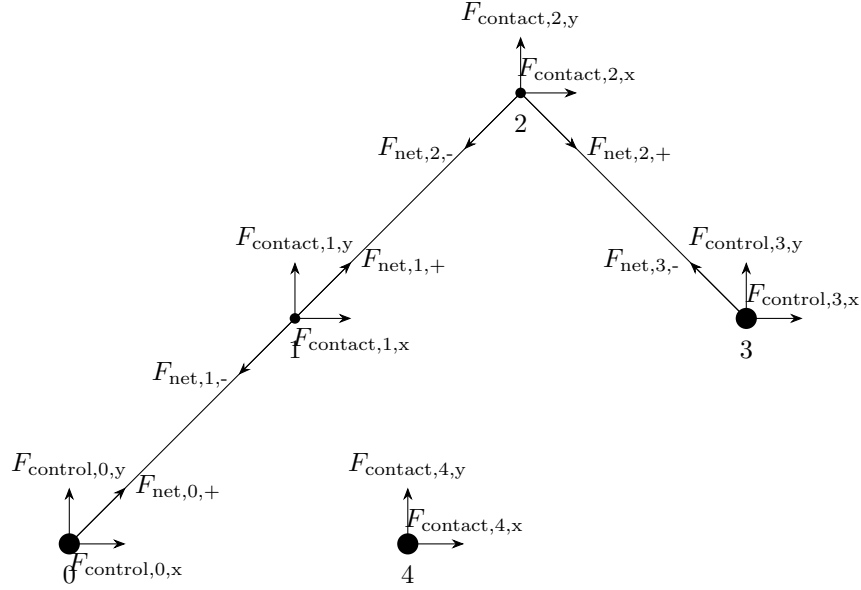

The reward is a composite function of three terms:
\begin{align}
    R(\state) =& c_1 s(\|
        [p_{x,\text{centroid}} \ p_{y,\text{centroid}}]^\top - [p_{x,t} \ p_{y,t}]^\top\|, a_1) \\
        &+ c_2 s(\|
        [v_{x,\text{centroid}} \ v_{y,\text{centroid}}]^\top - [v_{x,d} \ v_{y,d}]^\top\|, a_2) \\
        &+ c_3 s(\|
        [v_{x,t} \ v_{y,t}]^\top - [v_{x,d} \ v_{y,d}]^\top\|, a_3) \\
        D(\state) \equiv 0
\end{align}
where $[p_{x,\text{centroid}} \ p_{y,\text{centroid}}]^\top$ is the centroid of the controlled spacecraft and net structure.
Here $s$ is a normalization function: $s(d, a) = 1 - \frac{2}{\pi} \arctan(\frac{d}{a})$.
The first term serves to guide the centroid of the controlled spacecraft and net toward the target.
The second and third terms encourage the captured target and controller spacecraft to move in the desired direction.

\subsubsection{Supplemental Glider}
\label{sec:supplemental_param_glider}

We use the same physical state and dynamics as the quadrotor example~\eqref{eq:6dof_dynamics}, with the following modifications: (i) the control input affects the control surfaces, (ii) external forces use a 6DOF linear aerodynamic model, and (iii) the external wind model is a uniform upward force when in the thermal area. All the physical and aerodynamic parameters are taken from the Aerosonde UAV available in~\cite{beard2012small}. 

The control vector is the elevator, aileron, and rudder deflections $\action = [\delta_e,  \delta_a, \delta_r]^\top$.

The state and action space are hypercubes with the following upper and lower limits: 
\begin{align}
    \statespace &= \begin{bmatrix}
       -1000 & -1000 & -750 & -600 & -600 & -600 & -2 & -2 & -100 & -50 & -50 & -50 \\
        1000 &  1000 & -0.2 & 600 &  600 &  600 & 2 &  2 & 100 & 50 & 50 & 50 \\
    \end{bmatrix}^\top \\ 
    \actionspace &= \begin{bmatrix}
        -0.5 & -0.5 & -0.5 \\
         0.5 & 0.5 & 0.5
    \end{bmatrix}^\top
\end{align}

The external forces are computed with a first-order Taylor expansion of the following longitudinal and lateral components: 
\begin{alignat}{2}
    f_\text{lift} &= \frac{1}{2} \rho V_a^2 S C_L(\alpha, q, \delta_e) \quad \quad 
    &f_y = \frac{1}{2} \rho V_a^2 S C_Y(\beta, p, r, \delta_a, \delta_r) \\ 
    f_\text{drag} &= \frac{1}{2} \rho V_a^2 S C_D(\alpha, q, \delta_e) 
    &l = \frac{1}{2} \rho V_a^2 S b C_l(\beta, p, r, \delta_a, \delta_r) \\ 
    m &= \frac{1}{2} \rho V_a^2 S c C_m(\alpha, q, \delta_e) 
    &n = \frac{1}{2} \rho V_a^2 S b C_n(\beta, p, r, \delta_a, \delta_r)  
\end{alignat}
where (i) $f_\text{lift}$ and $f_\text{drag}$ are rotated into $f_x$ and $f_y$, (ii) $S$, $c$, $b$ are the planar area of the wing surface, mean chord length, and wind span of the drone, (iii) $\alpha$ and $\beta$ are the angle of attack and slip angle, and (iv) $V_a$ is the relative speed between the glider and the wind. 

The reward function defines the problem objective and has two terms: 
\begin{align}
    R(\state) &= 
    0.1 r_0 
    + 0.9 \left( \mathbf{1}_{\xi < T} + 0.5 (1 - \mathbf{1}_{\xi < T}) s(\| [p_x, p_y, p_z] - [p_{x,g}, p_{y,g}, p_{z,g}] \|, a) \right) \\ 
    D(\state) &\equiv 0
\end{align}
The first term is a nominal stay alive reward, $r_0$. 
The second term is an observation target reward
where $\xi$ is the same time-since-target-observed augmented state as the quadrotor example, except the goal condition is changed to be an inclusion of the target in the observation cone, where the cone's axis is aligned with the body axis of the glider and has a 30 degree angle and 100 meter length. 
If the target has not recently been observed, ($\mathbf{1}_{\xi < T} = 0$), then reward is given for being near the target, where $s$ is the same normalization function in the spacecraft example.

\subsection*{Supplemental Theoretical Analysis}

Our analysis organization is summarized in Fig.~\ref{fig:sets_proof_organization}. 
First, we develop the Spectral Expansion result (Theorem~\ref{thm:mdp_spectral_search_value_estimate}). 

\begin{figure}
    \centering 
    \includegraphics[width=0.99\linewidth]{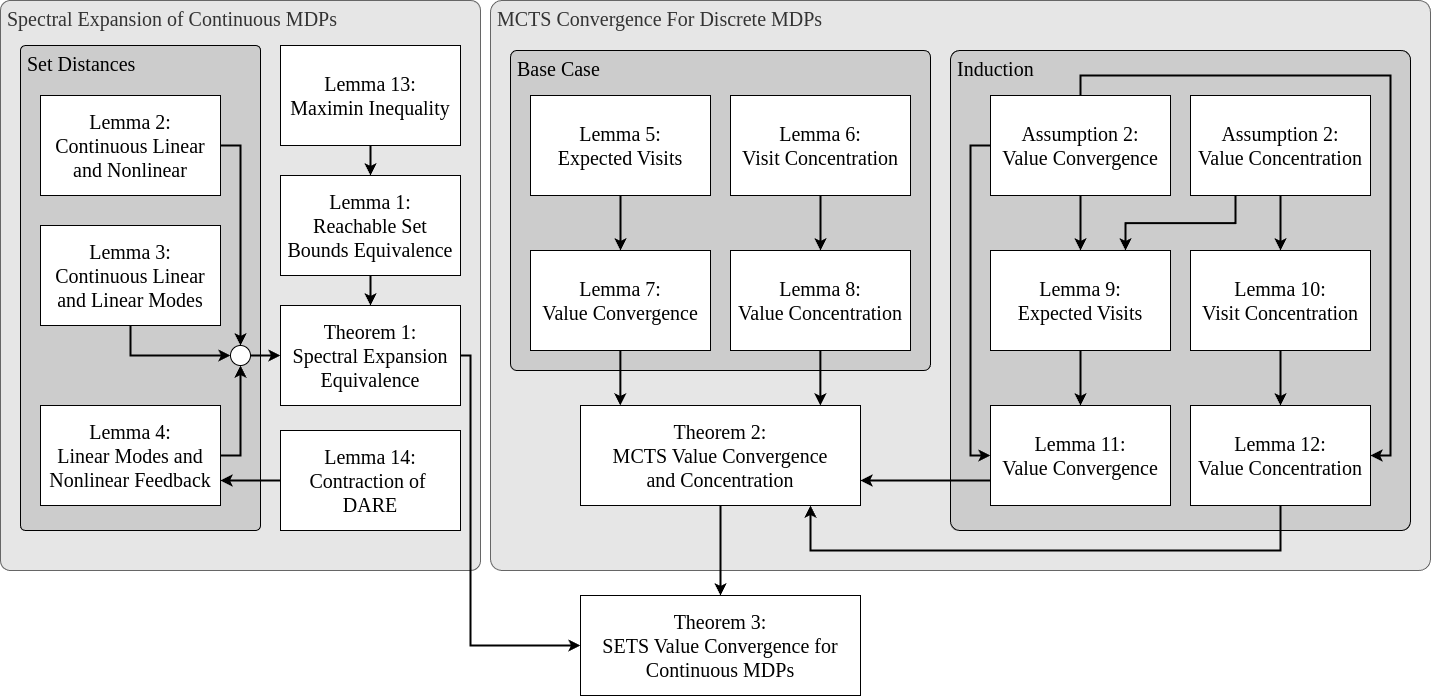}
    \caption{
    Our proof layout shows how Lemmas are connected to yield our main theoretical contributions.}
    \label{fig:sets_proof_organization}
\end{figure}

We require some preliminary definitions of Lipschitz functions, reachable sets, Hausdorff distance, and linearized dynamics: 
\begin{definition}
    \label{def:Lipschitz}
    A function $g:\mathbb{R}^n \rightarrow \mathbb{R}^r$ is Lipschitz continuous if $\exists L_g\geq 0$ s.t. $\forall u, v \in \mathbb{R}^n$, 
    $\| g(u) - g(v) \| \leq L_g \| u - v \|$.
\end{definition}
\begin{definition}
\label{def:reachable_set}
Given an initial state $\state_0$, a dynamical model $F$, and a set of actions $U$, the reachable set is the set of states after rolling out each of the actions:
\begin{align}
    \mathcal{R}_F(\state_0, U) = \{ F(\state_0, \action) \ | \ \forall \action \in U \}
\end{align}
\end{definition}
\begin{definition}
    \label{def:Hausdorff_distance}
    Consider a point $a \in \mathbb{R}^n$ and a set $M \subset \mathbb{R}^n$. 
    The point-set distance between $a$ and $M$ is $d(a, M) = \inf_{m \in M} \| a - m \|$.
    Consider two sets $M, N \subset \mathbb{R}^n$. The Hausdorff distance (standard set distance) between $M$ and $N$ is the symmetric function $d_S(M, N) = \max \left\{ \sup_{m \in M} d(m, N), \ \sup_{n \in N} d(M, n) \right\}$.
\end{definition}
\begin{definition}
    \label{def:linearized_dynamics}
    Consider the nonlinear dynamics $F(\state, \action)$. 
    The linearization about a state and input trajectory $(\bar{\state}_{k}, \bar{\action}_{k+1})$ is defined as:
    \begin{align}
        L_k(\state, \action) = 
        F(\bar{\state}_k, \bar{\action}_{k+1}) 
        + \nabla_\state F\rvert_{(\bar{\state}_k, \bar{\action}_{k+1})} (\state - \bar{\state}_k) 
        + \nabla_\action F\rvert_{(\bar{\state}_k, \bar{\action}_{k+1})} (\action - \bar{\action}_{k+1})
    \end{align}
\end{definition}

We present Lemma~\ref{lemma:mdp_equivalence} on the equivalence of two MDPs and their reachable set.
\begin{lemma}
\label{lemma:mdp_equivalence}
Consider two MDPs satisfying Assumption~\ref{assumption:mdp_se}:
\begin{align}
    \mdp_1 &= \left<\statespace, \actionspace_1, F, R, D, K, \gamma \right> \\
    \mdp_2 &= \left<\statespace, \actionspace_2, F, R, D, K, \gamma \right>,
\end{align}
Let $V_1^*$ and $V_2^*$ denote the respective optimal value functions of the two problems.
The difference in optimal value function between the two problems is uniformly bounded by a constant times the set distance between their reachable sets:
\begin{align}
    \|V_1^* - V_2^*\|_\infty \leq
    \frac{L_R + \gamma L_V}{1-\gamma} \max_{\state \in \statespace} 
        d_S(\mathcal{R}_F(\state, \actionspace_1), \mathcal{R}_F(\state, \actionspace_2))
\end{align}
\end{lemma}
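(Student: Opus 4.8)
The plan is to recognize the two optimal value functions as fixed points of Bellman optimality operators that are identical except for the set over which the one-step maximization runs, and then bound their difference by a standard ``perturbation of a contraction'' estimate. First I would write each Bellman operator in a form that exposes the reachable set. Since the successor state $F(\state,\action)$ sweeps out exactly $\mathcal{R}_F(\state,\actionspace_i)$ as $\action$ ranges over $\actionspace_i$ (Definition~\ref{def:reachable_set}), and since $R$ depends only on the state by Assumption~\ref{assumption:mdp_se}, I can write
\[
\mathcal{T}_i V(\state) = \max_{\state' \in \mathcal{R}_F(\state,\actionspace_i)}\bigl[ R(\state') + \gamma V(\state')\bigr], \qquad i \in \{1,2\},
\]
so that $V_i^* = \mathcal{T}_i V_i^*$ and $R$ (being action-independent) is common to both operators, making the maximization set the \emph{only} difference between them. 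Both operators are $\gamma$-contractions in $\|\cdot\|_\infty$ by the usual argument, and I would invoke the Lipschitz continuity of the optimal value function (constant $L_V$), established separately, together with compactness of $\actionspace$ and continuity of $F$ to guarantee the maxima are attained on compact reachable sets.

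The key auxiliary estimate I would prove is that for any $g$ that is Lipschitz with constant $L_g$ and any two compact sets $M, N$,
\[
\Bigl| \max_{m \in M} g(m) - \max_{n \in N} g(n) \Bigr| \leq L_g\, d_S(M,N).
\]
This follows directly from Definition~\ref{def:Hausdorff_distance}: letting $m^\star$ attain the left maximum, there is an $n \in N$ with $\|m^\star - n\| \le \sup_{m\in M} d(m,N) \le d_S(M,N)$, whence $g(m^\star) - \max_N g \le g(m^\star) - g(n) \le L_g\,d_S(M,N)$; the reverse inequality is symmetric.

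With these pieces in place I would assemble the bound via the two fixed-point identities, the triangle inequality, and the contraction property:
\[
\|V_1^* - V_2^*\|_\infty \le \|\mathcal{T}_1 V_1^* - \mathcal{T}_1 V_2^*\|_\infty + \|\mathcal{T}_1 V_2^* - \mathcal{T}_2 V_2^*\|_\infty \le \gamma\|V_1^* - V_2^*\|_\infty + \|\mathcal{T}_1 V_2^* - \mathcal{T}_2 V_2^*\|_\infty,
\]
which rearranges to $\|V_1^* - V_2^*\|_\infty \le \frac{1}{1-\gamma}\|\mathcal{T}_1 V_2^* - \mathcal{T}_2 V_2^*\|_\infty$. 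To bound the residual I would apply the auxiliary estimate with $g = R + \gamma V_2^*$ (Lipschitz constant $L_R + \gamma L_V$ under the paper's value-function convention) and the sets $M = \mathcal{R}_F(\state,\actionspace_1)$, $N = \mathcal{R}_F(\state,\actionspace_2)$ at each fixed $\state$; taking the supremum over $\state$ introduces the factor $\max_{\state}d_S(\cdots)$ and produces the advertised constant $\frac{L_R + \gamma L_V}{1-\gamma}$. For the genuinely finite-horizon problem~\eqref{eq:value_function} the same conclusion follows by backward induction over the $K$ stages, where the one-step discrepancy accumulates as the partial geometric sum $\frac{1-\gamma^K}{1-\gamma} \le \frac{1}{1-\gamma}$.

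The main obstacle I anticipate is not the contraction bookkeeping but justifying the Lipschitz constant $L_V$ of the value function: this is what lets me treat $R + \gamma V_2^*$ as Lipschitz, and it is where the $C^2$ (hence locally Lipschitz) regularity of $F$ from Assumption~\ref{assumption:mdp_se}, the Lipschitz reward, and compactness of $\statespace$ must be combined. A secondary point to handle with care is the attainment of the maxima and the compactness of the reachable sets, so that ``$\max$'' is legitimate and the set-distance estimate applies verbatim.
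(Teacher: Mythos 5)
Your proposal is correct and follows essentially the same route as the paper's proof: the same cross-term decomposition via $\mathcal{T}_1 V_2^*$, the contraction of $\mathcal{T}_1$ to absorb the first term into the $\frac{1}{1-\gamma}$ factor, and a Lipschitz bound on $\state' \mapsto R(\state') + \gamma V_2^*(\state')$ (constant $L_R + \gamma L_V$) against the Hausdorff distance of the reachable sets. The only cosmetic difference is that you package the set-distance step as a standalone estimate on maxima of a Lipschitz function over two compact sets, whereas the paper carries out the equivalent max-min manipulation inline (via its Lemma~\ref{lemma:minmax_helper}).
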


\begin{proof}
First, we define the Bellman value operator for each MDP: 
\begin{align}
    (\mathcal{T}_1 V)(\state) &:= \max_{\action_1 \in \actionspace_1} R(F(\state, \action_1), \action_1) + \gamma V(F(\state, \action_1)) \\
    (\mathcal{T}_2 W)(\state) &:= \max_{\action_2 \in \actionspace_2} R(F(\state, \action_2), \action_2) + \gamma W(F(\state, \action_2))
\end{align}

For discounted MDPs ($\gamma < 1$) with bounded rewards, there exists an optimal value function. 
Additionally, this optimal value function is the unique fixed point of the Bellman value operator~\cite{Bert05}.

Now, we consider the distance between the optimal value function of each problem: 
\begin{align}
    \| V_1^* - V_2^* \|_\infty 
    &= \| \mathcal{T}_1 V_1^* - \mathcal{T}_2 V_2^* \|_\infty   
    \label{eq:proof2_to_combine}
    \leq \| \mathcal{T}_1 V_1^* - \mathcal{T}_1 V_2^* \|_\infty + \| \mathcal{T}_1 V_2^* - \mathcal{T}_2 V_2^* \|_\infty   
\end{align}

The first term of Equation~\eqref{eq:proof2_to_combine} is bounded via contraction of the $\mathcal{T}_1$ operator: $\| \mathcal{T}_1 V_1^* - \mathcal{T}_1 V_2^* \|_\infty \leq \gamma \| V_1^* - V_2^* \|_\infty$. 

Let $Q(F(\state, \action_i)) = R(F(\state, \action_i)) + \gamma V_2^*(F(\state, \action_i))$ for $i=1,2$. 
The Lipschitz constant of $Q$ with respect to $F(\state, \action_i)$ is $L_Q = L_R + \gamma L_{V}$ where $L_V$ is the Lipschitz constant of $V_2^*$. 
\begin{align}
    | &Q(F(\state, \action_1)) - Q(F(\state, \action_2)) | \nonumber \\
    &= | R(F(\state, \action_1)) + \gamma V_2^*(F(\state, \action_1)) - 
    R(F(\state, \action_2)) - \gamma V_2^*(F(\state, \action_2)) | \nonumber \\
    &\leq | R(F(\state, \action_1)) - R(F(\state, \action_2)) | + \gamma | V_2^*(F(\state, \action_1)) - V_2^*(F(\state, \action_2)) | \nonumber \\
    &\leq (L_R + \gamma L_V) \| F(\state, \action_1) - F(\state, \action_2) \|_2 , \quad \forall \state, \action_1, \action_2
    \label{eq:Q_function_Lipschitz}
\end{align}
Note that taking the alternate cross term in Equation~\eqref{eq:proof2_to_combine} results in $L_V$ as the Lipschitz constant of $V_1^*$, so we write $L_V$ with no index. 

Although the exact value of $L_V$ is not known without knowledge of the optimal solution, $V_1^*$ and $V_2^*$ are Lipschitz, and their Lipschitz constants can be upper bounded by a function of the Lipschitz constants of the dynamics and reward functions.
We further note that, as a consequence of the principle of optimality, we only need to use the Lipschitz constant of $R$ restricted to optimal trajectories.
For notational simplicity, we just write $L_R$ in Equation~\eqref{eq:Q_function_Lipschitz}.

To bound the second term of Equation~\eqref{eq:proof2_to_combine}, note that, for all $\state \in \statespace$, 
{
\allowdisplaybreaks
\begin{align}
    &\mathcal{T}_1 V_2^*(\state) - \mathcal{T}_2 V_2^*(\state) \\ 
    & \qquad = \max_{\action_1 \in \actionspace_1} \bigg\{
    R(F(\state, \action_1)) + \gamma V_2^*(F(\state, \action_1)) 
    - \max_{\action_2 \in \actionspace_2} R(F(\state, \action_2)) + V_2^*(F(\state, \action_2))\\
    & \qquad = \max_{\action_1 \in \actionspace_1} \bigg\{
        R(F(\state, \action_1)) + \gamma V_2^*(F(\state, \action_1)) 
        + \min_{\action_2 \in \actionspace_2} \{- R(F(\state, \action_2)) - V_2^*(F(\state, \action_2))\}
    \bigg\} \\
    & \qquad = \max_{\action_1 \in \actionspace_1} \min_{\action_2 \in \actionspace_2} \bigg\{
        R(F(\state, \action_1)) + \gamma V_2^*(F(\state, \action_1)) 
        - R(F(\state, \action_2)) - V_2^*(F(\state, \action_2))
    \bigg\} \\ 
    & \qquad = 
    \max_{\action_1 \in \actionspace_1} \min_{\action_2 \in \actionspace_2} Q(F(\state, \action_1)) - Q(F(\state, \action_2)) \\ 
    & \qquad \leq
    \max_{\action_1 \in \actionspace_1} \min_{\action_2 \in \actionspace_2} \big| Q(F(\state, \action_1)) - Q(F(\state, \action_2)) \big| \quad \text{ by Lemma~\ref{lemma:minmax_helper}} \\ 
    & \qquad \leq 
    \max_{\action_1 \in \actionspace_1} \min_{\action_2 \in \actionspace_2} L_Q \| F(\state, \action_1) - F(\state, \action_2) \| \qquad \text{by Lemma~\ref{lemma:minmax_helper}, as }Q\text{ is Lipschitz} \\ 
    & \qquad = 
    (L_R + \gamma L_{V}) \max_{\action_1 \in \actionspace_1} \min_{\action_2 \in \actionspace_2} \bigg\{
        \|F(\state, \action_1) - F(\state, \action_2)\|
    \bigg\} \\ 
    & \qquad \leq 
    (L_R + \gamma L_{V}) d_S(\mathcal{R}_F(\state, \actionspace_1), \mathcal{R}_F(\state, \actionspace_2)) \\
    & \qquad = \big|(L_R + \gamma L_{V}) d_S(\mathcal{R}_F(\state, \actionspace_1), \mathcal{R}_F(\state, \actionspace_2))\big|
\end{align}
}
and identical analysis applied to the negation shows
{
\allowdisplaybreaks
\begin{align}
    &\mathcal{T}_2 V_2^*(\state) - \mathcal{T}_1 V_2^*(\state) \\ 
    & \qquad = \max_{\action_2 \in \actionspace_2} \bigg\{
    R(F(\state, \action_2)) + \gamma V_2^*(F(\state, \action_1)) 
    - \max_{\action_1 \in \actionspace_1} R(F(\state, \action_1)) + V_2^*(F(\state, \action_1))\\
    & \qquad = \max_{\action_2 \in \actionspace_2} \bigg\{
        R(F(\state, \action_2)) + \gamma V_2^*(F(\state, \action_2)) 
        + \min_{\action_1 \in \actionspace_1} \{- R(F(\state, \action_1)) - V_2^*(F(\state, \action_1))\}
    \bigg\} \\
    & \qquad = \max_{\action_2 \in \actionspace_2} \min_{\action_1 \in \actionspace_1} \bigg\{
        R(F(\state, \action_2)) + \gamma V_2^*(F(\state, \action_2)) 
        - R(F(\state, \action_1)) - V_2^*(F(\state, \action_1))
    \bigg\} \\ 
    & \qquad = 
    \max_{\action_2 \in \actionspace_2} \min_{\action_1 \in \actionspace_1} Q(F(\state, \action_2)) - Q(F(\state, \action_1)) \\ 
    & \qquad \leq
    \max_{\action_2 \in \actionspace_2} \min_{\action_1 \in \actionspace_1} \big| Q(F(\state, \action_2)) - Q(F(\state, \action_1)) \big| \quad \text{ by Lemma~\ref{lemma:minmax_helper}} \\ 
    & \qquad \leq 
    \max_{\action_2 \in \actionspace_2} \min_{\action_1 \in \actionspace_1} L_Q \| F(\state, \action_2) - F(\state, \action_1) \| \qquad \text{by Lemma~\ref{lemma:minmax_helper}, as }Q\text{ is Lipschitz} \\ 
    & \qquad = 
    (L_R + \gamma L_{V}) \max_{\action_2 \in \actionspace_2} \min_{\action_1 \in \actionspace_1} \bigg\{
        \|F(\state, \action_2) - F(\state, \action_1)\|
    \bigg\} \\ 
    & \qquad \leq 
    (L_R + \gamma L_{V}) d_S(\mathcal{R}_F(\state, \actionspace_2), \mathcal{R}_F(\state, \actionspace_1)) \\
    & \qquad = \big|(L_R + \gamma L_{V}) d_S(\mathcal{R}_F(\state, \actionspace_1), \mathcal{R}_F(\state, \actionspace_2))\big|
\end{align}
}
Therefore
\begin{align}
    |\mathcal{T}_1 V_2^*(\state) - \mathcal{T}_2 V_2^*(\state)|
    &\leq \big|(L_R + \gamma L_{V}) d_S(\mathcal{R}_F(\state, \actionspace_1), \mathcal{R}_F(\state, \actionspace_2))\big|
\end{align}
Noting that $\statespace$ is compact, let $\tilde{\state} = \argmax_{\state \in \statespace} | \mathcal{T}_1 V_2^*(\state) - \mathcal{T}_2 V_2^*(\state) |$.
Consequently, the second term of Equation~\eqref{eq:proof2_to_combine} can be bounded as
\begin{align}
    &\| \mathcal{T}_1 V_2^* - \mathcal{T}_2 V_2^* \|_\infty \\
    &= \max_{\state \in \statespace} | \mathcal{T}_1 V_2^*(\state) - \mathcal{T}_2 V_2^*(\state) | \\
    &= | \mathcal{T}_1 V_2^*(\tilde{\state}) - \mathcal{T}_2 V_2^*(\tilde{\state}) | \\
    &\leq | (L_R + \gamma L_{V}) d_S(\mathcal{R}_F(\tilde{\state}, \actionspace_1), \mathcal{R}_F(\tilde{\state}, \actionspace_2)) | \\
    &= (L_R + \gamma L_{V}) d_S(\mathcal{R}_F(\tilde{\state}, \actionspace_1), \mathcal{R}_F(\tilde{\state}, \actionspace_2)) \qquad \text{as }d_S\text{ is a metric and }L_R, L_V, \gamma \geq 0\\
    &\leq \max_{\state \in \statespace} \left((L_R + \gamma L_{V}) d_S(\mathcal{R}_F(\state, \actionspace_1), \mathcal{R}_F(\state, \actionspace_2))\right) \\
    &= (L_R + \gamma L_V) \max_{\state \in \statespace} d_S(\mathcal{R}_F(\state, \actionspace_1), \mathcal{R}_F(\state, \actionspace_2)) \quad \text{as }L_R, L_V, \gamma \geq 0
\end{align}

We arrive at the desired result by combining the two bounds in~\eqref{eq:proof2_to_combine} and manipulating: 
\begin{align}
    \|V_1^* - V_2^*\|_\infty \leq
    \frac{(L_R + \gamma L_V)}{1-\gamma} \max_{\state \in \statespace} \left(
        d_S(\mathcal{R}_F(\state, \actionspace_1), \mathcal{R}_F(\state, \actionspace_2))
    \right).
\end{align}
\end{proof}

In order to apply Lemma~\ref{lemma:mdp_equivalence} to our algorithm analysis, we need to bound the set distance between the reachable set of the continuous nonlinear system and the finite set of trajectories generated by SETS. 
As $d_S$ is a metric, we can apply the triangle inequality along with cross terms:
\begin{align}
    \label{eq:reachable_set_triangle}
    d_S(\mathcal{R}_{F^H}(\state_0, \actionspace^H), \{\state_i\}_{i=1}^{2n}) &\leq
    d_S(\mathcal{R}_{F^H}(\state_0, \actionspace^H), \mathcal{R}_{L^H}(\state_0, \actionspace^H))
    + d_S(\mathcal{R}_{L^H}(\state_0, \actionspace^H), \{\mathbf{z}^i_H\}_{i=1}^{2n}) 
    \nonumber \\ & \qquad 
    + d_S(\{\mathbf{z}^i_H\}_{i=1}^{2n}, \{\state^i_H\}_{i=1}^{2n})
\end{align}
where $\{\mathbf{z}^i_H\}_{i=1}$ are the modes of the locally linearized system and $\{\state^i_H\}_{i=1}$ are the trajectories generated by SETS. 

Lemmas~\ref{lemma:continuous_nonlinear_linear_set_distance},~\ref{lemma:continuous_linear_finite_linear_set_distance}, and~\ref{lemma:set_distance_linear_modes_sets_trajs}
provide upper bounds for each sum on the RHS of~\eqref{eq:reachable_set_triangle}. 
This analysis is a formalization of the discussion related to Fig.~\ref{fig:method_se}. 

\begin{lemma}
\label{lemma:continuous_nonlinear_linear_set_distance}
The set distance between the continuous nonlinear and continuous linear reachable sets is bounded: 
\begin{align}
    d_S(\mathcal{R}_{F^H}(\state_0, \actionspace^H), \mathcal{R}_{L^H}(\state_0, \actionspace^H)) \leq 
    \frac{(L_F)^H - 1}{L_F - 1} \left( \frac{1}{2}L_{\nabla F} \varepsilon_H^2 \right)
\end{align}
\end{lemma}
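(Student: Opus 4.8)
The plan is to reduce the Hausdorff distance between the two reachable sets to a worst-case pointwise trajectory comparison, and then bound that discrepancy by a step-by-step error recursion. The key structural observation is that both $\mathcal{R}_{F^H}(\state_0, \actionspace^H)$ and $\mathcal{R}_{L^H}(\state_0, \actionspace^H)$ are images of the \emph{same} action domain $\actionspace^H$ under the two rollout maps $F^H$ and $L^H$, so every point $F^H(\state_0, \action_{[H]})$ has a natural partner $L^H(\state_0, \action_{[H]})$ (and conversely) obtained from the identical action sequence. Hence both suprema in Definition~\ref{def:Hausdorff_distance} are controlled by a single quantity, giving
\begin{align}
    d_S(\mathcal{R}_{F^H}(\state_0, \actionspace^H), \mathcal{R}_{L^H}(\state_0, \actionspace^H))
    \leq \sup_{\action_{[H]} \in \actionspace^H} \| F^H(\state_0, \action_{[H]}) - L^H(\state_0, \action_{[H]}) \|.
\end{align}
It then remains to bound this pointwise trajectory discrepancy uniformly over all admissible action sequences.

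Next I would fix an arbitrary action sequence, write $\state_k$ and $\mathbf{z}_k$ for the nonlinear and linearized states at step $k$ (with $\state_0 = \mathbf{z}_0$), and define the per-step error $e_k = \|\state_k - \mathbf{z}_k\|$. Inserting the cross term $L_k(\state_k, \action_{k+1})$ splits $\|\state_{k+1} - \mathbf{z}_{k+1}\|$ into a linearization (Taylor-remainder) term $\|F(\state_k, \action_{k+1}) - L_k(\state_k, \action_{k+1})\|$ evaluated at the \emph{common} point, and a propagation term $\|L_k(\state_k, \action_{k+1}) - L_k(\mathbf{z}_k, \action_{k+1})\|$. The first is a second-order remainder: since $F \in C^2$ with $\nabla F$ Lipschitz of constant $L_{\nabla F}$ by Assumption~\ref{assumption:mdp_se}, Taylor's theorem bounds it by $\tfrac{1}{2} L_{\nabla F} \|(\state_k, \action_{k+1}) - (\bar{\state}_k, \bar{\action}_{k+1})\|^2 \leq \tfrac{1}{2} L_{\nabla F} \varepsilon_H^2$, using that the combined state-action deviation from the nominal linearization point stays within $\varepsilon_H$ over the horizon. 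The second term is linear in the state argument of $L_k$ (Definition~\ref{def:linearized_dynamics}), hence at most $L_F e_k$, where $L_F$ bounds $\|\nabla_\state F\|$.

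These two estimates yield the affine recursion $e_{k+1} \leq L_F e_k + \tfrac{1}{2} L_{\nabla F} \varepsilon_H^2$ with $e_0 = 0$, which unrolls to the geometric sum $e_H \leq \tfrac{1}{2} L_{\nabla F} \varepsilon_H^2 \sum_{j=0}^{H-1} L_F^j = \tfrac{(L_F)^H - 1}{L_F - 1}\big(\tfrac{1}{2} L_{\nabla F} \varepsilon_H^2\big)$, exactly the claimed bound and uniform in the action sequence. The main obstacle is the Taylor-remainder step: I must ensure the deviation from the nominal trajectory stays within radius $\varepsilon_H$ for \emph{every} reachable state along \emph{every} admissible input, so the quadratic remainder is uniformly controlled; this is where compactness of $\statespace$, boundedness of $\actionspace$, and the horizon-wide definition of $\varepsilon_H$ are essential. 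The Hausdorff reduction and the closed-form solution of the recursion are then routine.
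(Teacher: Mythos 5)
Your proposal is correct and follows essentially the same argument as the paper: reduce the Hausdorff distance to a pointwise comparison of the nonlinear and linearized rollouts driven by a common input sequence, split each step's error via a cross term into a Lipschitz propagation term and a second-order Taylor remainder bounded by $\tfrac{1}{2}L_{\nabla F}\varepsilon_H^2$, and unroll the resulting affine recursion (the paper invokes the discrete Gr\"onwall lemma, which is the same geometric sum). The only cosmetic difference is that you insert the cross term $L_k(\state_k,\action_{k+1})$ at the nonlinear state while the paper inserts $F(\eta_k,\action_{k+1})$ at the linearized state, so your Taylor remainder is measured along the nonlinear trajectory rather than the linearized one; this does not change the structure or the final bound.
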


\begin{proof}
Consider two trajectories of length $H$, each driven by the same inputs.
One trajectory satisfies the nonlinear dynamics: $\xi_{k+1} = F(\xi_k, \action_{k+1})$ and one satisfies the linearized dynamics $\eta_{k+1} = L_k(\eta_k, \action_{k+1})$, where $\state_0 = \eta_0 = \xi_0$.
If $F$ has Lipschitz gradients, then the distance between the endpoints of the trajectories is bounded.

Note that, $\forall k\in[0,H-1]$:
\begin{align}
    \|\xi_{k+1} - \eta_{k+1}\| &=
    \|F(\xi_{k}, \action_{k+1}) - L_k(\eta_k, \action_{k+1})\| \\
    &=\|F(\xi_k, \action_{k+1}) - F(\eta_k, \action_{k+1}) + F(\eta_k, \action_{k+1}) - L_k(\eta_k, \action_{k+1})\| \\
    &\textup{by the triangle inequality,} \nonumber \\
    &\leq \|F(\xi_k, \action_{k+1}) - F(\eta_k, \action_{k+1})\| + \|F(\eta_k, \action_{k+1}) - L_k(\eta_k, \action_{k+1})\| \\
    &\textup{as } F \textup{ is Lipschitz, and applying Taylor's Theorem } \nonumber \\
    &\leq L_F \|\xi_k - \eta_k\| + \frac{1}{2}L_{\nabla F} 
    \| (\eta_k, \action_{k+1}) - (\bar{\state}_k, \bar{\action}_{k+1}) \|^2
\end{align}
Applying the discrete Grönwall Lemma~\cite{stuart1998dynamical} and using that $\xi_0 = \eta_0$, the trajectories satisfy
\begin{align}
    \label{eq:def_varepsilon}
    \| \xi_k - \eta_k \| \leq \frac{(L_F)^k - 1}{L_F - 1} \left( \frac{1}{2}L_{\nabla F} \varepsilon_k^2 \right)
    \quad \forall k\in[1,H]
\end{align}
where we let $\varepsilon_k = \max_{\action_{[1:k]}}\| (\eta_k, \action_{k+1}) - (\bar{\state}_k, \bar{\action}_{k+1}) \|$ for notational simplicity.

\end{proof}

\begin{lemma}
\label{lemma:continuous_linear_finite_linear_set_distance}
The set distance between the continuous linear reachable set and the finite set of linear modes is bounded: 
\begin{align}
    d_S (\mathcal{R}_{L^H}(\state_0, \actionspace^H), \{\mathbf{z}^i_H\}_{i=1}^{2n}) 
    &\leq 2 \sigma_\mathrm{max}(\mathcal{C})
\end{align}
\end{lemma}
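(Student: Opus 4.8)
The plan is to identify the continuous linear reachable set with the controllability ellipsoid and then bound the Hausdorff distance (Definition~\ref{def:Hausdorff_distance}) between this ellipsoid and the finite set of its principal semi-axis endpoints by a simple diameter argument. First I would make the affine structure of $\mathcal{R}_{L^H}(\state_0, \actionspace^H)$ explicit: unrolling the linearized dynamics $\mathbf{z}_{k+1} = A_k \mathbf{z}_k + B_k \action_{k+1} + c_k$ over $H$ steps writes the terminal state as $\mathbf{z}_H = \mathcal{C}\mathbf{w} + \mathbf{d}$, where $\mathcal{C}$ is the ($S$-rescaled) controllability matrix of Line~\ref{line:controllability_matrix}, $\mathbf{w}$ is the stacked and normalized input sequence, and $\mathbf{d} = \left(\prod_{k=0}^{H-1} A_k\right)\mathbf{z}_0 + \sum_{k=0}^{H-1}\left(\prod_{j=k}^{H-1}A_j\right)c_k$ is the autonomous drift. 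Under the unit control-energy normalization induced by the rescaling $S$ (Line~\ref{line:rescaling}), the input set is the Euclidean unit ball $\|\mathbf{w}\|_2 \le 1$, so the reachable set is the image of this ball under an affine map, i.e. an ellipsoid $\mathcal{E}$ centered at $\mathbf{d}$.

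Next I would diagonalize. Using the eigendecomposition $\mathcal{C}\mathcal{C}^\top = \sum_i \lambda_i \mathbf{v}_i \mathbf{v}_i^\top$ computed by the algorithm, the singular values of $\mathcal{C}$ are $\sigma_i = \sqrt{\lambda_i}$ with orthonormal left singular vectors $\mathbf{v}_i$, so $\mathcal{E} = \{\mathbf{d} + \sum_i \alpha_i \sigma_i \mathbf{v}_i : \sum_i \alpha_i^2 \le 1\}$ has principal semi-axes of length $\sigma_i$ along the directions $\mathbf{v}_i$. Comparing with the mode construction $\mathbf{z}^i_H = (-1)^{i\%2}\sqrt{\lambda_{i//2}}\,\mathbf{v}_{i//2} + \mathbf{d}$ then shows that the $2n$ points $\{\mathbf{z}^i_H\}_{i=1}^{2n}$ are exactly the $\pm$ endpoints of these semi-axes, each lying on the boundary $\partial\mathcal{E}$.

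Finally I would bound the two terms of the Hausdorff distance separately. Since every mode $\mathbf{z}^i_H$ lies on the boundary of the closed ellipsoid $\mathcal{E} = \mathcal{R}_{L^H}$, we have $d(\mathcal{R}_{L^H}, \mathbf{z}^i_H) = 0$, so the supremum of point-to-set distances over the finite set vanishes. For the other direction, I would use that both any $m \in \mathcal{E}$ and any fixed mode, say $\mathbf{z}^1_H$, lie within distance $\sigma_\mathrm{max}(\mathcal{C})$ of the center $\mathbf{d}$; the triangle inequality then gives $d(m, \{\mathbf{z}^i_H\}) \le \|m - \mathbf{z}^1_H\| \le \|m - \mathbf{d}\| + \|\mathbf{d} - \mathbf{z}^1_H\| \le 2\sigma_\mathrm{max}(\mathcal{C})$, which is just the diameter of $\mathcal{E}$. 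Taking the maximum over the two directions yields the claimed $d_S(\mathcal{R}_{L^H}(\state_0,\actionspace^H), \{\mathbf{z}^i_H\}_{i=1}^{2n}) \le 2\sigma_\mathrm{max}(\mathcal{C})$. The only genuinely delicate point is the first step, namely pinning down the reachable-set normalization so that it is the unit-energy ellipsoid rather than the box-input zonotope, whose corners would escape any $\sigma_\mathrm{max}$-scale bound; once the ellipsoid characterization and the identification of the modes as its vertices are in hand, the remaining estimate is an elementary diameter and triangle-inequality argument.
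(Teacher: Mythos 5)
Your proof is correct and the final estimate is the same as the paper's: both arguments bound the Hausdorff distance by routing through the center of the reachable set (the free-drift endpoint $\bar{\eta}_H$, your $\mathbf{d}$), showing that every reachable point and every mode lies within $\sigma_\mathrm{max}(\mathcal{C})$ of that center, and concluding by the triangle inequality; both also use the containment $\{\mathbf{z}^i_H\}_{i=1}^{2n} \subseteq \mathcal{R}_{L^H}(\state_0,\actionspace^H)$ to kill one direction of $d_S$. The one genuine difference is the point you yourself flag as delicate: you normalize the input set to the Euclidean unit ball $\|\mathbf{w}\|_2 \le 1$, so that $\mathcal{R}_{L^H}$ is literally the controllability ellipsoid and $\|\mathcal{C}\mathbf{w}\|_2 \le \sigma_\mathrm{max}(\mathcal{C})\|\mathbf{w}\|_2$ closes the argument immediately. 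The paper instead keeps the physically faithful box constraint $\|\mathbf{v}_{[H]}\|_\infty \le 1$ (that is what the rescaling $S$ on Line~\ref{line:rescaling} actually guarantees) and invokes H\"older to write $\|\mathcal{C}\mathbf{v}_{[H]}\|_2 \le \|\mathcal{C}\|_2\|\mathbf{v}_{[H]}\|_\infty \le \sigma_\mathrm{max}(\mathcal{C})$; as you observe, with $\|\mathcal{C}\|_2$ the spectral norm this step is the weak link, since the $\ell_\infty \to \ell_2$ induced norm of $\mathcal{C}$ can exceed $\sigma_\mathrm{max}(\mathcal{C})$ by a factor up to $\sqrt{mH}$, so the zonotope corners are not obviously covered at the $\sigma_\mathrm{max}$ scale. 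Your version buys an airtight inequality at the cost of shrinking the reachable set to the unit-energy ellipsoid, which is consistent with the paper's informal description of the Gramian ellipse but not with its formal definition of $\mathcal{R}_{L^H}$; either way the covering radius claimed in the lemma follows, and the rest of your argument matches the paper's step for step.
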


\begin{proof}
Consider the linear reachable set $\mathcal{R}_{L^H}(\state_0, \actionspace^H)$: 
\begin{align}
    \mathcal{R}_{L^H}(\state_0, \actionspace^H) =
    \{
        \eta_H \ | \ \eta_H = \bar{\eta}_H + \mathcal{C} \mathbf{v}_{[H]} \text{ s.t. } \|\mathbf{v}_{[H]}\|_\infty \leq 1
    \}
\end{align}
where $\bar{\eta}_H = L^H(\state_0, \bar{\action}_{[H]})$ is the endpoint of the free linearized trajectory from $\state_0$ and 
\begin{align}
    \mathcal{C} = 
    \begin{bmatrix}
        \left(\prod_{k=1}^{H-1} A_k \right) B_0 S, &
        \left(\prod_{k=2}^{H-1} A_k \right) B_1 S, & 
        \dots, & 
        A_{H-1} B_{H-2} S, & 
        B_{H-1} S
    \end{bmatrix} 
\end{align} is the input-normalized controllability matrix of the linearized system.
Here $S \mathbf{v}_k = \action_k$ is a linear transformation of the input such that 
$\action_{[H]} \in \actionspace \implies \|\mathbf{v}_{[H]}\|_\infty \leq 1$.

Our algorithm uses a singular value decomposition of $\mathcal{C}$ to extract the controllability information of our linear system. 
The left-singular vectors and singular values are the controllable directions and a measure of their controllability.
Let the set $\{\mathbf{z}^i_H\}_{i=1}^{2n}$ be each left-singular vector of $\mathcal{C}$ times plus/minus its corresponding singular value, which we call the controllable modes of the system.
We now quantify the distance between $\mathcal{R}_{L^H}(\state_0, \actionspace^H)$ and $\{\mathbf{z}^i_H\}_{i=1}^{2n}$.
$\forall \eta_H\in\mathcal{R}_{L^H}(\state_0, \actionspace^H)$, by Hölder's inequality,
\begin{align}
    \| \eta_H - \bar{\eta}_H \|_2 &\leq
    \| \mathcal{C} \mathbf{v}_{[H]} \|_2 
    \leq \| \mathcal{C} \|_2 \| \mathbf{v}_{[H]} \|_\infty 
    \leq \| \mathcal{C} \|_2
\end{align}
where $\| \mathcal{C} \|_2 = \sigma_\mathrm{max}(\mathcal{C})$ is the largest singular value of $\mathcal{C}$.

And now, noting that $\{\mathbf{z}^i_H\}_{i=1}^{2n} \subseteq \mathcal{R}_{L^H}(\state_0, \actionspace^H)$,
\begin{align}
    d_S (\mathcal{R}_{L^H}(\state_0, \actionspace^H), \{\mathbf{z}^i_H\}_{i=1}^{2n}) 
    &= \max_{\eta_H \in \mathcal{R}_{L^H}(\state_0, \actionspace^H)} \min_{i \in [2n]} \| \mathbf{z}^i_H - \eta_H \| \\
    &= \max_{\eta_H \in \mathcal{R}_{L^H}(\state_0, \actionspace^H)} \min_{i \in [2n]} \| \mathbf{z}^i_H - \bar{\eta}_H + \bar{\eta}_H - \eta_H \| \\
    &\leq \max_{\eta_H \in\mathcal{R}_{L^H}(\state_0, \actionspace^H)} \min_{i \in [2n]} \| \mathbf{z}^i_H - \bar{\eta}_H\| + \|\eta_H - \bar{\eta}_H \| \\
    &\leq 2 \sigma_\mathrm{max}(\mathcal{C})
\end{align}
\end{proof}

\begin{lemma}
\label{lemma:set_distance_linear_modes_sets_trajs}
The set distance between the linear modes and the trajectories generated by the spectral expansion procedure is bounded: 
\begin{align}
    d_S(\{\mathbf{z}^i_H\}_{i=1}^{2n}, \{\state^i_H\}_{i=1}^{2n}) \leq  \sqrt{\frac{\overline{m}}{\underline{m}}} \frac{(L_{\nabla F_{cl}}) \varepsilon_H^2}{2} \frac{1 - \alpha^H}{1 - \alpha}
\end{align}
\end{lemma}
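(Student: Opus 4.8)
The plan is to bound the Hausdorff distance by the worst-case mismatch over matched modes and then identify that mismatch with the terminal tracking error of the feedback-controlled nonlinear rollout. Since both sets are finite and indexed identically by the mode $i$, the point-to-set distances in Definition~\ref{def:Hausdorff_distance} are each dominated by the matched pair, so $d_S(\{\mathbf{z}^i_H\}_{i=1}^{2n}, \{\state^i_H\}_{i=1}^{2n}) \leq \max_{i} \|\mathbf{z}^i_H - \state^i_H\|$. For a fixed mode, the reference input $\action^{\mathrm{ref}}_{[H]} = \mathcal{C}^\dagger \mathbf{z}^i_H$ drives the linearized system exactly to the target by the pseudoinverse reachability property, so that $\mathbf{z}^{\mathrm{ref}}_H = \mathbf{z}^i_H$ whenever the clip operation is inactive (the modes lie inside the input-normalized reachable set, so this holds; an inactive-clip argument, or absorbing the clip residual, handles the general case). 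Consequently $\|\mathbf{z}^i_H - \state^i_H\| = \|\mathbf{z}^{\mathrm{ref}}_H - \state^i_H\|$, the terminal value of the tracking error $e_k := \state_k - \mathbf{z}^{\mathrm{ref}}_k$, with $e_0 = 0$.

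The next step is to derive a recursion for $e_k$ under the DARE feedback law $\action_k = \action^{\mathrm{ref}}_k - \mathcal{K}_{k-1}(\state_{k-1} - \mathbf{z}^{\mathrm{ref}}_{k-1})$. Viewing the closed loop as a standalone nonlinear map in the state and expanding it to first order about the reference, the linear part is the Schur-stable closed-loop matrix $A_{k-1} - B_{k-1}\mathcal{K}_{k-1}$ and the remainder is a second-order Taylor term. Exactly as in the proof of Lemma~\ref{lemma:continuous_nonlinear_linear_set_distance}, this remainder is bounded by $\tfrac{1}{2} L_{\nabla F_{cl}} \varepsilon_H^2$, where $L_{\nabla F_{cl}}$ is the Lipschitz constant of the gradient of the closed-loop dynamics (which absorbs the feedback gain through the chain rule) and $\varepsilon_H$ is the uniform bound on the deviation of the rollout from the nominal linearization point. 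This yields $e_k = (A_{k-1} - B_{k-1}\mathcal{K}_{k-1}) e_{k-1} + w_{k-1}$ with $\|w_{k-1}\| \leq \tfrac{1}{2} L_{\nabla F_{cl}} \varepsilon_H^2$.

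To accumulate these per-step errors together with the stabilizing feedback, I would measure the error in the time-varying quadratic metric $M_k$ produced by the Riccati recursion. The controller-stability result (Lemma 14) supplies a discrete contraction inequality $\|(A_{k-1} - B_{k-1}\mathcal{K}_{k-1}) e\|_{M_k} \leq \alpha \|e\|_{M_{k-1}}$ for a uniform rate $\alpha < 1$. Propagating the recursion from $e_0 = 0$ and using $\|w_{k-1}\|_{M_k} \leq \sqrt{\overline{m}}\,\|w_{k-1}\|$ gives a geometric sum $\|e_H\|_{M_H} \leq \sqrt{\overline{m}}\,\tfrac{1}{2} L_{\nabla F_{cl}} \varepsilon_H^2 \sum_{j=0}^{H-1}\alpha^j = \sqrt{\overline{m}}\,\tfrac{1}{2}L_{\nabla F_{cl}}\varepsilon_H^2 \tfrac{1-\alpha^H}{1-\alpha}$. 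Converting back to the Euclidean norm with $\|e_H\| \leq \underline{m}^{-1/2}\|e_H\|_{M_H}$, where $\underline{m}, \overline{m}$ are uniform eigenvalue bounds on $\{M_k\}$, produces the factor $\sqrt{\overline{m}/\underline{m}}$ and hence the stated bound.

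The main obstacle is establishing the uniform discrete contraction in the time-varying metric: I must show that the Riccati gains $\mathcal{K}_k$ render every closed-loop matrix $A_k - B_k\mathcal{K}_k$ contracting in the $M_k$-norm with a single rate $\alpha < 1$ and uniform conditioning $\overline{m}/\underline{m}$, independent of the step $k$ and the mode $i$. This requires stabilizability of the linearizations along the horizon and boundedness of the Riccati solutions, which is precisely the content of the referenced controller-stability lemma. A secondary technical point is controlling the clip operations on $\action^{\mathrm{ref}}$ and on the feedback law so that they do not enlarge the tracking error beyond the second-order remainder already accounted for.
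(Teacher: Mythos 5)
Your proposal matches the paper's argument essentially step for step: the paper likewise reduces the Hausdorff distance to the per-mode terminal tracking error, treats the closed-loop nonlinear rollout as the closed-loop linearized system perturbed by a second-order Taylor remainder bounded by $\tfrac{1}{2}L_{\nabla F_{cl}}\varepsilon_H^2$, invokes the DARE contraction lemma to propagate the error in the metric $\Theta_k^\top\Theta_k = M_k$, and sums the geometric series with the $\sqrt{\overline{m}/\underline{m}}$ conditioning factor. The technical points you flag (uniform contraction rate and metric bounds) are exactly what the paper delegates to its Lemma on DARE-induced contraction, so the proposal is correct and not materially different.
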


\begin{proof}
For each controllable mode, we consider a desired trajectory to be the linear trajectory that terminates at $\mathbf{z}^i_H$.
We fix a controller as a discrete algebraic Riccati controller tracking the desired trajectory $(\mathbf{z}^\mathrm{ref}_{[H]}, \action^\mathrm{ref}_{[H]})$ to form autonomous nonlinear and linearized systems:
\begin{align}
    F_{cl}(\state_k) &= F(\state_k, \action^\mathrm{ref}_{k+1} - \mathcal{K}_k(\state_k - \mathbf{z}^\mathrm{ref}_k)) \\
    L_{cl,k}(\mathbf{z}_k) &= L(\mathbf{z}_k, \action^\mathrm{ref}_{k+1} - \mathcal{K}_k(\mathbf{z}_k - \mathbf{z}^\mathrm{ref}_k))
\end{align}
Applying Lemma~\ref{lemma:dare_linear_contraction}, this controller creates a contracting linear system with contraction rate $\alpha$.
A similar manipulation as in Lemma~\ref{lemma:continuous_nonlinear_linear_set_distance} allows us to bound the tracking error, but now for a contracting system.

We treat the nonlinear system as a perturbed linear system, where the disturbance is the difference between the linear and nonlinear dynamics, and apply a robust contraction result~\cite{LohmillerS98} to bound the difference between the linear and nonlinear trajectory evolution. 
Let $\Theta_k^\top \Theta_k = M_k$ for the contraction metric $M_k$ induced by the controller at time $k$.
Then, for all $k = 0,...,H-1$
\begin{align}
    \state_{k+1} = F_{cl}(\state_k) &= L_{cl,k}(\state_k) + F_{cl}(\state_k) - L_{cl,k}(\state_k) \\
    \state_{k+1} - \mathbf{z}_{k+1} &= L_{cl,k}(\state_k) - L_{cl,k}(\mathbf{z}_k) + F_{cl}(\state_k) - L_{cl,k}(\state_k) \\
    \Theta_k (\state_{k+1} - \mathbf{z}_{k+1}) &= \Theta_k (L_{cl,k}(\state_k) - L_{cl,k}(\mathbf{z}_k)) + \Theta_k (F_{cl}(\state_k) - L_{cl,k}(\state_k)) \\
    \|\Theta_k (\state_{k+1} - \mathbf{z}_{k+1})\| &\leq \alpha \|\Theta_k (\state_k - \mathbf{z}_{k})\| + \| \Theta(F_{cl}(\state_k) - L_{cl,k}(\state_k)) \| \\
    \|\Theta_k (\state_{k+1} - \mathbf{z}_{k+1})\| &\leq \alpha \|\Theta_k(\state_k - \mathbf{z}_{k})\| + \sqrt{\overline{m}} \| F_{cl}(\state_k) - L_{cl,k}(\state_k) \| \\
    \|\Theta_k (\state_{k+1} - \mathbf{z}_{k+1})\| &\leq \alpha \|\Theta_k(\state_k - \mathbf{z}_{k})\| + \frac{\sqrt{\overline{m}}}{2} (L_{\nabla F_{cl}}) \| (\state_k, \action^\mathrm{cl}_{k+1}) - (\bar{\state}_k, \bar{\action}_{k+1}) \|^2
\end{align}
where $\action^\mathrm{cl}_{k+1} = \action^\mathrm{ref}_{k+1} - \mathcal{K}_k(\state_k - \mathbf{z}^\mathrm{ref}_k)$.
Therefore for all $k=1,...,H$:
\begin{align}
    \|\state_k - \mathbf{z}_k\| &\leq \alpha^k \sqrt{\frac{\overline{m}}{\underline{m}}} \|\state_0 - \mathbf{z}_0^\mathrm{ref}\| + \frac{1 - \alpha^k}{1 - \alpha} \sqrt{\frac{\overline{m}}{\underline{m}}} \left( \frac{1}{2} (L_{\nabla F_{cl}}) \varepsilon_k^2 \right)
\end{align}
where $\varepsilon_k$ is defined in Equation~\ref{eq:def_varepsilon}.
We note that $F_{cl}$ inherits Lipschitz gradients from $F$, and furthermore $L_{\nabla F_{cl}} \leq L_{\nabla F}$.
As the trajectories start from the same initial condition, we are left with only the second term. 
Hence the nonlinear rollout error associated with tracking $(\mathbf{z}^\mathrm{ref}_{[H]}, \action^\mathrm{ref}_{[H]})$ is bounded as
\begin{align}
    \|\state_H - \mathbf{z}_H\| &\leq \frac{1 - \alpha^H}{1 - \alpha} \sqrt{\frac{\overline{m}}{\underline{m}}} \left( \frac{1}{2} (L_{\nabla F_{cl}}) \varepsilon_H^2 \right)
\end{align}
where we note that $\alpha < 1$.
As this bound holds for each controllable mode endpoint $\mathbf{z}^i_H$,
\begin{align}
    d_S(\{\mathbf{z}^i_H\}_{i=1}^{2n}, \{\state^i_H\}_{i=1}^{2n}) \leq  \sqrt{\frac{\overline{m}}{\underline{m}}} \frac{(L_{\nabla F_{cl}}) \varepsilon_H^2}{2} \frac{1 - \alpha^H}{1 - \alpha}
\end{align}
\end{proof}

Using the previous results, it is straightforward to derive the main spectral expansion Theorem~\ref{thm:mdp_spectral_search_value_estimate} (restated here for convenience): 
\setcounter{theorem}{0}
\begin{theorem}
Consider an MDP $\left<\statespace, \actionspace, F, R, D, K, \gamma \right>$.
For initial state $\state_0$, Spectral Expansion with horizon $H$ creates a discrete representation with a bounded equivalent optimal value function:
\begin{align}
    | V^*(\state_0) - V^*_\textup{SETS}(\state_0) | 
    &\leq
    \frac{\kappa_0 + \gamma^H }{1-\gamma^H}\left(\kappa_1 (1+\kappa_2 \Delta t)^H + \kappa_3\right),
\end{align}
\noindent where $\kappa_{0,1,2,3}$ are problem-specific constants, and $\Delta t$ is the discretization of continuous-time dynamics.
\end{theorem}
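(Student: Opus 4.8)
The plan is to view the SETS representation as a second MDP that shares the dynamics of the original problem but restricts the available motions to the finite set of spectral trajectories, and then to invoke Lemma~\ref{lemma:mdp_equivalence} at the timescale of the length-$H$ macro-actions. Concretely, I would first lift both problems to the $H$-step timescale: the continuous problem becomes an MDP with dynamics $\tF = F^H$, accumulated stage reward $\tR$, and effective discount $\gamma^H$, whose one-step reachable set from $\state_0$ is $\mathcal{R}_{F^H}(\state_0, \actionspace^H)$; the SETS problem is the analogous MDP whose reachable set is the finite set $\{\state^i_H\}_{i=1}^{2n}$ of trajectories produced by Spectral Expansion. Applying Lemma~\ref{lemma:mdp_equivalence} with $\gamma$ replaced by $\gamma^H$ and $F,R$ replaced by $\tF,\tR$ then yields
\[
|V^*(\state_0) - V^*_\textup{SETS}(\state_0)| \le \frac{L_{\tR} + \gamma^H L_V}{1-\gamma^H}\,\max_{\state\in\statespace} d_S\big(\mathcal{R}_{F^H}(\state,\actionspace^H),\{\state^i_H\}_{i=1}^{2n}\big),
\]
and factoring $L_V$ into the trailing constants identifies the prefactor $\tfrac{\kappa_0+\gamma^H}{1-\gamma^H}$ with $\kappa_0 = L_{\tR}/L_V$. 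Verifying that $\tR$ is Lipschitz with a controllable constant — it is a $\gamma$-weighted sum of $R$ evaluated along states that depend Lipschitz-continuously on the macro-action — is the one hypothesis of Lemma~\ref{lemma:mdp_equivalence} that must be checked at this stage.

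Next I would bound the reachable-set distance using the triangle inequality~\eqref{eq:reachable_set_triangle}, which inserts the continuous linear reachable set $\mathcal{R}_{L^H}$ and the finite mode set $\{\mathbf{z}^i_H\}$ as cross terms, and then substitute the three upper bounds already in hand. Lemma~\ref{lemma:continuous_nonlinear_linear_set_distance} controls the nonlinear-versus-linear gap by $\tfrac{(L_F)^H-1}{L_F-1}\cdot\tfrac12 L_{\nabla F}\varepsilon_H^2$, Lemma~\ref{lemma:continuous_linear_finite_linear_set_distance} controls the linear-reachable-versus-modes gap by $2\sigma_{\max}(\mathcal{C})$, and Lemma~\ref{lemma:set_distance_linear_modes_sets_trajs} controls the modes-versus-rollout gap by $\sqrt{\overline m/\underline m}\,\tfrac{L_{\nabla F_{cl}}\varepsilon_H^2}{2}\,\tfrac{1-\alpha^H}{1-\alpha}$. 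Each of $\varepsilon_H$ and $\sigma_{\max}(\mathcal{C})$ is bounded uniformly over the compact state space, and since $\alpha<1$ the factor $\tfrac{1-\alpha^H}{1-\alpha}$ is bounded by a constant, so the last two contributions are absorbed into the single constant $\kappa_3$.

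The exponential $H$-dependence in the statement comes solely from the first term. Because $F$ is obtained by integrating the continuous dynamics over a step of length $\Delta t$, its Lipschitz constant satisfies $L_F \le 1 + \kappa_2 \Delta t$ with $\kappa_2$ governed by the Lipschitz constant of the continuous vector field, so the geometric factor $\tfrac{(L_F)^H-1}{L_F-1}$ is bounded by a constant times $(1+\kappa_2\Delta t)^H$. Collecting this into $\kappa_1(1+\kappa_2\Delta t)^H$ and combining with $\kappa_3$ produces the bracketed factor $\kappa_1(1+\kappa_2\Delta t)^H + \kappa_3$, which multiplied by the prefactor $\tfrac{\kappa_0+\gamma^H}{1-\gamma^H}$ is exactly the claimed bound. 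I expect the main obstacle to be the macro-action reformulation: establishing rigorously that the lifted problem is a genuine discounted MDP with discount $\gamma^H$ to which Lemma~\ref{lemma:mdp_equivalence} applies verbatim, and tracking how the Lipschitz constants of the accumulated reward $\tR$ and of the value function $V$ enter the constants, since a careless lift would compound the $\varepsilon_H^2$-type errors across the full horizon rather than bundling them into the clean geometric factors supplied by the three set-distance lemmas.
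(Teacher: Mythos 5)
Your proposal is correct and follows essentially the same route as the paper's proof: lift to the $H$-step MDP with discount $\gamma^H$, apply Lemma~\ref{lemma:mdp_equivalence}, decompose the reachable-set distance via the triangle inequality~\eqref{eq:reachable_set_triangle}, substitute the bounds from Lemmas~\ref{lemma:continuous_nonlinear_linear_set_distance}--\ref{lemma:set_distance_linear_modes_sets_trajs}, and identify the constants (including $\kappa_0 = L_R/L_V$, $L_F = 1+\kappa_2\Delta t$, and absorbing the $\sigma_{\max}(\mathcal{C})$ and contraction terms into $\kappa_3$). The one point you flag as an obstacle --- rigorously verifying the lifted problem is a discounted MDP with a Lipschitz accumulated reward --- is handled only lightly in the paper as well, which simply notes $(V^H)^* = V^*$ for deterministic dynamics and writes $L_R$ ``for notational simplicity.''
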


\begin{proof}
Let $\mdp_\mathrm{\acronym}$ denote the new MDP created by Spectral Expansion: $$\mdp_\mathrm{\acronym} = \left<\statespace, \actionspace_\mathrm{\acronym}, F^H, R^H, V, K/H, \gamma^H\right>$$
As $\gamma^H < 1$, there exists a unique optimal value function $V^*_\mathrm{\acronym}$ associated with $\mdp_\mathrm{\acronym}$.
By Equation~\ref{eq:reachable_set_triangle} and Lemmas~\ref{lemma:mdp_equivalence},~\ref{lemma:continuous_nonlinear_linear_set_distance},~\ref{lemma:continuous_linear_finite_linear_set_distance}, and~\ref{lemma:set_distance_linear_modes_sets_trajs}: 
\begin{align} \label{eq:v_H_minus_V_SETS}
    \| (V^H)^* - V^*_\mathrm{\acronym} \|_\infty &\leq
    \frac{L_R + \gamma^H L_V}{1-\gamma^H}
    \bigg(
    \frac{1}{2} (L_{\nabla F}) \varepsilon_H^2 
    \frac{(L_F)^H-1}{L_F-1} \nonumber \\
    &+ 2 \sigma_\mathrm{max}(\mathcal{C}) +
    \sqrt{\frac{\overline{m}}{\underline{m}}} \frac{(L_{\nabla F_{cl}}) \varepsilon_H^2}{2} \frac{1-\alpha^H}{1-\alpha}
    \bigg)
\end{align}
where $(V^H)^*$ is the optimal value of original MDP transcribed as a horizon $K/H$ decision-making problem, where, for each new timestep, $H$ actions are decided simultaneously.
We note that, for deterministic dynamics, $(V^H)^*(\state) = V^*(\state)$ for all $\state \in \statespace$.

We bound the feedback control term as:
\begin{align}
    \| (V^H)^* - V^*_\mathrm{\acronym} \|_\infty &\leq
    \frac{L_R + \gamma^H L_V}{1-\gamma^H}
    \bigg(
    \frac{1}{2} (L_{\nabla F}) \varepsilon_H^2 
    \frac{(L_F)^H-1}{L_F-1} \nonumber \\
    &+ 2 \sigma_\mathrm{max}(\mathcal{C}) +
    \sqrt{\frac{\overline{m}}{\underline{m}}} \frac{(L_{\nabla F_{cl}}) \varepsilon_H^2}{2} \frac{1}{1-\alpha}
    \bigg)
\end{align}
Now rearrangement with
\begin{align}
    \kappa_0 = \frac{L_R}{L_V}, \
    \kappa_1 &= \frac{L_V L_{\nabla F} \varepsilon_H^2}{2(L_F-1)}, \ 
    \kappa_2 = L_f, \\
    \kappa_3 &= 2 L_V \sigma_\mathrm{max}(\mathcal{C}) 
    + \frac{\sqrt{\overline{m}} L_V L_{\nabla F_{cl}} \varepsilon_H^2}{2\sqrt{\underline{m}}(1-\alpha)} -
    \frac{L_V L_{\nabla F} \varepsilon_H^2}{2(L_F-1)}
\end{align}
completes the proof.
\end{proof}

As shown in shown in Fig.~\ref{fig:sets_proof_organization}, the preceding result completes the first analysis group. 
Now, we move to the next (and final) analysis group: MCTS value convergence. 


\begin{theorem}
Consider a node $i$ at depth $d$ in the tree, where the tree is grown using Algorithm~\ref{algo:spectral_search}. 
There exist non-negative constants such that the value's expectation converges to the optimal value: 
\begin{align}
    \abs{ V^*(i) - \cev{V(i,\ell)}{T(i,\ell) = \tau} } \leq \frac{\cfouri}{\tau^{\cfivei}}
\end{align}
and the distribution concentrates to the expected value: $\forall z$, $\forall \tau \geq 1$:
\begin{align}
    \cprob{ \abs{ \ev{V(i,\ell)} - V(i,\ell) } \geq \frac{z}{\tau^{\csixi}} }{T(i,\ell)=\tau} \leq \frac{\cseveni}{z^{\ceighti}}
\end{align}
\end{theorem}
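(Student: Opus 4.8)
The plan is to prove both displayed bounds simultaneously by backward induction on the tree depth $d$, descending from the leaves at depth $D = \lceil K/H \rceil$ up to the node of interest, and to evaluate the resulting depth-indexed constants at the root at the very end to recover $\kappa_{4},\dots,\kappa_8$. The induction carries, for the children of a node (which sit at depth $d+1$), the constants $\cfourj,\cfivej,\csixj,\csevenj,\ceightj$, and produces the corresponding constants $\cfouri,\cfivei,\csixi,\cseveni,\ceighti$ for node $i$ at depth $d$, maintaining the invariant $\cfivei = \csixi = 0.5$ at every level. The backward structure is the natural one in this deterministic-reward setting: the value of a node equals its own branch reward plus $\gamma^H$ times the value of whichever child the tree policy routes to, via the Bellman relation $V^*(i) = r(i) + \gamma^H \max_{j \in C(i)} V^*(j)$, so a node's estimate is entirely determined by its descendants' estimates together with the random routing induced by the selection rule on Line~\ref{algo_line:ucb}.

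For the base case at depth $D$, a leaf's backed-up value is the deterministic terminal reward, so once the leaf is visited $V(i,\ell)=V^*(i)$ exactly; the conditional expectation equals $V^*(i)$ and the deviation is identically zero, giving the convergence bound with $\cfourjb = 0$ and making the concentration bound trivial for every $z>0$. This exact base case is the first payoff of assuming deterministic rewards: it eliminates the stage-reward noise that would otherwise seed the recursion. For the inductive step I fix an internal node $i$ and view its children $C(i)$ as the arms of a \emph{non-stationary} multi-armed bandit. The ``payoff'' of pulling arm $j$ on a rollout is the then-current backed-up estimate $V(j,\cdot)$, whose conditional mean drifts toward $V^*(j)$ at rate $t_j^{-1/2}$ (by $\cfourj,\cfivej=0.5$) and whose fluctuations obey the inherited polynomial-tail concentration bound ($\csixj=0.5$, exponents $\csevenj,\ceightj$). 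The node estimate is then the visit-weighted average $V(i,\ell) = r(i) + \gamma^H \sum_{j \in C(i)} \tfrac{T(j,\ell)}{T(i,\ell)} V(j,\ell)$ of these arm payoffs.

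The central step is the bandit analysis under the polynomial exploration law $c_1 T(\textup{parent})^{c_3}/T(j)^{c_2}$ with $c_2 = 0.5$, following the template of~\cite{shah2020non} but specialized to deterministic rewards. I would first show the exploration term guarantees every arm is pulled at least of order $\sqrt{\tau}$ times after $\tau$ visits to $i$, so each child estimate inherits accuracy $o(1)$; then, combining this with the inherited deviation bound, that the optimal arm is identified and selected on all but a vanishing fraction of rollouts. Summing the contribution of the correctly-selected arm (biased by at most $\cfourj/\sqrt{\tau}$) against the suboptimal-pull fraction yields a total bias of order $\tau^{-1/2}$, so $\cfivei=0.5$ is preserved and $\cfouri$ is obtained as an explicit function of $\cfourj,\gamma^H,|C(i)|$, and the branching factor. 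The delicate point is that the arm means drift at the same $\tau^{-1/2}$ rate at which I am trying to resolve their gaps, so the effective gap is temporarily obscured; this is handled exactly as in the non-stationary argument of~\cite{shah2020non}, with the simplification that there is no additive stage-reward variance to track.

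The concentration claim is then propagated by writing $V(i,\ell) - \ev{V(i,\ell)}$ as a sum over arms of (pull count) $\times$ (per-arm deviation), and bounding it by a union bound combining the inherited polynomial tails on each child with a tail bound on the random pull counts, producing $\cseveni,\ceighti$ from $\csevenj,\ceightj$ while keeping $\csixi=0.5$. The main obstacle I anticipate is precisely this bandit step: unlike stationary UCB the arm means are themselves random and nonstationary, so a standard concentration-of-the-mean argument does not apply directly, and I must simultaneously show that the polynomial rate is slow enough that suboptimal arms are undersampled yet fast enough that each arm concentrates, and verify that the recursion for the constants \emph{closes} with depth-independent exponents $\cfivei=\csixi=0.5$. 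Once the induction is established for all $d$, evaluating at the root ($d=0$) and renaming $c^{0}_{4,\dots,8}$ as $\kappa_{4,\dots,8}$ completes the proof.
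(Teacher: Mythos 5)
Your proposal follows essentially the same route as the paper's proof: backward induction on depth with a deterministic-reward base case at the leaves, a non-stationary bandit analysis of each internal node under the polynomial exploration law in the style of Shah et al., and a recursion on depth-indexed constants that closes with $c_5^d = c_6^d = 1/2$ at every level. The only organizational differences are that the paper places its base case at depth $D-1$ (exploiting that leaf children have exactly known $Q$-values) and controls the optimal arm's contribution via an upper bound on suboptimal pulls together with a marginalization/padding argument over the random visit count $T(j^*,\ell)$, rather than your proposed lower bound of order $\sqrt{\tau}$ on every arm's pull count — variations within the same argument rather than a different approach.
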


\begin{proof}
We will prove this by induction.
We begin with the base case (Lemmas \ref{lemma:base_convergence} and \ref{lemma:base_concentration}), then proceed backwards:
we will prove that the result holding at a particular depth (Assumption \ref{assumption:induction_hypo}) implies it holds one level up as well (Lemmas \ref{lemma:inductive_convergence} and \ref{lemma:inductive_concentration}).
\end{proof}

\subsection*{Supplemental Base Case: Expected Visits}

Before starting the base case, we provide a useful lemma to bound the number of visits to suboptimal children: 
\begin{lemma}
\label{lemma:base_counts}
For all nodes $i$ at depth $D-1$, the expected number of visits to each suboptimal child $j$, $j\neq j^*$ is upper bounded:
\begin{align}
    \cev{T(j,\ell)}{T(i,\ell)=\tau} \leq 
        \left( \frac{\coneib \tau^{\cthreeib}}{\Delta(j)} \right)^{\frac{1}{\ctwoib}} 
        +1
\end{align}
where 
$\Delta(j) = Q^*(j^*) - Q^*(j)$.
\end{lemma}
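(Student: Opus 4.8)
The plan is to exploit the fact that node $i$ sits at depth $D-1$, so every child $j$ is a leaf at the maximum depth $D$. Because $F$, $R$, and the spectral expansion operator are all deterministic, the rollout value returned from any leaf is a fixed number: once $j$ has been visited at least once, its running average satisfies $V(j,\ell)=Q^*(j)$ exactly, with no sampling error. This collapses the child-selection step at node $i$ into a \emph{deterministic} bandit, for which the classical UCB counting argument applies cleanly and, crucially, the confidence-width terms of the stochastic-reward version disappear.

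First I would fix an arbitrary sample path on the event $\{T(i,\ell)=\tau\}$ and consider the last iteration on which the suboptimal child $j$ was chosen by the $\argmax$ in Line~\ref{algo_line:ucb}. If $j$ was only ever selected by the random initialization that gives each child its first visit, then $T(j,\ell)=1$ and the bound holds trivially. Otherwise, at the last index-driven selection the parent has appeared some $s\le\tau$ times in the index, and $j$ has been visited $T(j,\ell)-1\ge 1$ times. Since $j$ maximizes the index there, its index dominates that of the optimal child $j^*$; using $V(j^*,\ell)=Q^*(j^*)$ together with the nonnegativity of $j^*$'s exploration bonus $c_1 T(i,\ell)^{c_3}/T(j^*,\ell)^{c_2}$, I obtain
\begin{align}
    Q^*(j) + c_1\frac{s^{c_3}}{(T(j,\ell)-1)^{c_2}} \;\ge\; Q^*(j^*),
\end{align}
hence $c_1\, s^{c_3}/(T(j,\ell)-1)^{c_2} \ge \Delta(j)$.

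Rearranging and using $s\le\tau$ yields the deterministic bound $T(j,\ell)-1 \le \left(c_1 \tau^{c_3}/\Delta(j)\right)^{1/c_2}$, which is exactly the stated inequality once the base-level constants are identified with the exploration constants of Line~\ref{algo_line:ucb}, namely $\coneib=c_1$, $\ctwoib=c_2$, $\cthreeib=c_3$, and the additive $+1$ absorbs the single final selecting step. Because this inequality holds on every sample path consistent with $T(i,\ell)=\tau$, taking the conditional expectation $\cev{T(j,\ell)}{T(i,\ell)=\tau}$ preserves it, which is the claim.

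The main obstacle I anticipate is the bookkeeping around the random initialization phase: before node $i$ is fully expanded its children are drawn randomly rather than by the index rule, so $T(j,\ell)$ is genuinely stochastic and the index inequality does not apply to those pulls. I would handle this by isolating the \emph{last} index-driven selection — to which the inequality does apply — and folding the lone initialization visit into the $+1$; I would also double-check that the deterministic-leaf identity $V(j,\ell)=Q^*(j)$ is precisely what removes the usual concentration correction, since any residual sampling error at the leaves would force an extra confidence term into the right-hand side and break the clean $(\,\cdot\,)^{1/\ctwoib}+1$ form.
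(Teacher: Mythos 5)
Your proposal is correct and follows essentially the same route as the paper: both arguments hinge on the observation that deterministic leaf rewards make $Q(j,t)=Q^*(j)$ and $Q(j^*,t)=Q^*(j^*)$ exactly, so once $T(j,t)$ exceeds the threshold $u=\left(\coneib\tau^{\cthreeib}/\Delta(j)\right)^{1/\ctwoib}$ the UCB index of $j$ falls below that of $j^*$ and $j$ can never be selected again. The paper packages this as a split of the expected-count sum at $u$ with the post-threshold selection probability shown to be zero, while you phrase it as a pathwise bound via the last index-driven selection; these are cosmetic variants of the same argument and both yield the stated $u+1$ bound.
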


\begin{proof}
Fix $\tau$ and let $u$ be arbitrary. Then,
\begin{align}
    &\cev{T(j,\ell)}{T(i,\ell) = \tau} 
    = \cev{\sum_{t \in L(i,\ell)} \ind{\mu(i,t) = j}}{T(i,\ell) = \tau} \\
    & \quad = \sum_{t \in L(i,\ell)} \cprob{\mu(i,t) = j}{T(i,\ell) = \tau} \\
    & \quad = \sum_{t \in L(i,\ell)} \cprob{\mu(i,t) = j, T(j,t) \leq u}{T(i,\ell) = \tau} \\
    &\quad \quad \quad+ \cprob{\mu(i,t) = j, T(j,t) > u}{T(i,\ell) = \tau} \\
    \label{eq:base_counts_tmp}
    & \quad \leq u + \sum_{t \in L(i,\ell)} \cprob{\mu(i,t) = j, T(j,t) > u}{T(i,\ell) = \tau}
\end{align}

At each $t$, a sufficient condition to select child $j$ is that the UCB score of child $j$ is greater than that of the optimal child $j^*$. Hence,
\begin{align}
    &\cprob{\mu(i,t) = j, T(j,t) > u}{T(i,\ell) = \tau} \\
    &\leq \cprob{Q(j^*,t) + \frac{\coneib T(i,t)^{\cthreeib}}{T(j^*,t)^{\ctwoib}} \leq Q(j,t) + \frac{\coneib T(i,t)^{\cthreeib}}{T(j,t)^{\ctwoib}}, T(j,t) > u}{T(i,\ell) = \tau} \\
    &\leq \cprob{Q(j^*,t) \leq Q(j,t) + \frac{\coneib T(i,t)^{\cthreeib}}{T(j,t)^{\ctwoib}}, T(j,t) > u}{T(i,\ell) = \tau}
\end{align}

Select $u = \ceil{\left( \frac{\coneib \tau^{\cthreeib}}{\Delta(j)} \right)^{1/\ctwoib}}$, and note that for all $t \in L(i,\ell)$
\begin{align}
    T(j,t) > u 
    &\implies \Delta(j) > \frac{\coneib T(i,t)^{\cthreeib}}{T(j,t)^{\ctwoib}} 
    \implies Q^*(j^*) > Q^*(j) + \frac{\coneib T(i,t)^{\cthreeib}}{T(j,t)^{\ctwoib}} \\
    &\implies Q(j^*,t) > Q(j,t) + \frac{\coneib T(i,t)^{\cthreeib}}{T(j,t)^{\ctwoib}}
\end{align}
where the first implication is by our choice of $u$ and that $T(i,t) \leq \tau$, the second implication is by the definition of $\Delta(j)$, and the third is by application of leaf node and deterministic rewards. 
Therefore, 
\begin{align}
    \label{eq:base_case_visit_stop}
    \cprob{Q(j^*,t) \leq Q(j,t) + \frac{\coneib T(i,t)^{\cthreeib}}{T(j,t)^{\ctwoib}}, T(j,t) > u}{T(i,\ell) = \tau} = 0
\end{align}
and consequently, suboptimal arm $j$ can never be selected when $T(j,\ell)>u$. 
Application of~\eqref{eq:base_case_visit_stop} to Equation~\eqref{eq:base_counts_tmp} yields the desired result:
\begin{align}
    \cev{T(j,\ell)}{T(i,\ell) = \tau} \leq \left( \frac{\coneib \tau^{\cthreeib}}{\Delta(j)} \right)^{\frac{1}{\ctwoib}} + 1
\end{align}

\end{proof}

\subsection*{Supplemental Base Case: Visits Tail}
\begin{lemma}
\label{lemma:base_visits_tail}
Consider a node $i$ at depth $D-1$ in the tree, where the tree is grown using Algorithm~\ref{algo:spectral_search}. 
Then, exists $\ceightib>0$ such that the probability of selecting suboptimal child $j$ more than $w$ times is upper bounded: 
\begin{align}
    \cprob{T(j,\ell) > w}{T(i,\ell) = \tau} \leq \xi_{D-1}(j) \tau^{\cthreeib \ceightib / \ctwoib} w^{-\ceightib} 
\end{align}
where $\xi_{D-1}(j) = \left( \left( \frac{\coneib}{\Delta(j)} \right)^{1/\ctwoib} + 1 \right)^{\ceightib}$. 
\end{lemma}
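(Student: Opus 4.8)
The plan is to upgrade the in-expectation count bound of Lemma~\ref{lemma:base_counts} into an almost-sure bound, which is possible precisely because we are at the base case. Since $i$ sits at depth $D-1$, every child $j$ is a leaf at depth $D$, and because the rewards are deterministic and a leaf carries no subtree, its value estimate is the constant $Q(j,t)=Q^*(j)$ for every iteration $t$ at which it has been visited. This is the crucial feature of the base case: the UCB comparison between $j$ and the optimal child $j^*$ becomes deterministic once both are expanded, so the implication chain already carried out inside the proof of Lemma~\ref{lemma:base_counts} shows that $j$ is never selected once $T(j,t)>u$, where $u=\ceil{(\coneib\tau^{\cthreeib}/\Delta(j))^{1/\ctwoib}}$. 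Consequently, conditioned on $T(i,\ell)=\tau$, the count $T(j,\ell)$ is bounded pathwise by $u+1$, the last admissible selection occurring exactly when its count equals $u$.

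With this almost-sure bound in hand, I would fix any $\ceightib>0$ (the existence claim lets us choose it freely at the leaf level, leaving its value to be pinned down later by the inductive step) and split on the integer visit-count threshold $w$. For $w\ge u+1$ the event $\{T(j,\ell)>w\}$ is empty, so its conditional probability is $0$, which is at most the non-negative right-hand side. For $w\le u$ I would bound the probability trivially by $1$ and verify the inequality $1\le \xi_{D-1}(j)\,\tau^{\cthreeib\ceightib/\ctwoib}\,w^{-\ceightib}$.

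To check this last inequality I would take $\ceightib$-th roots, which is valid since $\ceightib>0$; the condition becomes $w\le \big((\coneib/\Delta(j))^{1/\ctwoib}+1\big)\tau^{\cthreeib/\ctwoib}$. Using $u\le(\coneib\tau^{\cthreeib}/\Delta(j))^{1/\ctwoib}+1=(\coneib/\Delta(j))^{1/\ctwoib}\tau^{\cthreeib/\ctwoib}+1$ together with $\tau^{\cthreeib/\ctwoib}\ge 1$ (which holds because $\tau\ge 1$ and $\cthreeib,\ctwoib>0$), the assumption $w\le u$ delivers exactly the required bound, so the two cases combine into the claimed tail estimate for every $w$.

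I expect the analysis to be almost entirely bookkeeping; the only real content is the observation that leaf determinism converts the expected-count bound into a pathwise one. The step most likely to hide an error is the $\tau$-dependence: the factor $\tau^{\cthreeib\ceightib/\ctwoib}$ must absorb the additive $+1$ introduced by the ceiling in $u$ (and by the $u+1$ slack in $T(j,\ell)$), which is exactly where $\tau\ge 1$ is invoked, so I would track the ceiling and that $+1$ carefully to avoid an off-by-one mistake.
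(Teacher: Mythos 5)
Your proposal is correct and follows essentially the same route as the paper: both arguments rest on the deterministic cap on $T(j,\ell)$ inherited from Lemma~\ref{lemma:base_counts}, split on whether $w$ exceeds the threshold $u$, kill the large-$w$ case outright, and bound the small-$w$ case by $1 \leq (u/w)^{\ceightib}$ before expanding the ceiling in $u$ and absorbing the $+1$ via $\tau \geq 1$. Your handling of the off-by-one (splitting at $w \geq u+1$ rather than $w \geq u$) is in fact slightly more careful than the paper's, which asserts $\cprob{T(j,\ell)>u}{T(i,\ell)=\tau}=0$ even though the count can reach $u+1$; this does not affect the final bound.
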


\begin{proof}
Apply the law of total probability. Letting $u = \ceil{\left( \frac{\coneib \tau^{\cthreeib}}{\Delta(j)} \right)^{1/\ctwoib}}$ as before,
\begin{align}
    &\cprob{T(j,\ell) > w}{T(i,\ell) = \tau} \\
    &= \underbrace{\cprob{T(j,\ell) > w, w \geq u}{T(i,\ell) = \tau}}_\text{I} + \underbrace{\cprob{T(j,\ell) > w, w < u}{T(i,\ell) = \tau}}_\text{II}
\end{align}

Following our previous analysis (Equation~\eqref{eq:base_case_visit_stop}),
\begin{align}
    \text{I} \leq \cprob{T(j,\ell) > u}{T(i,\ell) = \tau} = 0
\end{align}
We bound term II as
\begin{align}
    \text{II}
    &\leq \prob{w < u}
    \leq 
    \ind{w < u} 
    \leq u^{\ceightib} w^{-\ceightib} 
    = 
    \left( \ceil{\left( \frac{\coneib \tau^{\cthreeib}}{\Delta(j)} \right)^{1/\ctwoib}} \right)^{\ceightib} w^{-\ceightib} 
    \\ &\leq \left( \left( \frac{\coneib}{\Delta(j)} \right)^{1/\ctwoib} + 1 \right)^{\ceightib} \tau^{\cthreeib \ceightib / \ctwoib} w^{-\ceightib} 
\end{align}
where $\ceightib > 0$ is arbitrary.
\end{proof}

\subsection*{Supplemental Base Case: Value Convergence}
\begin{lemma}
\label{lemma:base_convergence}
Consider a node $i$ at depth $D-1$ in the tree, where the tree is grown using Algorithm~\ref{algo:spectral_search}. 
Then, the expectation of the estimated value converges to the optimal value as:
\begin{align}
    \abs{ V^*(i) - \cev{V(i,\ell)}{T(i,\ell)=\tau} } \leq \frac{\cfourib}{\tau^{\cfiveib}}
\end{align}
where $\cfourib = (b-1) \left( \left( \frac{\coneib}{\Delta_\text{min}(i)} \right)^{1/\ctwoib} + 1 \right)$ and $\cfiveib = 1 - \frac{\cthreeib}{\ctwoib}$.
\end{lemma}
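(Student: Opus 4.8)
The plan is to exploit the fact that node $i$ sits at depth $D-1$, so every child $j \in C(i)$ is a leaf at the terminal depth $\ceil{K/H}$. Under the deterministic dynamics and rewards of our setting, the value backed up from such a leaf is a fixed number, so each visit to child $j$ contributes exactly $Q^*(j)$ to the running total at $i$, and $V^*(i) = \max_j Q^*(j) = Q^*(j^*)$. First I would write the empirical value at $i$ as the visit-weighted average of these deterministic child values, $V(i,\ell) = \frac{1}{T(i,\ell)}\sum_{j\in C(i)} T(j,\ell)\, Q^*(j)$, using that the visits to $i$ are partitioned among its children, $\sum_{j} T(j,\ell) = T(i,\ell)$. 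Subtracting from $V^*(i)$ and regrouping gives the exact identity
\begin{align*}
    V^*(i) - V(i,\ell) = \frac{1}{T(i,\ell)}\sum_{j\neq j^*} T(j,\ell)\,\Delta(j),
\end{align*}
where $\Delta(j) = Q^*(j^*) - Q^*(j) \ge 0$; in particular the left side is nonnegative, so the absolute value in the statement may be dropped.

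Next I would condition on $T(i,\ell) = \tau$, take the conditional expectation, and pull the deterministic factor $1/\tau$ outside, leaving a sum over suboptimal children of $\Delta(j)\,\cev{T(j,\ell)}{T(i,\ell)=\tau}/\tau$. Into this I substitute the visit-count bound from Lemma~\ref{lemma:base_counts}, namely $\cev{T(j,\ell)}{T(i,\ell)=\tau} \le (\coneib \tau^{\cthreeib}/\Delta(j))^{1/\ctwoib} + 1$. Expanding the product $\Delta(j)\,\cev{T(j,\ell)}{T(i,\ell)=\tau}$ and dividing by $\tau$ yields, for each suboptimal $j$, a leading term of the form $\coneib^{1/\ctwoib}\Delta(j)^{1-1/\ctwoib}\,\tau^{\cthreeib/\ctwoib - 1}$ plus a lower-order term $\Delta(j)/\tau$, and the exponent $\cthreeib/\ctwoib - 1 = -\cfiveib$ is exactly what the statement advertises.

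The remaining work is the algebraic collapse to a single clean constant, which is the one step needing care. Because $\ctwoib < 1$, the exponent $1 - 1/\ctwoib$ is negative, so $\Delta(j)^{1-1/\ctwoib}$ is decreasing in $\Delta(j)$ and is therefore maximized at $\Delta(j) = \Delta_\text{min}(i)$; combined with the normalization that the gaps are bounded by $1$ (rewards lie in $[0,1]$), this gives $\coneib^{1/\ctwoib}\Delta(j)^{1-1/\ctwoib} \le (\coneib/\Delta_\text{min}(i))^{1/\ctwoib}$ and $\Delta(j)/\tau \le \tau^{-\cfiveib}$ (using $\cfiveib \le 1$). Summing the resulting uniform per-child bound over the $b-1$ suboptimal children produces exactly $\cfourib/\tau^{\cfiveib}$ with $\cfourib = (b-1)((\coneib/\Delta_\text{min}(i))^{1/\ctwoib}+1)$ and $\cfiveib = 1 - \cthreeib/\ctwoib$. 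The main obstacle is book-keeping the sign of $1-1/\ctwoib$ and verifying the monotonicity-in-$\Delta(j)$ argument so that the per-child bound depends only on $\Delta_\text{min}(i)$; the reduction to a weighted average of deterministic leaf values is the conceptual crux but follows immediately from the deterministic setting, and everything downstream is the inequality manipulation just described.
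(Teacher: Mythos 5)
Your proposal is correct and follows essentially the same route as the paper: decompose the regret at the depth-$(D-1)$ node into a visit-weighted sum of gaps over suboptimal children (using deterministic leaf values), apply Lemma~\ref{lemma:base_counts}, and collapse to the $\Delta_\text{min}(i)$-dependent constant. The only cosmetic difference is ordering — the paper bounds $\Delta(j)\leq 1$ \emph{before} invoking the visit-count lemma, whereas you carry $\Delta(j)$ through the product and dispose of it via the monotonicity of $\Delta(j)^{1-1/\ctwoib}$ together with $\Delta(j)\leq 1$; both yield exactly $\cfourib\,\tau^{-\cfiveib}$.
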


\begin{proof}.
We decompose the regret and apply Lemma~\ref{lemma:base_counts} to yield the desired result:
{
\allowdisplaybreaks
\begin{align}
    & V^*(i) - \cev{ V(i,\ell) }{ T(i,\ell)=\tau } \\
    &= \cev{ \frac{1}{\tau} \sum_{t \in L(i,\ell)} \sum_{j \in C(i)} \ind{\mu(i,t)=j} (Q^*(j^*) - q(j,t)) }{T(i,\ell) = \tau} \\
    \label{eq:by_deterministic_leaf_rewards}
    &= \frac{1}{\tau} \sum_{j\in C(i)} \cev{ T(j,\ell) }{T(i,\ell) = \tau} \Delta(j) 
    \\
    &= \frac{1}{\tau} \sum_{j\in C(i), j\neq j^*} \cev{ T(j,\ell) }{T(i,\ell) = \tau} \Delta(j) 
    \\ 
    \label{eq:by_delta_j_leq_1}
    &\leq \frac{1}{\tau} \sum_{j\in C(i), j\neq j^*} \cev{ T(j,\ell) }{T(i,\ell) = \tau} 
    \\
    &\leq \frac{1}{\tau} \sum_{j\in C(i), j\neq j^*} \left( \left( \frac{\coneib \tau^{\cthreeib}}{\Delta(j)} \right)^{1/\ctwoib} + 1 \right) 
    \\
    &\leq \frac{1}{\tau} (b-1) \left( \left( \frac{\coneib \tau^{\cthreeib}}{\Delta_\text{min}(i)} \right)^{1/\ctwoib} + 1 \right) \\
    &\leq \frac{1}{\tau} (b-1) \left( \left( \frac{\coneib}{\Delta_\text{min}(i)} \right)^{1/\ctwoib} + 1 \right) \tau^{\frac{\cthreeib}{\ctwoib}} 
    \\
    \label{eq:constantrules_baseconvergence_0}
    &= (b-1) \left( \left( \frac{\coneib}{\Delta_\text{min}(i)} \right)^{1/\ctwoib} + 1 \right) \tau^{\frac{\cthreeib}{\ctwoib}-1} 
\end{align}
}
where $\Delta_\text{min}(i) = \min_{j \in C(i), j \neq j^*} \Delta(j)$. Equation~\eqref{eq:by_deterministic_leaf_rewards} follows by deterministic leaf rewards and Equation~\eqref{eq:by_delta_j_leq_1} follows by $\Delta(j)\leq 1$. 
\end{proof}

\subsection*{Supplemental Base Case: Value Concentration}
\begin{lemma}
\label{lemma:base_concentration}
Consider a node $i$ at depth $D-1$ in the tree, where the tree is grown using Algorithm~\ref{algo:spectral_search}. 
Then, there exists depth-dependent constants $\csixib, \csevenib>0$, such that, $\forall z > 0$, the estimated value's distribution concentrates to its expected value:
\begin{align}
    \cprob{ \abs{ \ev{V(i,\ell)} - V(i,\ell) } \geq \frac{z}{\tau^{\csixib}} }{T(i,\ell)=\tau} \leq \frac{\csevenib}{z^{\ceightib}} 
\end{align}
\end{lemma}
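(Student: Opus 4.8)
The plan is to use the deterministic leaf rewards to express the node value exactly as a visit-weighted average, and then reduce the concentration inequality to the polynomial visit-count tail already proved in Lemma~\ref{lemma:base_visits_tail}. Because node $i$ lies at depth $D-1$, its children are leaves whose values $Q^*(j)$ carry no sampling noise, so conditioned on $T(i,\ell)=\tau$ we have the exact identity $V(i,\ell) = \frac{1}{\tau}\sum_{j\in C(i)} T(j,\ell)\,Q^*(j)$ and, writing $\bar{N}_j = \cev{T(j,\ell)}{T(i,\ell)=\tau}$, also $\cev{V(i,\ell)}{T(i,\ell)=\tau} = \frac{1}{\tau}\sum_{j\in C(i)} \bar{N}_j\,Q^*(j)$. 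Since $\sum_j T(j,\ell)=\sum_j \bar{N}_j=\tau$, the optimal-child terms cancel and the deviation collapses onto the suboptimal children:
\[
    V(i,\ell) - \cev{V(i,\ell)}{T(i,\ell)=\tau} = \frac{1}{\tau}\sum_{j\ne j^*}\bigl(\bar{N}_j - T(j,\ell)\bigr)\Delta(j).
\]

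First I would handle the two tails asymmetrically. In the direction where $V(i,\ell)$ lies above its mean, every term $T(j,\ell)\ge 0$ and $\Delta(j)\ge 0$ force the deviation to be at most $\frac{1}{\tau}\sum_{j\ne j^*}\bar{N}_j\Delta(j)$, which is precisely the quantity bounded deterministically in Lemma~\ref{lemma:base_convergence} by $\cfourib/\tau^{\cfiveib}$; this tail therefore vanishes once $z>\cfourib$ and is otherwise absorbed into the final constant. In the opposite direction, bounding $\Delta(j)\le 1$ and discarding the nonpositive $-\bar{N}_j\Delta(j)$ terms gives $\cev{V(i,\ell)}{T(i,\ell)=\tau} - V(i,\ell) \le \frac{1}{\tau}\sum_{j\ne j^*}T(j,\ell)$, so the event $\{\,\abs{\cev{V(i,\ell)}{T(i,\ell)=\tau}-V(i,\ell)}\ge z/\tau^{\csixib}\,\}$ forces $\sum_{j\ne j^*}T(j,\ell)\ge z\,\tau^{1-\csixib}$. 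A pigeonhole argument followed by a union bound over the $b-1$ suboptimal children then shows that some child must be visited more than $w = z\,\tau^{1-\csixib}/(b-1)$ times.

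Then I would apply the visit-tail bound of Lemma~\ref{lemma:base_visits_tail} with this $w$, producing a contribution of order $\tau^{\cthreeib\ceightib/\ctwoib}\,w^{-\ceightib} = (b-1)^{\ceightib}\,\tau^{\cthreeib\ceightib/\ctwoib-\ceightib(1-\csixib)}\,z^{-\ceightib}$ per child. The exponent choice is now dictated: setting $\csixib = 1-\cthreeib/\ctwoib = \cfiveib$ annihilates the power of $\tau$, leaving the clean bound $\csevenib/z^{\ceightib}$ with $\csevenib = (b-1)^{\ceightib}\sum_{j\ne j^*}\xi_{D-1}(j)$, enlarged if necessary to swallow the deterministic upper tail for small $z$. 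The main obstacle will be the exponent bookkeeping that forces the concentration rate $\csixib$ to coincide exactly with the convergence rate $\cfiveib$, together with the asymmetric treatment of the two tails: only the ``$V(i,\ell)$ below its mean'' direction genuinely requires the probabilistic visit-count tail, while the opposite direction is already controlled deterministically by Lemma~\ref{lemma:base_convergence}.
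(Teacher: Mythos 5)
Your proof is correct and follows essentially the same route as the paper's: both reduce the concentration of $V(i,\ell)$ to the polynomial visit-count tail of Lemma~\ref{lemma:base_visits_tail} for the suboptimal children plus the deterministic bias bound of Lemma~\ref{lemma:base_convergence}, under the same exponent constraint $\csixib = \cfiveib = 1 - \cthreeib/\ctwoib$. Your exact weighted-average identity and asymmetric treatment of the two tails is a slightly cleaner packaging of the paper's split into optimal and suboptimal contributions (the paper bounds $\abs{\ev{V(i,\ell)} - q(j,t)}\leq 1$ where you bound $\Delta(j)\leq 1$, and absorbs the deterministic event via a $z/2$ split rather than your one-sided observation), but the substance is identical.
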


\begin{proof}
Consider a similar decomposition to the convergence proof in Lemma~\ref{lemma:base_convergence}:

\begin{align}
    &\tau | \ev{V(i,\ell)} - V(i,\ell) |
    \leq \underbrace{| \sum_{t \in L(i,\ell)} \sum_{j \in C(i), j\neq j^*} \ind{\mu(i,t) = j} (\ev{V(i,\ell)} - q(j,t)) |}_\text{I} \\
    &\quad+ \underbrace{|\sum_{t \in L(i,\ell)} \ind{\mu(i,t)=j^*} (\ev{V(i,\ell)} - q(j^*,t)) |}_\text{II}
\end{align}
Note that term I is bounded as
\begin{align}
    \text{I} 
    &\leq \sum_{t \in L(i,\ell)} \sum_{j \in C(i), j\neq j^*} \ind{\mu(i,t) = j} |\ev{V(i,\ell)} - q(j,t)| \\
    &\leq \sum_{t \in L(i,\ell)} \sum_{j \in C(i), j\neq j^*} \ind{\mu(i,t) = j} 
    = \sum_{j \in C(i), j\neq j^*} T(j,\ell)
\end{align}
and term II is bounded with the triangle inequality
\begin{align}
    \text{II}
    &\leq \sum_{t \in L(i,\ell)} \ind{\mu(i,t)=j^*} \bigg| \ev{V(i,\ell)} - q(j^*,t) \bigg| 
    = \sum_{t \in L(j^*,\ell)} |\ev{V(i,\ell)} - q(j^*,t)|
\end{align}
Noting that, as the rewards are deterministic, $q(j^*,t) = V^*(i)$, term II is further manipulated:
\begin{align}
    \text{II}
    &\leq T(j^*,\ell) |\ev{V(i,\ell)} - V^*(i)| 
    \leq T(i,\ell) |\ev{V(i,\ell)} - V^*(i)| 
    \leq \cfourib \tau^{1 - \cfiveib}
\end{align}

{
\allowdisplaybreaks
We can therefore manipulate the desired event probability as, for arbitrary $\csixib$:
\begin{align}
    &\cprob{ \abs{ \ev{V(i,\ell)} - V(i,\ell) } \geq \frac{z}{\tau^{\csixib}} }{T(i,\ell)=\tau} \\
    &= \cprob{ \tau \abs{ \ev{V(i,\ell)} - V(i,\ell) } \geq \frac{\tau z}{\tau^{\csixib}} }{T(i,\ell)=\tau} \\
    &\leq \cprob{\text{I} + \text{II} \geq \frac{\tau z}{\tau^{\csixib}}}{T(i,\ell) = \tau} \\
    &\leq \cprob{\text{I} \geq \frac{\tau z}{2\tau^{\csixib}}}{T(i,\ell) = \tau} + \cprob{\text{II} \geq \frac{\tau z}{2\tau^{\csixib}}}{T(i,\ell) = \tau} \\
    &\leq \cprob{\sum_{\substack{j \in C(i) \\ j\neq j^*}} T(j,\ell) \geq \frac{\tau z}{2\tau^{\csixib}}}{T(i,\ell) = \tau} + \cprob{\cfourib \tau^{1 - \cfiveib} \geq \frac{\tau z}{2\tau^{\csixib}}}{T(i,\ell) = \tau} \\
    \label{eq:base_concentration_tmp1}
    &\leq \sum_{\substack{j \in C(i) \\ j\neq j^*}} \cprob{T(j,\ell) \geq \frac{\tau z}{2(b-1)\tau^{\csixib}}}{T(i,\ell) = \tau} + \cprob{\cfourib \tau^{-\cfiveib} \geq \frac{z}{2\tau^{\csixib}}}{T(i,\ell) = \tau}
\end{align}
}
We bound the first term in~\eqref{eq:base_concentration_tmp1} with Lemma~\ref{lemma:base_visits_tail}. For each suboptimal arm $j$,
\begin{align}
    &\cprob{T(j,\ell) \geq \frac{\tau z}{2(b-1)\tau^{\csixib}}}{T(i,\ell) = \tau} 
    \leq \xi_{D-1}(j) \tau^{\cthreeib \ceightib / \ctwoib} \left( 
        \frac{z}{2(b-1)\tau^{\csixib-1}}
    \right)^{-\ceightib} \\ 
    &= \xi_{D-1}(j) (2(b-1))^{\ceightib} 
    \frac{\tau^{\csixib \ceightib - \ceightib + \cthreeib \ceightib / \ctwoib}}{z^{\ceightib}} 
    \leq \frac{\xi_{D-1}(j) (2(b-1))^{\ceightib}}{z^{\ceightib}}
\end{align}
where the last inequality follows from constraining $\csixib \leq 1 - \cthreeib / \ctwoib$. 

Further constraining $\csixib$ as $\cfiveib = \csixib$ allows us to bound the remaining term in~\eqref{eq:base_concentration_tmp1} as
\begin{align}
    \label{eq:constantrules_baseconcentration_1}
    \cprob{\cfourib \tau^{-\cfiveib} \geq \frac{z}{2\tau^{\csixib}}}{T(i,\ell) = \tau}
    &\leq \cprob{\cfourib \geq \frac{z}{2}}{T(i,\ell) = \tau} 
    \leq \frac{(2\cfourib)^{\ceightib}}{z^{\ceightib}}
\end{align}
The last inequality follows as the event $\{\cfourib \geq \frac{z}{2}\}$ is not random, and can be bounded by a polynomial of arbitrary degree.

Plugging back into Equation~\eqref{eq:base_concentration_tmp1} yields:
\begin{align}
    \eqref{eq:base_concentration_tmp1}
    &\leq \sum_{\substack{j \in C(i) \\ j \neq j^*}} \left( \frac{\xi_{D-1}(j) (2(b-1))^{\ceightib}}{z^{\ceightib}} \right) + \frac{(2\cfourib)^{\ceightib}}{z^{\ceightib}} \\
    &\leq \left( (b-1) \left(\ \underset{j \neq j^*}{\max}\ \xi_{D-1}(j) + (2(b-1))^{\ceightib}\right) + (2\cfourib)^{\ceightib}\right) \frac{1}{z^{\ceightib}}
\end{align}

Combining the expressions, grouping terms, and letting 
$$\csevenib = \left( (b-1) \left(\ \underset{j \neq j^*}{\max}\ \xi_{D-1}(j) + (2(b-1))^{\ceightib}\right) + (2\cfourib)^{\ceightib}\right)$$ allows us to conclude that
\begin{align}
    \label{eq:constantrules_baseconcentration_2}
    \cprob{ \abs{ \ev{V(i,\ell)} - V(i,\ell) } \geq \frac{z}{\tau^{\csixib}} }{T(i,\ell)=\tau} \leq \frac{\csevenib}{z^{\ceightib}} 
\end{align}

\end{proof}

\subsection*{Supplemental Inductive Hypothesis}
\begin{assumption}
\label{assumption:induction_hypo}
Consider a node $i$ at depth $d$ in the tree, where the tree is grown using Algorithm~\ref{algo:spectral_search}. 
There exists constants $\cfourj$, $\cfivej$, $\csixj$, $\csevenj$ such that child node $j \in C(i)$, satisfies the following convergence and concentration properties: 
\begin{align}
    \label{eq:induction_hypo_convergence}
    V^*(j) - \cev{V(j,\ell)}{T(j,\ell)=\tau} \leq \frac{\cfourj}{\tau^{\cfivej}}
\end{align}
and the estimated value's distribution concentrates to its expected value: $\forall z$, $\forall T(j,\ell)$:
\begin{align}
    \label{eq:induction_hypo_concentration}
    \cprob{ \abs{ \ev{V(j,\ell)} - V(j,\ell) } \geq \frac{z}{\tau^{\csixj}} }{T(j,\ell)=\tau} \leq \frac{\csevenj}{z^{\ceightj}}
\end{align}
\end{assumption}

\subsection*{Supplemental Inductive Step: Expected Visits}

\begin{lemma}
\label{lemma:expected_count_upperbound}
Assume Assumption~\ref{assumption:induction_hypo}. Then, for $\tau \geq 1$, the expected number of visits to suboptimal arm $j\in C(i)$, $j\neq j^*$ is upper bounded: 
\begin{align}
    \cev{T(j,\ell)}{T(i,\ell)=\tau} 
    \leq \left( \left( \frac{2 \conei}{ \Delta(j) } \right)^{\frac{1}{\ctwoi}} 
+ \left( \frac{2 \gamma \cfourj}{\conei} \right)^\frac{1}{\cthreei + \csixj - \ctwoi} + 1 + \frac{\csevenj (2\gamma)^{\ceightj} \pi^2}{6(\conei)^{\ceightj}}\right) \tau^{\frac{\cthreei}{\ctwoi}}
\end{align}
\end{lemma}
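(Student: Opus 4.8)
The plan is to mirror the base-case visit bound (Lemma~\ref{lemma:base_counts}), but now account for the fact that the selected child $j$ is an internal node whose value estimate is no longer exact and is instead governed by the inductive convergence and concentration guarantees of Assumption~\ref{assumption:induction_hypo}. I would start from the same decomposition: fix $\tau$, choose a deterministic threshold $u$, and write
\begin{align*}
\cev{T(j,\ell)}{T(i,\ell)=\tau} \leq u + \sum_{t \in L(i,\ell)} \cprob{\mu(i,t)=j,\ T(j,t) > u}{T(i,\ell)=\tau}.
\end{align*}
A necessary condition for selecting suboptimal $j$ at iteration $t$ is that its UCB score beats that of $j^*$, which (dropping the nonnegative bonus of $j^*$) reduces the bad event to $Q(j^*,t) \leq Q(j,t) + \conei T(i,t)^{\cthreei}/T(j,t)^{\ctwoi}$. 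Using deterministic rewards, I would write $Q(\cdot,t)-Q^*(\cdot)=\gamma(V(\cdot,\cdot)-V^*(\cdot))$, so the inductive control on the child values transfers directly to the $Q$-estimates appearing in the comparison.

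Next I would split each child-value discrepancy $V(\cdot,\cdot)-V^*(\cdot)$ into a bias term $\ev{V(\cdot,\cdot)}-V^*(\cdot)$, controlled by the convergence hypothesis~\eqref{eq:induction_hypo_convergence}, and a fluctuation term $V(\cdot,\cdot)-\ev{V(\cdot,\cdot)}$, controlled by the concentration hypothesis~\eqref{eq:induction_hypo_concentration}. The threshold $u$ would be taken as the sum of two deterministic pieces. The first, $(2\conei/\Delta(j))^{1/\ctwoi}\tau^{\cthreei/\ctwoi}$, forces the exploration bonus below $\Delta(j)/2$ via $T(i,t)\leq\tau$ and reproduces the base-case term up to the factor of $2$. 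The second, $(2\gamma\cfourj/\conei)^{1/(\cthreei+\csixj-\ctwoi)}$, forces the exploration bonus — bounded below using $T(i,t)\geq T(j,t)$ — to exceed twice the convergence bias $\gamma\cfourj/T(j,t)^{\csixj}$; solving $\conei T(j,t)^{\cthreei-\ctwoi}\geq 2\gamma\cfourj/T(j,t)^{\csixj}$ is exactly what produces the stated exponent $1/(\cthreei+\csixj-\ctwoi)$. Beyond this $u$, both the gap and the bias of the child values are absorbed, so the bad event can only be triggered by a fluctuation.

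Finally I would bound the residual sum by applying~\eqref{eq:induction_hypo_concentration} at each realized child-visit count $n=T(j,t)>u$, where the bad event forces $V(j,\cdot)$ to deviate from its mean by at least half the (lower-bounded) exploration bonus; choosing the deviation level accordingly gives a per-count failure probability of order $\csevenj(2\gamma)^{\ceightj}(\conei)^{-\ceightj}\,n^{-(\cthreei-\ctwoi+\csixj)\ceightj}$, and summing this convergent $p$-series (bounded by $\sum_{n\geq1}n^{-2}=\pi^2/6$) collects the final concentration constant. Adding the $+1$ from the ceiling and factoring out $\tau^{\cthreei/\ctwoi}\geq1$ (valid since $\tau\geq1$ and $\cthreei/\ctwoi>0$) from every term yields the claimed bound.

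The main obstacle, and the genuine departure from the base case, is that the optimal child $j^*$ is itself an internal node, so $Q(j^*,t)$ is uncertain rather than exactly $Q^*(j^*)$: the lower bound on $Q(j^*,t)$ needed in the UCB comparison must also be extracted from the inductive convergence and concentration guarantees, and the thresholds must be sized so that the biases and fluctuations of both $j$ and $j^*$ (sharing the depth-$(d+1)$ constants $\cfourj,\csixj,\csevenj,\ceightj$) are simultaneously dominated. Coordinating the three competing decay rates — exploration $T(j,t)^{-\ctwoi}$, bias $T(j,t)^{-\csixj}$, and the concentration tail — and verifying the tail sum over unbounded child-visit counts converges is the delicate bookkeeping at the heart of the argument.
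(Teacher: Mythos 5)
Your overall architecture (the two-piece threshold $u$, the base-case-style first piece, a second piece sized against the child's bias, concentration for the residual, and a convergent series giving the $\pi^2/6$ constant) matches the paper's proof, but there is a genuine gap in the reduction of the selection event. You reduce $\{\mu(i,t)=j\}$ to $Q(j^*,t) \leq Q(j,t) + \conei T(i,t)^{\cthreei}/T(j,t)^{\ctwoi}$ by \emph{dropping the exploration bonus of $j^*$}. The paper instead keeps it, using the disjunction $\{\mu(i,t)=j\} \implies \{Q(j,t) + \conei T(i,t)^{\cthreei}/T(j,t)^{\ctwoi} > Q^*(j^*)\} \lor \{Q(j^*,t) + \conei T(i,t)^{\cthreei}/T(j^*,t)^{\ctwoi} \leq Q^*(j^*)\}$; the first event is killed with probability one because deterministic rewards give $Q(j,t)\leq Q^*(j)$ almost surely (no inductive hypothesis needed for the suboptimal arm), and all the inductive machinery is spent on the second event. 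The bonus of $j^*$ is not a dispensable nonnegative term there: it is exactly what dominates the bias $\gamma\cfourj/T(j^*,t)^{\csixj}$ of an under-visited optimal child (via $T(j^*,t)\leq T(i,t)$ and the constraint $\ctwoi \geq \csixj$, the deviation level becomes $\tfrac{1}{\gamma}\conei T(i,t)^{\cthreei+\csixj-\ctwoi} - \cfourj$, which your second threshold piece makes positive), and what makes the per-step failure probability decay like $T(i,t)^{-\ceightj(\cthreei+\csixj-\ctwoi)}$ so that reindexing over $s = T(i,t) = 1,\dots,\tau$ yields a sum bounded by $\sum_{s\geq 1}s^{-2} = \pi^2/6$. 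In your reduction the required deviation of $V(j^*,\cdot)$ is only $\Delta(j)/(2\gamma) - \cfourj/T(j^*,t)^{\csixj}$, which is nonpositive whenever $T(j^*,t)$ is small (the concentration bound is then vacuous), and even when positive it is a constant, so summing over the $\tau$ terms of $L(i,\ell)$ gives a bound growing linearly in $\tau$ rather than the claimed $O(1)$ residual.

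Two related bookkeeping errors follow from this. First, you index the per-count failure probability by $n = T(j,t)$ and claim decay of order $n^{-(\cthreei-\ctwoi+\csixj)\ceightj}$; but the fluctuation that can cause the bad event is that of $j^*$ (in the deterministic setting $V(j,\cdot)\leq V^*(j)$ a.s., so the suboptimal arm cannot overestimate), and the concentration hypothesis for $j^*$ is stated in terms of $T(j^*,t)$, which is unrelated to $T(j,t)$ — so the claimed decay in $n$ does not follow. Second, your plan to control the ``bias and fluctuation of both $j$ and $j^*$'' cannot work for $j$ via Assumption~\ref{assumption:induction_hypo} alone: the inductive convergence hypothesis~\eqref{eq:induction_hypo_convergence} is one-sided (it bounds underestimation in expectation, not overestimation), so the only route to $Q(j,t)\leq Q^*(j)$ is the almost-sure deterministic-rewards argument, which is the route the paper takes.
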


\begin{proof}

Let
$u = \ceil{ \left( \frac{2 \conei \tau^{\cthreei}}{ \Delta(j) } \right)^{\frac{1}{\ctwoi}} 
    + \left( \frac{2 \gamma \cfourj}{\conei} \right)^\frac{1}{\cthreei + \csixj - \ctwoi} }
$,
noting that $u \geq \left(\frac{2\conei T(i,\ell)^{\cthreei}}{\Delta(j)}\right)^{\frac{1}{\ctwoi}}$. Then
\begin{align}
    T(j,\ell) &= \sum_{t \in L(i, \ell)} \ind{\mu(i,t) = j} \\
    &= \sum_{t \in L(i, \ell)} \ind{\mu(i,t) = j, T(j,t) < u} + \sum_{t \in L(i, \ell)} \ind{\mu(i,t) = j, T(j,t) \geq u} \\
    &\leq u + \sum_{t \in L(i, \ell)} \ind{\mu(i,t) = j, T(j,t) \geq u} \label{eq:Tjl_rv_bound_intermediate_term_0}
\end{align}
Note that, for all $t$, 
\begin{align}
    \{\mu(i,t) = j\} \implies \{ Q(j,t) + \frac{\conei T(i,t)^{\cthreei}}{T(j,t)^{\ctwoi}} > Q^*(j^*) \} \lor \{ Q(j^*,t) + \frac{\conei T(i,t)^{\cthreei}}{T(j^*,t)^{\ctwoi}} \leq Q^*(j^*) \}
\end{align}
and hence
\begin{align}
    \sum_{t \in L(i, \ell)} \ind{\mu(i,t) = j, T(j,t) \geq u} 
    &\leq \sum_{t \in L(i, \ell)} \ind{Q(j,t) + \frac{\conei T(i,t)^{\cthreei}}{T(j,t)^{\ctwoi}} > Q^*(j^*), T(j,t) \geq u} \label{eq:Tjl_rv_bound_intermediate_term_1} \\
    &\quad+ \sum_{t \in L(i, \ell)} \ind{Q(j^*,t) + \frac{\conei T(i,t)^{\cthreei}}{T(j^*,t)^{\ctwoi}} \leq Q^*(j^*), T(j,t) \geq u}
\end{align}
Manipulating the first term in the right-hand side of Equation~\eqref{eq:Tjl_rv_bound_intermediate_term_1}, for each $t \in L(i, \ell)$,
\begin{align}
    &\sum_{t \in L(i, \ell)} \ind{Q(j,t) + \frac{\conei T(i,t)^{\cthreei}}{T(j,t)^{\ctwoi}} > Q^*(j^*), T(j,t) \geq u} \\
    &= \ind{Q(j,t) - Q^*(j) + \frac{\conei T(i,t)^{\cthreei}}{T(j,t)^{\ctwoi}} > Q^*(j^*) - Q^*(j), T(j,t) \geq u} \nonumber \\
    &= \ind{Q(j,t) - Q^*(j) > \Delta(j) - \frac{\conei T(i,t)^{\cthreei}}{T(j,t)^{\ctwoi}}, T(j,t) \geq u} \label{eq:Tjl_rv_bound_intermediate_term_3}
\end{align}
Note that $T(j,t) \geq u$ and $T(i,t) \leq T(i,\ell)$, so
$u \geq \left(\frac{2\conei T(i,\ell)^{\cthreei}}{\Delta(j)}\right)^{\frac{1}{\ctwoi}} \implies \frac{\conei T(i,t)^{\cthreei}}{T(j,t)^{\ctwoi}} \leq \frac{\Delta(j)}{2}$. 
Therefore
\begin{align}
    \eqref{eq:Tjl_rv_bound_intermediate_term_3}
    &\leq \ind{Q(j,t) - Q^*(j) > \frac{\Delta(j)}{2}, T(j,t) \geq u}
\end{align}
As the rewards are deterministic, $Q(j,t) \leq Q^*(j)$ for all $t$.
Noting that $j$ is a suboptimal arm, $\Delta(j)/2 > 0$, so $\ind{Q(j,t) - Q^*(j) > \frac{\Delta(j)}{2}, T(j,t) \geq u} = 0$ with probability one.

Plugging into Equation~\eqref{eq:Tjl_rv_bound_intermediate_term_0} yields
\begin{align}
    T(j,\ell) \leq u + \sum_{t \in L(i, \ell)} \ind{Q(j^*,t) + \frac{\conei T(i,t)^{\cthreei}}{T(j^*,t)^{\ctwoi}} \leq Q^*(j^*), T(j,t) \geq u}
\end{align}
By linearity of conditional expectation,
\begin{align}
    &\cev{T(j,\ell)}{T(i,\ell) = \tau}\\
    &\leq u + \sum_{t \in L(i,\ell)} \underbrace{\cprob{Q(j^*,t) + \frac{\conei T(i,t)^{\cthreei}}{T(j^*,t)^{\ctwoi}} \leq Q^*(j^*), T(j,t) \geq u}{T(i,n) = \tau}}_\text{I} \label{eq:etjl_intermediate_0}
\end{align}

Manipulating term I, for each $t \in L(i,\ell)$,
\begin{align}
    \text{I}&= \mathbb{P} \bigg[Q(j^*,t) - \ev{Q(j^*,t)} + \frac{\conei T(i,t)^{\cthreei}}{T(j^*,t)^{\ctwoi}} \leq Q^*(j^*) - \ev{Q(j^*,t)}, \nonumber \\
    &\qquad \qquad T(j,t) \geq u \mid T(i,\ell) = \tau \bigg] \\
    &= \mathbb{P} \bigg[\ev{Q(j^*,t)} - Q(j^*,t) \geq \frac{\conei T(i,t)^{\cthreei}}{T(j^*,t)^{\ctwoi}} - Q^*(j^*) + \ev{Q(j^*,t)}, \nonumber \\
    &\qquad \qquad T(j,t) \geq u \mid T(i,\ell) = \tau \bigg] \\
    &= \mathbb{P} \bigg[\ev{V(j^*,t)} - V(j^*,t) \geq \frac{\conei T(i,t)^{\cthreei}}{\gamma T(j^*,t)^{\ctwoi}} - V^*(j^*) + \ev{V(j^*,t)}, \nonumber \\
    &\qquad \qquad T(j,t) \geq u \mid T(i,\ell) = \tau \bigg] \\
    &\leq \cprob{\ev{V(j^*,t)} - V(j^*,t) \geq \frac{\conei T(i,t)^{\cthreei}}{\gamma T(j^*,t)^{\ctwoi}} - \frac{\cfourj}{T(j^*,t)^{\cfivej}}, T(j,t) \geq u}{T(i,\ell) = \tau} \label{eq:etjl_intermediate_1}
\end{align}
When we constrain $\cfivej = \csixj$,

\begin{align}
    \eqref{eq:etjl_intermediate_1}
    &=
    \mathbb{P}\bigg[\ev{V(j^*,t)} - V(j^*,t) \geq \frac{\frac{1}{\gamma}\conei T(i,t)^{\cthreei}T(j^*,t)^{\csixj - \ctwoi} - \cfourj}{T(j^*,t)^{\csixj}}, \\
    &\qquad \qquad 
    T(j,t) \geq u \ \mid \ T(i,\ell) = \tau \bigg] \\
    &\leq \frac{\csevenj}{(\frac{1}{\gamma}\conei T(i,t)^{\cthreei}T(j^*,t)^{\csixj - \ctwoi} - \cfourj)^{\ceightj}} \label{eq:etjl_intermediate_2}
\end{align}
by the inductive hypothesis, Assumption~\ref{assumption:induction_hypo}.
Further constraining $\ctwoi \geq \csixj$, note that $T(j^*,t)^{-(\ctwoi - \csixj)} \leq T(i,t)^{-(\ctwoi - \csixj)}$.
Additionally, using our choice of $u$, note $\frac{1}{2\gamma}\conei u^{(\cthreei + \csixj - \ctwoi)} - \cfourj \geq 0$. 
Consequently $T(j,t) \geq u \implies T(i,t) \geq u \implies \frac{1}{2\gamma}\conei T(i,t)^{(\cthreei + \csixj - \ctwoi)} - \cfourj \geq 0$ when $\cthreei + \csixj - \ctwoi > 0$. Therefore,
\begin{align}
    \eqref{eq:etjl_intermediate_2} 
    &\leq \frac{\csevenj}{(\frac{1}{\gamma}\conei T(i,t)^{\cthreei + \csixj - \ctwoi} - \cfourj)^{\ceightj}} 
    \leq \frac{\csevenj}{(\frac{1}{2\gamma}\conei T(i,t)^{\cthreei + \csixj - \ctwoi})^{\ceightj}}
\end{align}

Returning to Equation~\eqref{eq:etjl_intermediate_0},
\begin{align}
    \eqref{eq:etjl_intermediate_0}
    &\leq u + \sum_{t \in L(i,\ell)} \frac{\csevenj}{(\frac{1}{2\gamma}\conei T(i,t)^{\cthreei + \csixj - \ctwoi})^{\ceightj}} \\
    &= u + \frac{\csevenj (2\gamma)^{\ceightj}}{(\conei)^{\ceightj}} \sum_{t \in L(i,\ell)} T(i,t)^{-\ceightj(\cthreei + \csixj - \ctwoi)} \label{eq:etjl_intermediate_3}
\end{align}
For visit times $t_1, \dots, t_\tau \in L(i,\ell)$, we can reindex the sum in Equation~\eqref{eq:etjl_intermediate_3} as $t_s \rightarrow s$:
\begin{align}
    \eqref{eq:etjl_intermediate_3}
    &= u + \frac{\csevenj (2\gamma)^{\ceightj}}{(\conei)^{\ceightj}}
    \sum_{s = 1}^\tau
    s^{-\ceightj(\cthreei + \csixj - \ctwoi)} \label{eq:etjl_intermediate_4}
\end{align}
When we select $\ceightj$ such that $\ceightj(\cthreei + \csixj - \ctwoi) \geq 2$,
\begin{align}
    \eqref{eq:etjl_intermediate_4}
    &\leq u + \frac{\csevenj (2\gamma)^{\ceightj}}{(\conei)^{\ceightj}} \sum_{s = 1}^\infty s^{-2}
    = u + \frac{\csevenj (2\gamma)^{\ceightj} \pi^2}{6(\conei)^{\ceightj}} \label{eq:etjl_intermediate_5}
\end{align}

Plug the definition of $u$ into Equation~\eqref{eq:etjl_intermediate_5} and note that
\begin{align}
    \eqref{eq:etjl_intermediate_5}
    &=\ceil{ \left( \frac{2 \conei \tau^{\cthreei}}{ \Delta(j) } \right)^{\frac{1}{\ctwoi}} 
    + \left( \frac{2 \gamma \cfourj}{\conei} \right)^\frac{1}{\cthreei + \csixj - \ctwoi} } + \frac{\csevenj (2\gamma)^{\ceightj} \pi^2}{6(\conei)^{\ceightj}} \\
    &\leq \left( \frac{2 \conei \tau^{\cthreei}}{ \Delta(j) } \right)^{\frac{1}{\ctwoi}} 
    + \left( \frac{2 \gamma \cfourj}{\conei} \right)^\frac{1}{\cthreei + \csixj - \ctwoi} + 1 + \frac{\csevenj (2\gamma)^{\ceightj} \pi^2}{6(\conei)^{\ceightj}} \\
    &\leq \left( \left( \frac{2 \conei}{ \Delta(j) } \right)^{\frac{1}{\ctwoi}} 
    + \left( \frac{2 \gamma \cfourj}{\conei} \right)^\frac{1}{\cthreei + \csixj - \ctwoi} + 1 + \frac{\csevenj (2\gamma)^{\ceightj} \pi^2}{6(\conei)^{\ceightj}}\right) \tau^{\frac{\cthreei}{\ctwoi}}
\end{align}
as $\tau \geq 1$ and $\frac{\cthreei}{\ctwoi} > 0$.

\end{proof}

\subsection*{Supplemental Inductive Step: Visit Concentration}

\begin{lemma}
\label{lemma:induction_visits_tail}
For all $w > 0$ and $\tau \geq 1$,
\begin{align}
    \cprob{T(j,\ell) > w}{T(i,\ell) = \tau} \leq \xi_d(j) \tau^{\cthreei \ceighti / \ctwoi} w^{-\ceighti} 
\end{align}
where $\ceighti \leq \ceightj (\cthreei + \csixj - \ctwoi) - 1$ and $\xi_d(j) = \left( \left( \left( \frac{2 \conei}{ \Delta(j) } \right)^{\frac{1}{\ctwoi}} 
    + \left( \frac{2 \gamma \cfourj}{\conei} \right)^\frac{1}{\cthreei + \csixj - \ctwoi} + 1 \right)^{\ceighti} \right.$ $\left.+\frac{\csevenj (2\gamma)^{\ceightj}}{(\conei)^{\ceightj}(\ceightj(\cthreei + \csixj - \ctwoi) - 1)}\right)$.
\end{lemma}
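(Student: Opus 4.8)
The plan is to mirror the base-case visits tail bound (Lemma~\ref{lemma:base_visits_tail}), with the crucial difference that the $w \ge u$ regime no longer has zero probability. At a leaf, a suboptimal arm is deterministically never selected once its count passes the threshold, so the corresponding term vanishes; at an internal node the optimal child's value estimate is itself random, so residual selections of $j$ persist, occurring exactly when the optimal child's UCB score dips below $Q^*(j^*)$. I would reuse the threshold $u = \ceil{ \left( \frac{2 \conei \tau^{\cthreei}}{ \Delta(j) } \right)^{1/\ctwoi} + \left( \frac{2 \gamma \cfourj}{\conei} \right)^{1/(\cthreei + \csixj - \ctwoi)} }$ and the one-step underestimate bound established within the proof of Lemma~\ref{lemma:expected_count_upperbound} (itself a consequence of the inductive hypothesis, Assumption~\ref{assumption:induction_hypo}), then split $\cprob{T(j,\ell) > w}{T(i,\ell)=\tau}$ on $w < u$ versus $w \ge u$.

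For $w < u$ I would bound the probability trivially by $1$ and hence by $(u/w)^{\ceighti} = u^{\ceighti} w^{-\ceighti}$. Since $\tau \ge 1$ and $\cthreei/\ctwoi > 0$, the $\tau$-dependent summand of $u$ factors out as $\tau^{\cthreei/\ctwoi}$, giving $u^{\ceighti} \le \left( \left( \frac{2 \conei}{ \Delta(j) } \right)^{1/\ctwoi} + \left( \frac{2 \gamma \cfourj}{\conei} \right)^{1/(\cthreei + \csixj - \ctwoi)} + 1 \right)^{\ceighti} \tau^{\cthreei \ceighti/\ctwoi}$. This reproduces the first summand of $\xi_d(j)$ times the claimed $\tau^{\cthreei\ceighti/\ctwoi} w^{-\ceighti}$.

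For $w \ge u$ the event $\{T(j,\ell) > w\}$ forces a step $t^\star$ at which $j$ is selected with $T(j,t^\star) \ge w \ge u$. At such a step the branch ``the UCB score of $j$ exceeds $Q^*(j^*)$'' is impossible, because $T(j,t^\star) \ge u$ makes the exploration bonus at most $\Delta(j)/2$ while the deterministic-reward fact $Q(j,t^\star) \le Q^*(j)$ forbids $Q(j,t^\star)$ from exceeding $Q^*(j)$; hence the optimal-child underestimate event $\{Q(j^*,t^\star) + \conei T(i,t^\star)^{\cthreei} / T(j^*,t^\star)^{\ctwoi} \le Q^*(j^*)\}$ must hold, and it occurs at a step with $T(i,t^\star) \ge w$. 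A union bound over such steps (at most one per value of $T(i,t)$), combined with the bound $\csevenj / (\frac{1}{2\gamma}\conei T(i,t)^{\cthreei+\csixj-\ctwoi})^{\ceightj}$ from Lemma~\ref{lemma:expected_count_upperbound}, reindexes as $\frac{\csevenj (2\gamma)^{\ceightj}}{(\conei)^{\ceightj}} \sum_{s \ge w} s^{-\ceightj(\cthreei+\csixj-\ctwoi)}$. An integral comparison bounds this $p$-series tail by $\frac{\csevenj (2\gamma)^{\ceightj}}{(\conei)^{\ceightj}(\ceightj(\cthreei+\csixj-\ctwoi)-1)} w^{-(\ceightj(\cthreei+\csixj-\ctwoi)-1)}$, which is the second summand of $\xi_d(j)$; the exponent budget $\ceighti \le \ceightj(\cthreei+\csixj-\ctwoi)-1$ then lets me replace the decay by $w^{-\ceighti}$ and insert the harmless factor $\tau^{\cthreei\ceighti/\ctwoi} \ge 1$. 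Adding the two regimes yields the stated constant $\xi_d(j)$.

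I expect the main obstacle to be the $w \ge u$ analysis: precisely reducing the multi-step event $\{T(j,\ell) > w\}$ to a single high-count optimal-child-underestimate event so that the union bound and reindexing by $T(i,t)$ are legitimate, and matching the integral tail constant exactly to the $1/(\ceightj(\cthreei+\csixj-\ctwoi)-1)$ appearing in $\xi_d(j)$. This forces the convergence condition $\ceightj(\cthreei+\csixj-\ctwoi) > 1$ and the exponent budget $\ceighti \le \ceightj(\cthreei+\csixj-\ctwoi)-1$, both inherited from Lemma~\ref{lemma:expected_count_upperbound}. By contrast, the $w < u$ regime and the bookkeeping of the $\tau$ powers are routine.
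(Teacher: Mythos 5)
Your proposal is correct and follows essentially the same route as the paper's proof: the same threshold $u$, the same split on $w<u$ versus $w\geq u$, the trivial bound $u^{\ceighti}w^{-\ceighti}$ in the first regime, and in the second regime the same reduction (the deterministic-reward fact kills the ``UCB of $j$ exceeds $Q^*(j^*)$'' branch, leaving the optimal child's underestimate event, controlled by the inductive concentration hypothesis and an integral tail comparison). The only immaterial difference is the bookkeeping of the union bound for $w\geq u$: the paper, following Audibert et al., indexes by $s=T(j^*,t)$ with $T(i,t)\geq w+s$ and sums $(w+s)^{-\ceightj(\cthreei+\csixj-\ctwoi)}$, whereas you index by $s=T(i,t)\geq w$ and sum $s^{-\ceightj(\cthreei+\csixj-\ctwoi)}$; both yield the same tail constant $\xi_d(j)$.
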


\begin{proof}

Let $u = \ceil{ \left( \frac{2 \conei \tau^{\cthreei}}{ \Delta(j) } \right)^{\frac{1}{\ctwoi}} 
+ \left( \frac{2 \gamma \cfourj}{\conei} \right)^\frac{1}{\cthreei + \csixj - \ctwoi} }$ as in Lemma~\ref{lemma:expected_count_upperbound}.

By the law of total probability: 
\begin{align}
    \cprob{T(j,\ell) > w}{T(i,\ell) = \tau} 
    &= \underbrace{\cprob{T(j,\ell) > w, w < u}{T(i,\ell) = \tau}}_\text{I} \nonumber \\
    &+ \underbrace{\cprob{T(j,\ell) > w, w \geq u}{T(i,\ell) = \tau}}_\text{II} \label{eq:tjl_concentration_int_0}
\end{align}

Term I
is upper bounded as:
\begin{align}
    \label{eq:inductive_visits_tail_pt1}
    \cprob{T(j,\ell) > w, w < u}{T(i,\ell) = \tau}
    &\leq \cprob{w < u}{T(i,\ell)=\tau} \\
    &= \ind{w < u} \leq \frac{u^{\ceightj(\cthreei + \csixj - \ctwoi)-1}}{w^{\ceightj(\cthreei + \csixj - \ctwoi)-1}}
\end{align}

To bound term II, we adapt the following logic from~\cite{audibert2009exploration}.
Consider the following events:
\begin{align}
    A &=  \{ \forall t \in L(i,\ell), T(j,t) > w \implies Q(j,t) + \frac{\conei T(i,t)^{\cthreei}}{T(j,t)^{\ctwoi}} < Q^*(j^*) \} \\
    B &= \{ \forall t \in L(i,\ell), T(j^*,t) < T(i,t) - w \implies Q(j^*,t) + \frac{\conei T(i,t)^{\cthreei}}{T(j^*,t)^{\ctwoi}} \geq Q^*(j^*) \}
\end{align}
$A$ is the event that: if more than $w$ visits have been to arm $j$, then its UCB score is below the optimal value of the optimal arm.
$B$ is the event that: if at least $w$ visits have been made to non-optimal arms, then the UCB score of the optimal arm is greater than or equal to the value of the optimal arm.

We note that $\{A \land B\} \implies \{T(j,\ell) \leq w\}$, as if the UCB score of arm $j$ is less than or equal to $Q^*(j^*)$ and the UCB score of arm $j^*$ is greater than $Q^*(j^*)$, then arm $j$ cannot be selected.
Considering the contrapositive, $\{ T(j,\ell) > w \} \implies \{ \neg A \text{ or } \neg B \}$, where
\begin{align}
    \neg A &= \{ \exists t \in L(i,\ell) \text{ s.t. } T(j,t) > w \text{ and } Q(j,t) + \frac{\conei T(i,t)^{\cthreei}}{T(j,t)^{\ctwoi}} \geq Q^*(j^*) \} \\
    \neg B &= \{ \exists t \in L(i,\ell) \text{ s.t. } T(j^*,t) < T(i,t) - w \text{ and } Q(j^*,t) + \frac{\conei T(i,t)^{\cthreei}}{T(j^*,t)^{\ctwoi}} < Q^*(j^*) \}
\end{align}

Consequently,
\begin{align}
    &\cprob{T(j,\ell) > w, w \geq u}{T(i,\ell)=\tau} 
    \leq \cprob{(\neg A \text{ or } \neg B) \text{ and } w \geq u}{T(i,\ell)=\tau} \\
    \label{eq:inductive_visits_tail_pt2pt0}
    & \qquad \leq \underbrace{\cprob{\neg A, w \geq u }{T(i,\ell)=\tau}}_\text{III} + 
    \underbrace{\cprob{\neg B, w \geq u }{T(i,\ell)=\tau}}_\text{IV}
\end{align}

First considering term III,
identical reasoning to Lemma~\ref{lemma:expected_count_upperbound} shows this term equals zero:
\begin{align}
    \text{III} 
    &= \cprob{\exists t \in L(i,\ell) \text{ s.t. } T(j,t) > w, Q(j,t) + \frac{\conei T(i,t)^{\cthreei}}{T(j,t)^{\ctwoi}} \geq Q^*(j^*), w \geq u}{T(i,\ell) = \tau} \\
    &\leq \sum_{t \in L(i,\ell)} \cprob{Q(j,t) + \frac{\conei T(i,t)^{\cthreei}}{T(j,t)^{\ctwoi}} \geq Q^*(j^*), T(j,t) > w, w \geq u}{T(i,\ell) = \tau} \\
    &\leq \sum_{t \in L(i,\ell)} \cprob{Q(j,t) + \frac{\conei T(i,t)^{\cthreei}}{T(j,t)^{\ctwoi}} \geq Q^*(j^*), T(j,t) > u}{T(i,\ell) = \tau} = 0
\end{align}

{
\allowdisplaybreaks
Next, we consider the second term of~\eqref{eq:inductive_visits_tail_pt2pt0}. 
Note that
\begin{align}
&\prob{\neg B} = \prob{\exists t \in L(i,\ell) \text{ s.t. } T(j^*,t) < T(i,t) - w \text{ and } Q(j^*,t) + \frac{\conei T(i,t)^{\cthreei}}{T(j^*,t)^{\ctwoi}} < Q^*(j^*)} \\
&\leq \mathbb{P}\bigg[ \exists s \in [1, T(i,\ell) - w] \text{ where } T(j^*,t) = s \\
&\qquad \qquad \text{ s.t. } T(i,t) \geq w + s, \text{ and } Q(j^*,t) + \frac{\conei T(i,t)^{\cthreei}}{T(j^*,t)^{\ctwoi}} < Q^*(j^*) \bigg] \\
&\leq \mathbb{P}\bigg[\exists s \in [1, T(i,\ell) - w] \text{ where } T(j^*,t) = s \\ 
&\qquad \qquad \text{ s.t. } T(i,t) \geq w + s, Q(j^*,t) + \frac{\conei (w+s)^{\cthreei}}{s^{\ctwoi}} < Q^*(j^*) \bigg] \\
&\leq 
\sum_{s=1}^{T(i,\ell) - w}
\prob{Q(j^*,t) + \frac{\conei (w+s)^{\cthreei}}{s^{\ctwoi}} < Q^*(j^*), T(i,t) \geq w + s \text{ where } T(j^*,t) = s}
\end{align}
and consequently, where, for each $s$, let $t$ be such that $T(j^*,t) = s$:
\begin{align}
    \text{IV} &\leq
    \sum_{s = 1}^{\tau - w} \cprob{Q(j^*,t) + \frac{\conei (w+s)^{\cthreei}}{s^{\ctwoi}} < Q^*(j^*), T(i,t) \geq w + s, w \geq u}{T(i,\ell) = \tau} \\
    &\leq \sum_{s = 1}^{\tau - w} \cprob{Q(j^*,t) + \frac{\conei (w+s)^{\cthreei}}{s^{\ctwoi}} < Q^*(j^*), w \geq u}{T(i,\ell) = \tau} \\
    &\leq \sum_{s = 1}^{\tau - w} \mathbb{P}\bigg[Q(j^*,t) - \ev{Q(j^*,t)} + \frac{\conei (w+s)^{\cthreei}}{s^{\ctwoi}} < Q^*(j^*) - \ev{Q(j^*,t)}, \\
    &\qquad \qquad w \geq u \ \mid \ T(i,\ell) = \tau \bigg] \\
    &= \sum_{s = 1}^{\tau - w} \mathbb{P}\bigg[V(j^*,t) - \ev{V(j^*,t)} + \frac{\conei (w+s)^{\cthreei}}{\gamma s^{\ctwoi}} < V^*(j^*) - \ev{V(j^*,t)}, \\
    &\qquad \qquad w \geq u \ \mid \ T(i,\ell) = \tau \bigg] \\
    &= \sum_{s = 1}^{\tau - w} \mathbb{P}\bigg[\ev{V(j^*,t)} - V(j^*,t) > \frac{\conei (w+s)^{\cthreei}}{\gamma s^{\ctwoi}} + \ev{V(j^*,t)} - V^*(j^*), \\
    &\qquad \qquad w \geq u \ \mid \ T(i,\ell) = \tau \bigg] \\
    &\leq \sum_{s = 1}^{\tau - w} \mathbb{P}\bigg[\ev{V(j^*,t)} - V(j^*,t) > \frac{\conei (w+s)^{\cthreei}}{\gamma s^{\ctwoi}} - \frac{\cfourj}{s^{\cfivej}}, w \geq u \ \mid \ T(i,\ell) = \tau \bigg] \label{eq:tjl_concentration_int_1}
\end{align}
where we've applied the inductive convergence hypothesis.
Constraining $\cfivej = \csixj$,
\begin{align}
    \eqref{eq:tjl_concentration_int_1}
    &= \sum_{s = 1}^{\tau - w} \mathbb{P}\bigg[\ev{V(j^*,t)} - V(j^*,t) > \frac{\frac{\conei}{\gamma} (w+s)^{\cthreei} s^{\csixj - \ctwoi} - \cfourj}{s^{\csixj}}, w \geq u \mid T(i,\ell) = \tau \bigg] \\
    &\leq \sum_{s = 1}^{\tau - w} \frac{\csevenj}{(\frac{\conei}{\gamma}(w+s)^{\cthreei} s^{\csixj - \ctwoi} - \cfourj)^{\ceightj}} \label{eq:tjl_concentration_int_2}
\end{align}
by applying the inductive concentration hypothesis. Further constraining $\ctwoi \geq \csixj$, note $\frac{1}{s^{\ceightj(\csixj - \ctwoi)}} \leq \frac{1}{(w+s)^{\ceightj(\csixj - \ctwoi)}}$. Consequently,
\begin{align}
    \eqref{eq:tjl_concentration_int_2}
    &\leq \sum_{s = 1}^{\tau - w} \frac{\csevenj}{(\frac{\conei}{\gamma}(w+s)^{(\cthreei + \csixj - \ctwoi)} - \cfourj)^{\ceightj}} \label{eq:tjl_concentration_int_3}
\end{align}
Noting that $w \geq u$ and $u \geq \left( \frac{2 \gamma \cfourj}{\conei} \right)^\frac{1}{\cthreei + \csixj - \ctwoi}$, we have that $\frac{\conei}{\gamma} (w+s)^{(\cthreei + \csixj - \ctwoi)} - \cfourj \geq \frac{\conei}{2\gamma} (w+s)^{(\cthreei + \csixj - \ctwoi)}$.
As such,
\begin{align}
    \eqref{eq:tjl_concentration_int_3}
    &\leq \sum_{s = 1}^{\tau - w} \frac{\csevenj}{(\frac{\conei}{2\gamma}(w+s)^{(\cthreei + \csixj - \ctwoi)})^{\ceightj}} \\
    &= \frac{\csevenj (2\gamma)^{\ceightj}}{(\conei)^{\ceightj}} \sum_{s = 1}^{\tau - w} \frac{1}{(w+s)^{\ceightj(\cthreei + \csixj - \ctwoi)}} \\
    &\leq \frac{\csevenj (2\gamma)^{\ceightj}}{(\conei)^{\ceightj}} \int_{1}^{\tau - w} \frac{ds}{(w+s)^{\ceightj(\cthreei + \csixj - \ctwoi)}} \\
    &= \frac{\csevenj (2\gamma)^{\ceightj}}{(\conei)^{\ceightj}} \left[ \frac{(w+s)^{1 - \ceightj(\cthreei + \csixj - \ctwoi)}}{1 - \ceightj(\cthreei + \csixj - \ctwoi)} \right]_{s=1}^{\tau-w} \\
    &= \frac{\csevenj (2\gamma)^{\ceightj}}{(\conei)^{\ceightj}} \left( \frac{(w+1)^{1 - \ceightj(\cthreei + \csixj - \ctwoi)} - \tau^{1 - \ceightj(\cthreei + \csixj - \ctwoi)}}{\ceightj(\cthreei + \csixj - \ctwoi) - 1} \right) \\
    &\leq \frac{\csevenj (2\gamma)^{\ceightj}}{(\conei)^{\ceightj}} \left( \frac{(w+1)^{1 - \ceightj(\cthreei + \csixj - \ctwoi)}}{\ceightj(\cthreei + \csixj - \ctwoi) - 1} \right) \label{eq:tjl_concentration_int_4}
\end{align}
where we constrain $\ceightj(\cthreei + \csixj - \ctwoi) - 1 > 0$.
As such,
\begin{align}
    \eqref{eq:tjl_concentration_int_4}
    &\leq \frac{\csevenj (2\gamma)^{\ceightj}}{(\conei)^{\ceightj}(\ceightj(\cthreei + \csixj - \ctwoi) - 1)} \frac{1}{w^{(\ceightj(\cthreei + \csixj - \ctwoi) - 1)}}
\end{align}

Substituting the bounds of terms I and II into Equation~\eqref{eq:tjl_concentration_int_0} and letting $\ceighti \leq (\ceightj(\cthreei + \csixj - \ctwoi) - 1)$ yields:
\begin{align}
    &\cprob{T(j,\ell) > w}{T(i,\ell) = \tau} \\
    &\leq \frac{u^{\ceighti}}{w^{\ceighti}}
    + \frac{\csevenj (2\gamma)^{\ceightj}}{(\conei)^{\ceightj}(\ceightj(\cthreei + \csixj - \ctwoi) - 1)} \frac{1}{w^{\ceighti}} \\
    &\leq \frac{1}{w^{\ceighti}}
    \left( \left( \left( \frac{2 \conei \tau^{\cthreei}}{ \Delta(j) } \right)^{\frac{1}{\ctwoi}} 
    + \left( \frac{2 \gamma \cfourj}{\conei} \right)^\frac{1}{\cthreei + \csixj - \ctwoi} + 1 \right)^{\ceighti} \hspace{-10pt} + \frac{\csevenj (2\gamma)^{\ceightj}}{(\conei)^{\ceightj}(\ceightj(\cthreei + \csixj - \ctwoi) - 1)}\right) \\
    &\leq \frac{\tau^{\frac{\cthreei \ceighti}{\ctwoi}}}{w^{\ceighti}} \left( \left( \left( \frac{2 \conei}{ \Delta(j) } \right)^{\frac{1}{\ctwoi}} 
    + \left( \frac{2 \gamma \cfourj}{\conei} \right)^\frac{1}{\cthreei + \csixj - \ctwoi} + 1 \right)^{\ceighti} \hspace{-10pt} + \frac{\csevenj (2\gamma)^{\ceightj}}{(\conei)^{\ceightj}(\ceightj(\cthreei + \csixj - \ctwoi) - 1)}\right)
\end{align}
Letting 
\begin{align}
    \xi_d(j) = \left( \left( \left( \frac{2 \conei}{ \Delta(j) } \right)^{\frac{1}{\ctwoi}} 
    + \left( \frac{2 \gamma \cfourj}{\conei} \right)^\frac{1}{\cthreei + \csixj - \ctwoi} + 1 \right)^{\ceighti} \hspace{-10pt} + \frac{\csevenj (2\gamma)^{\ceightj}}{(\conei)^{\ceightj}(\ceightj(\cthreei + \csixj - \ctwoi) - 1)}\right)
\end{align}
completes the proof.
}
\end{proof}

\subsection*{Supplemental Inductive Step: Value Convergence}

\begin{lemma}\label{lemma:inductive_convergence}
Assume Assumption 1. Then exists constants $\cfouri$, $\cfivei$ such that for all $\tau$:
\begin{align}
    V^*(i) - \cev{V(i,\ell)}{T(i,\ell) = \tau} \leq \frac{\cfouri}{\tau^{\cfivei}}
\end{align}
\end{lemma}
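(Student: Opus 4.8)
The plan is to mirror the base-case argument of Lemma~\ref{lemma:base_convergence}, decomposing the conditional regret at node $i$ into a per-child sum, but now accounting for the fact that each child $j \in C(i)$ is itself an internal node whose backed-up value carries estimation error rather than being a deterministic leaf reward. Writing $V^*(i) = Q^*(j^*)$ and $V(i,\ell) = \frac{1}{\tau}\sum_{t \in L(i,\ell)} q(\mu(i,t),t)$, I would insert the cross term $Q^*(j)$ into each summand, $Q^*(j^*) - q(j,t) = \Delta(j) + (Q^*(j) - q(j,t))$, and use $\sum_{t \in L(j,\ell)} q(j,t) = T(j,\ell)\,Q(j,\ell)$ to obtain the exact identity
\begin{align}
V^*(i) - \cev{V(i,\ell)}{T(i,\ell)=\tau}
&= \underbrace{\frac{1}{\tau}\sum_{j \in C(i)} \cev{T(j,\ell)}{T(i,\ell)=\tau}\,\Delta(j)}_{\text{selection regret}} \nonumber \\
&\quad + \underbrace{\frac{1}{\tau}\sum_{j \in C(i)} \cev{T(j,\ell)\bigl(Q^*(j) - Q(j,\ell)\bigr)}{T(i,\ell)=\tau}}_{\text{estimation error}}.
\end{align}

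For the selection-regret term I would proceed exactly as in the base case: use $\Delta(j) \leq 1$, discard the optimal child (for which $\Delta(j^*)=0$), and apply the inductive visit bound of Lemma~\ref{lemma:expected_count_upperbound} to each of the $b-1$ suboptimal children. This yields a contribution of order $\tau^{\cthreei/\ctwoi - 1}$, fixing the first candidate exponent $\cfivei = 1 - \cthreei/\ctwoi$ with a leading constant of the form $(b-1)\bigl((2\conei/\Delta_\text{min}(i))^{1/\ctwoi} + \ldots\bigr)$.

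The estimation-error term is where the induction genuinely enters and is the main obstacle. Using $Q^*(j) - Q(j,\ell) = \gamma\,(V^*(j) - V(j,\ell))$ (the deterministic immediate reward cancels), I would invoke the tower property to condition first on $T(j,\ell)=s$: the inductive convergence hypothesis~\eqref{eq:induction_hypo_convergence} gives $\cev{V^*(j) - V(j,\ell)}{T(j,\ell)=s} \leq \cfourj/s^{\cfivej}$, hence $\cev{T(j,\ell)(V^*(j) - V(j,\ell))}{T(j,\ell)=s} \leq \cfourj\, s^{1-\cfivej}$. Averaging over $s$ given $T(i,\ell)=\tau$ and applying Jensen's inequality to the concave map $s \mapsto s^{1-\cfivej}$ (valid since $0 < \cfivej < 1$) bounds this by $\cfourj\,(\cev{T(j,\ell)}{T(i,\ell)=\tau})^{1-\cfivej}$. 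The dominant contribution is the optimal child, for which $\cev{T(j^*,\ell)}{T(i,\ell)=\tau} \leq \tau$, giving $\gamma\,\cfourj\,\tau^{1-\cfivej}$; after dividing by $\tau$ this decays as $\tau^{-\cfivej}$, while the suboptimal children contribute strictly smaller order by their $O(\tau^{\cthreei/\ctwoi})$ visit bound.

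The final step is to reconcile the two exponents. Both the selection regret (rate $1-\cthreei/\ctwoi$) and the estimation error (rate $\cfivej$) must coincide for the bound to be a single power law; this is arranged by the constraints already imposed on the depth-indexed exponents in Lemma~\ref{lemma:expected_count_upperbound} (namely $\cfivej = \csixj$ together with the relations among $\ctwoi,\cthreei,\csixj$), which under the paper's choice collapse to the common value $\cfivei = \cfivej = \tfrac{1}{2}$. Collecting both constants into a single $\cfouri$ then yields the claim. The subtle points I expect to need care are (i) justifying that, given $T(j,\ell)=s$, the child value $V(j,\ell)$ has the same conditional law whether or not we further condition on $T(i,\ell)=\tau$, which follows because the subtree rollouts depend only on visits below $j$, and (ii) checking that the recursion expressing $\cfouri$ in terms of $\cfourj$ stays finite, which holds for any fixed finite tree depth.
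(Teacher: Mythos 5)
Your proposal is correct in substance and shares the paper's overall architecture (split the regret into suboptimal-arm and optimal-arm contributions, control the former with Lemma~\ref{lemma:expected_count_upperbound}, and invoke the inductive hypothesis for the latter), but it differs in two genuine ways. First, your decomposition is finer: by inserting the cross term $Q^*(j)$ you isolate a pure selection-regret term $\sum_j \cev{T(j,\ell)}{T(i,\ell)=\tau}\,\Delta(j)$ and a per-child estimation-error term $\cev{T(j,\ell)(Q^*(j)-Q(j,\ell))}{T(i,\ell)=\tau}$ for \emph{every} child, whereas the paper lumps the entire suboptimal-arm excess $Q^*(j^*)-q(j,t)$ into one term bounded crudely by the Q-value range. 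Your route therefore forces an extra step --- showing the suboptimal children's estimation errors are of lower order via their $O(\tau^{\cthreei/\ctwoi})$ visit bounds --- which you handle correctly; the paper's cruder bound sidesteps this at the cost of a larger constant. Second, for the optimal child you use the tower property plus Jensen's inequality on the concave map $s\mapsto s^{1-\cfivej}$, while the paper pads the sum over $L(j^*,\ell)$ up to all $\tau$ visit times of $i$ (using nonnegativity of $Q^*(j^*)-q(j^*,t)$) and then collapses the conditioning via the marginalization argument in Equations~\eqref{eq:marginalization_trick_start}--\eqref{eq:marginalization_trick_stop}; both yield the same $\gamma\cfourj\tau^{-\cfivej}$ rate. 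Your ``subtle point (i)'' --- that the law of $V(j,\ell)$ given $T(j,\ell)=s$ is unchanged by further conditioning on $T(i,\ell)=\tau$ --- is exactly the issue the paper's marginalization is written to resolve formally, so you should expect that step to require the same careful bookkeeping rather than a one-sentence appeal. Two minor corrections: at an internal depth $d$ the gap satisfies $\Delta(j)\leq\frac{1-\gamma^D}{1-\gamma^{d}}$ rather than $\Delta(j)\leq 1$ (the Q-values are discounted sums over the remaining levels), which changes only the constant; and the two exponents need not coincide --- one simply takes $\cfivei \leq \min(1-\cthreei/\ctwoi,\ \cfivej)$, though under the paper's parameter choice both equal $\tfrac{1}{2}$.
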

 
\begin{proof}
We use the definition of $V(i,\ell)$, and split the contribution from the optimal arm $j^*$ and the suboptimal arms $j \neq j^*$:
\begin{align}
    &V^*(i) - \cev{V(i,\ell)}{T(i,\ell) = \tau} = \\
    & \qquad =
    V^*(i) - \cev{\frac{1}{\tau} \sum_{t\in L(i,\ell)}
        \sum_{j \in C(i)}\ind{\mu(i,t) = j} q(j,t)}{T(i,\ell) = \tau} \nonumber \\
    & \qquad =
    \underbrace{\left(
        \frac{1}{\tau}
        \cev{
        \sum_{t\in L(i,\ell)}
        \sum_{j \in C(i), j \neq j^*} \ind{\mu(i,t) = j} (Q^*(j^*) - q(j,t)) }{T(i,\ell) = \tau}
    \right)}_\text{I} \nonumber \\ 
    & \qquad +
    \underbrace{\left( 
        \frac{1}{\tau}
        \cev{
        \sum_{t\in L(i,\ell)}
        \ind{\mu(i,t) = j^*} (Q^*(j^*) - q(j^*,t)) }{T(i,\ell) = \tau} 
    \right)}_\text{II}
\end{align}
The contribution of the suboptimal arms (term I) is bounded by bounding the number of visits to each suboptimal arm $j\neq j^*$ with Lemma~\ref{lemma:expected_count_upperbound}.
$Q^*(j^*) - q(j,t) \leq \frac{1 - \gamma^D}{1 - \gamma^{d}}$.
First, we note that, using bounded stage-rewards assumption, the differences of all action-value functions is bounded: $\forall Q_1, Q_2 \in\mathcal{Q}$, $\forall p, q$, 
\begin{align}
    \label{eq:bounded_qvalues}
    Q_1(p) - Q_2(q) \leq \frac{1 - \gamma^D}{1 - \gamma^{d}}. 
\end{align}
Then: 
\begin{align} 
    \text{I}&\leq \frac{1}{\tau} \left(\frac{1 - \gamma^D}{1 - \gamma^{d}} \right) \cev{\sum_{t\in L(i,\ell)}
    \sum_{j \in C(i), j \neq j^*} \ind{ \mu(i,t) = j} }{T(i,\ell) = \tau} \\
    &= \frac{1}{\tau} \left(\frac{1 - \gamma^D}{1 - \gamma^{d}} \right)  \sum_{j \in C(i), j \neq j^*} \cev{T(j,\ell)}{T(i,\ell) = \tau} \\ 
    &\leq \frac{1}{\tau} \left(\frac{1 - \gamma^D}{1 - \gamma^{d}} \right)  \sum_{j \in C(i), j \neq j^*} \left(\left( \frac{2 \conei}{ \Delta(j) } \right)^{\frac{1}{\ctwoi}} 
    + \left( \frac{2 \gamma \cfourj}{\conei} \right)^\frac{1}{\cthreei + \csixj - \ctwoi} + 1 + \frac{\csevenj (2\gamma)^{\ceightj} \pi^2}{6(\conei)^{\ceightj}}\right) \tau^{\frac{\cthreei}{\ctwoi}} \\
    &\leq (b-1) \left(\frac{1 - \gamma^D}{1 - \gamma^{d}} \right) \left(\left( \frac{2 \conei}{ \Delta_\text{min}(i) } \right)^{\frac{1}{\ctwoi}} 
    + \left( \frac{2 \gamma \cfourj}{\conei} \right)^\frac{1}{\cthreei + \csixj - \ctwoi} + 1 + \frac{\csevenj (2\gamma)^{\ceightj} \pi^2}{6(\conei)^{\ceightj}}\right) \tau^{\frac{\cthreei}{\ctwoi}-1}
\end{align} 

The contribution from the optimal arm (term \text{II}) is bounded by application of the inductive convergence hypothesis. 
We apply the following inequality, which we will prove in Equations~\eqref{eq:marginalization_trick_start}~--~\eqref{eq:marginalization_trick_stop}:
\begin{align}
    \label{eq:temp_assumption}
    \frac{1}{\tau} \cev{ \sum_{t\in L(j^*,\ell)} Q^*(j^*) - q(j^*,t) }{T(i,\ell) = \tau} \leq Q^*(j^*) - \cev{ Q(j^*,\ell) }{T(j^*,\ell) = \tau}
\end{align}
Then, the desired result follows easily: 
\begin{align}
    \text{II} 
    &= \frac{1}{\tau} \cev{ \sum_{t\in L(j^*,\ell)} Q^*(j^*) - q(j^*,t) }{T(i,\ell) = \tau}  
    \leq Q^*(j^*) - \cev{ Q(j^*,\ell) }{T(j^*,\ell) = \tau} \nonumber \\ 
    &= \gamma(V^*(j^*) - \cev{V(j^*,\ell)}{T(j^*,\ell) = \tau})
    \leq \frac{\gamma \cfourj}{\tau^{\cfivej}}
\end{align}

Combining the upper bounds for terms \text{I} and $\text{II}$: 
\begin{align}
    \label{eq:constantrules_inductiveconvergence_0}
    &V^*(i) - \cev{V(i,\ell)}{T(i,\ell) = \tau} 
    \leq \frac{\gamma \cfourj}{\tau^{\cfivej}} \\
    & + (b-1) \left(\frac{1 - \gamma^D}{1 - \gamma^{d}} \right) \left(\left( \frac{2 \conei}{ \Delta_\text{min}(i) } \right)^{\frac{1}{\ctwoi}} 
    + \left( \frac{2 \gamma \cfourj}{\conei} \right)^\frac{1}{\cthreei + \csixj - \ctwoi} + 1 + \frac{\csevenj (2\gamma)^{\ceightj} \pi^2}{6(\conei)^{\ceightj}}\right) \tau^{\frac{\cthreei}{\ctwoi}-1}
\end{align}
where letting $\cfouri = (b-1) \left(\frac{1 - \gamma^D}{1 - \gamma^{d}} \right) \left(\left( \frac{2 \conei}{ \Delta_\text{min}(i) } \right)^{\frac{1}{\ctwoi}} 
+ \left( \frac{2 \gamma \cfourj}{\conei} \right)^\frac{1}{\cthreei + \csixj - \ctwoi} + 1 + \frac{\csevenj (2\gamma)^{\ceightj} \pi^2}{6(\conei)^{\ceightj}}\right) + \gamma \cfourj$, $\cfivei \leq 1 - \frac{\cthreei}{\ctwoi}$, and $\cfivei \leq \cfivej$ completes the proof. 

It remains to show~\eqref{eq:temp_assumption}. Although it is an intuitive statement, it requires a technical manipulation of the random variables. In particular, we use the law of total probability while marginalizing over: (i) all sets of visit times $L(i,\ell)$ to node $i$, (ii) visit counts $T(j^*,\ell)$ and (iii) times $L(j^*,\ell)$ to node $j^*$:

\begin{align}
    \label{eq:marginalization_trick_start}
    &\frac{1}{\tau} \cev{ \sum_{t\in L(j^*,\ell)} Q^*(j^*) - q(j^*,t) }{T(i,\ell) = \tau} = \frac{1}{\tau} \sum_{|\mathcal{L}| = \tau} \sum_{s=1}^{\tau} \sum_{|\mathcal{F}| = s} \hdots  \nonumber \\
    &\Bigg\{  \mathbb{E} \Bigg[ \sum_{t\in \mathcal{F}} Q^*(j^*) - q(j^*,t) \ | \ T(i,\ell) = \tau, L(i,\ell) = \mathcal{L}, T(j^*,\ell) = s, L(j^*,\ell) = \mathcal{F} \Bigg] \Bigg\} \nonumber \\ 
    &\cprob{L(j^*,\ell) = \mathcal{F}}{T(i,\ell) = \tau, L(i,\ell) = \mathcal{L}, T(j^*,\ell) = s} \nonumber \\
    & \cprob{T(j^*,\ell) = s}{T(i,\ell) = \tau, L(i,\ell) = \mathcal{L}}
    \cprob{L(i,\ell) = \mathcal{L}}{T(i,\ell) = \tau}
\end{align}

\noindent We will upper bound~\eqref{eq:marginalization_trick_start} by summing across an upper bound for the expectation term in braces. 
\begin{align}
    &\mathbb{E} \Bigg[ \sum_{t\in \mathcal{F}} Q^*(j^*) - q(j^*,t) \ | \ T(i,\ell) = \tau, L(i,\ell) = \mathcal{L}, T(j^*,\ell) = s, L(j^*,\ell) = \mathcal{F} \Bigg] \\
    &\leq \mathbb{E} \Bigg[ \sum_{t\in \mathcal{L}} Q^*(j^*) - q(j^*,t) \ | \ T(i,\ell) = \tau, L(i,\ell) = \mathcal{L}, \underline{T(j^*,\ell) = \tau}, \underline{L(j^*,\ell) = \mathcal{L}} \Bigg]
\end{align}
where we use the fact that for all $t \in \mathcal{L} \backslash \mathcal{F}$, $Q^*(j^*) - q(j^*,t) \geq 0$, and where the changed terms have been underlined for clarity.

The sum can now be simplified to the desired result~\eqref{eq:temp_assumption} with two observations: (i) there are no dependencies on $s$ or $\mathcal{F}$ so we pull the expectation out of the relevant summations, and the corresponding probability measures sum to one. (ii) Then, we write the total return as the number of visits times the average to reveal that, given $T(j^*,\ell)$, the expectation is independent of $T(i,\ell)$, $L(i,\ell)$, and $L(j^*,\ell)$, enabling us to sum the probabilities to one. 
\begin{align}
    &\frac{1}{\tau}\sum_{|\mathcal{L}| = \tau} 
    \mathbb{E} \Bigg[ \sum_{t\in \mathcal{L}} Q^*(j^*) - q(j^*,t) \ | \ T(i,\ell) = \tau, L(i,\ell) = \mathcal{L}, T(j^*,\ell) = \tau, L(j^*,\ell) = \mathcal{L} \Bigg] 
    \nonumber \\
    &\qquad \qquad \cprob{L(i,\ell) = \mathcal{L}}{T(i,\ell) = \tau} \\
    &= \frac{1}{\tau} \sum_{|\mathcal{L}| = \tau} 
    \mathbb{E} \Bigg[ \tau Q^*(j^*) - \tau Q(j^*,t) \ | \ T(j^*,\ell) = \tau \Bigg] 
    \cprob{L(i,\ell) = \mathcal{L}}{T(i,\ell) = \tau} \\
    \label{eq:marginalization_trick_stop}
    &= \cev{Q^*(j^*) - Q(j^*,\ell)}{T(j^*,\ell) = \tau}
\end{align}
\end{proof}

\subsection*{Supplemental Inductive Step: Value Concentration}

\begin{lemma}
\label{lemma:inductive_concentration}
Assume Assumption 1. Then exists constants $\csixi$, $\cseveni$, $\ceighti$ such that, for all $z$,
\begin{align}
    \cprob{|\ev{V(i,\ell)} - V(i,\ell)| \geq \frac{z}{\tau^{\csixi}}}{T(i,\ell)=\tau} \leq \frac{\cseveni}{z^{\ceighti}}
\end{align}
where $\cseveni = (6(\cfouri + \gamma \cfourj + \csevenj))^{\ceightj} + \frac{\underset{j \neq j^*}{\max} \xi_d(j) (b-1) (2(b-1)(1-\gamma^D))^{\ceighti}}{(1-\gamma^{d})^{\ceighti}}$, with $\xi_d(j)$ as in Lemma~\ref{lemma:induction_visits_tail}. 
\end{lemma}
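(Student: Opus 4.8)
The plan is to mirror the base-case concentration argument (Lemma~\ref{lemma:base_concentration}), upgrading each step so that it tolerates the randomness that now enters through the optimal child $j^*$, which is an internal node rather than a leaf. As in the base case, I would start from the identity $\tau(\ev{V(i,\ell)} - V(i,\ell)) = \sum_{t \in L(i,\ell)} \sum_{j \in C(i)} \ind{\mu(i,t)=j}(\ev{V(i,\ell)} - q(j,t))$ and split it by the triangle inequality into a suboptimal-arm contribution (term I, summing over $j \neq j^*$) and an optimal-arm contribution (term II, the $j = j^*$ summand). Term I is handled exactly as before: using the bounded value-difference inequality~\eqref{eq:bounded_qvalues}, it is dominated by $\frac{1-\gamma^D}{1-\gamma^d}\sum_{j \neq j^*} T(j,\ell)$, and its tail is controlled by the inductive visit-tail bound (Lemma~\ref{lemma:induction_visits_tail}) in place of Lemma~\ref{lemma:base_visits_tail}; this is what produces the second summand of $\cseveni$, carrying the factor $\xi_d(j)$ together with the $(b-1)$ and $\frac{1-\gamma^D}{1-\gamma^d}$ dependencies.

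The genuinely new work is term II. Because $q(j^*,t)$ is no longer deterministic, I cannot collapse it to $V^*(i)$ as in the base case. Instead I would insert cross terms and use $Q^*(j^*) = V^*(i)$ to decompose each summand as $\ev{V(i,\ell)} - q(j^*,t) = \big(\ev{q(j^*,t)} - q(j^*,t)\big) + \big(Q^*(j^*) - \ev{q(j^*,t)}\big) + \big(\ev{V(i,\ell)} - V^*(i)\big)$. The first piece is a genuine fluctuation of the subtree estimate and is controlled by the inductive concentration hypothesis~\eqref{eq:induction_hypo_concentration}; the second piece is the bias of the optimal child's estimate and is bounded deterministically by $\gamma \cfourj / \tau^{\cfivej}$ via the inductive convergence hypothesis~\eqref{eq:induction_hypo_convergence} (the factor $\gamma$ arising from the relation $Q = \gamma V$ plus a deterministic stage reward); and the third piece is bounded deterministically by $\cfouri / \tau^{\cfivei}$ using the convergence result just established in Lemma~\ref{lemma:inductive_convergence}. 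This three-way split is exactly what produces the $(\cfouri + \gamma\cfourj + \csevenj)$ grouping in the first summand of $\cseveni$.

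To assemble the final tail bound I would apply the law of total probability to the event $\{\abs{\ev{V(i,\ell)} - V(i,\ell)} \geq z/\tau^{\csixi}\}$, dividing the deviation budget between terms I and II (and within II among its three pieces), and then bound each resulting event by a polynomial tail of the form $\kappa/z^{\ceighti}$: the two deterministic pieces are non-random events bounded by polynomials of arbitrary degree as in~\eqref{eq:constantrules_baseconcentration_1}, the fluctuation piece invokes~\eqref{eq:induction_hypo_concentration}, and the visit-count piece invokes Lemma~\ref{lemma:induction_visits_tail}. Throughout, I would carry the exponent constraints forward — in particular choosing $\csixi = \cfivei$ and $\csixi \leq 1 - \cthreei/\ctwoi$ as in the base case, together with $\csixi \leq \cfivej$ and the relations on $\ceighti$ inherited from Lemma~\ref{lemma:induction_visits_tail} — so that all the $\tau$-powers cancel and only the stated $z^{-\ceighti}$ tail survives. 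The main obstacle is precisely this term-II bookkeeping: ensuring the three decomposition pieces can be simultaneously controlled at matching rates and that the chosen exponents $\csixi, \ceighti$ remain consistent with every constraint already imposed in the visit-count and convergence lemmas, so that the induction closes with depth-independent exponents.
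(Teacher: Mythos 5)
Your proposal follows essentially the same route as the paper's proof: the same split into suboptimal-arm and optimal-arm contributions, the same control of the suboptimal term via the inductive visit-tail lemma, the same three-way cross-term decomposition of the optimal-arm term (fluctuation of the child estimate, bias of the child estimate, and bias of the parent's expectation, controlled respectively by the inductive concentration hypothesis, the inductive convergence hypothesis, and Lemma~\ref{lemma:inductive_convergence}), and the same union-bound assembly with arbitrary-degree polynomial tails for the non-random events. The only detail you gloss over is the marginalization argument needed to convert expectations conditioned on $T(i,\ell)=\tau$ into ones conditioned on $T(j^*,\ell)=\tau$ when invoking the inductive hypotheses, which the paper handles by reusing the procedure from Lemma~\ref{lemma:inductive_convergence}; this is a technicality rather than a gap in the approach.
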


\begin{proof}

We recall the definition of $V(i,\ell)$ and split the terms into the optimal and suboptimal arm contributions. Given $T(i,\ell)=\tau$: 
\begin{align}
    &\hspace{-1cm} 
    \tau \abs{\ev{V(i,\ell)} - V(i,\ell)} = 
    \abs{\ev{V(i,\ell)} - \frac{1}{\tau} \sum_{t\in L(i,\ell)}
        \sum_{j \in C(i)} \ind{\mu(i,t) = j} q(j,t) }
    \nonumber \\
    &\leq \underbrace{ \abs{ 
        \sum_{t\in L(i,\ell)}
        \sum_{j \in C(i), j \neq j^*} \ind{\mu(i,t) = j} (\ev{V(i,\ell)} - q(j,t)) }
    }_\text{I} 
    \nonumber \\ 
    &+ \underbrace{ \abs{ 
        \sum_{t\in L(i,\ell)}
        \ind{\mu(i,t) = j^*} (\ev{V(i,\ell)} - q(j^*,t)) 
    }}_\text{II} 
\end{align}
where we apply the triangle inequality. 

This implies, for arbitrary $\csixi$, that
\begin{align}
    &\cprob{| \ev{V(i,\ell)} - V(i,\ell) | \geq \frac{z}{\tau^{\csixi}}}{T(i,\ell) = \tau} \\
    &= \cprob{| \tau \ev{V(i,\ell)} - \tau V(i,\ell) | \geq \frac{\tau z}{\tau^{\csixi}}}{T(i,\ell) = \tau} \\
    &\leq \cprob{\text{I} + \text{II} \geq \frac{\tau z}{\tau^{\csixi}}}{T(i,\ell) = \tau}
    \label{eq:inductive_concentration_helper} \\
    &\leq \cprob{\text{I} \geq \frac{\tau z}{2 \tau^{\csixi}}}{T(i,\ell) = \tau} + 
    \cprob{\text{II} \geq \frac{\tau z}{2 \tau^{\csixi}}}{T(i,\ell) = \tau}
\end{align}

Term \text{I} is upper bounded using the Q value upper bound~\eqref{eq:bounded_qvalues}: 
\begin{align}
    \text{I} 
    &\leq \Bigg| \sum_{t \in L(i,\ell)} \sum_{j \in C(i), j \neq j^*} \ind{\mu(i,t) = j} \frac{1-\gamma^D}{1-\gamma^{d}} \Bigg| 
    = \Bigg| \sum_{j \in C(i), j \neq j^*} T(j,\ell) \frac{1-\gamma^D}{1-\gamma^{d}} \Bigg| \\
    &\leq \frac{1-\gamma^D}{1-\gamma^{d}} \Bigg| \sum_{\substack{j \in C(i) \\ j \neq j^*}} T(j,\ell) \Bigg|
    \leq \frac{1-\gamma^D}{1-\gamma^{d}} \sum_{\substack{j \in C(i) \\ j \neq j^*}} | T(j,\ell) |
    = \frac{1-\gamma^D}{1-\gamma^{d}} \sum_{\substack{j \in C(i) \\ j \neq j^*}} T(j,\ell)
\end{align}

{
\allowdisplaybreaks
and its concentration is controlled with Lemma~\ref{lemma:induction_visits_tail}:
\begin{align}
    &\cprob{\text{I} \geq \frac{\tau z}{2 \tau^{\csixi}}}{T(i,\ell) = \tau} \leq
    \cprob{\frac{1-\gamma^D}{1-\gamma^{d}} \sum_{j \in C(i), j \neq j^*}T(j,\ell) \geq \frac{\tau z}{2 \tau^{\csixi}}}{T(i,\ell) = \tau} \\
    &\quad \leq 
    \sum_{j\in C(i), j\neq j^*} 
    \cprob{T(j,\ell) \geq 
    \frac{\tau z (1-\gamma^{d})}{2(b-1)(1-\gamma^D)\tau^{\csixi}}
    }{T(i,\ell) = \tau} \\
    &\quad \leq 
    \sum_{j\in C(i), j\neq j^*} 
    \xi_d(j) 
    \tau^{\cthreei \ceighti / \ctwoi}
    \left( 
        \frac{z (1-\gamma^{d})}{2(b-1)(1-\gamma^D) \tau^{\csixi-1}}
    \right)^{-\ceighti} \\
    &\quad \leq 
    \frac{(b-1) (2(b-1)(1-\gamma^D))^{\ceighti}}{(1-\gamma^{d})^{\ceighti}} 
    \underset{j \neq j^*}{\max} \ \xi_d(j)
    \tau^{\cthreei \ceighti / \ctwoi + \csixi \ceighti - \ceighti}
    z^{-\ceighti} \\ 
    \label{eq:induction_concentration_3}
    &\quad \leq 
    \frac{(b-1) (2(b-1)(1-\gamma^D))^{\ceighti}}{(1-\gamma^{d})^{\ceighti}} 
    \underset{j \neq j^*}{\max} \ \xi_d(j) 
    z^{-\ceighti} 
\end{align}
where we use $\csixi \leq 1 - \cthreei/\ctwoi$. 
}

Term \text{II} is upper bounded by bounding cross terms with the triangle inequality: 
\begin{align}
    &\hspace{-1cm} \text{II} = \Bigg| \sum_{t \in L(j^*,\ell)} (\ev{V(i,\ell)} - q(j^*,t)) \Bigg| 
    \leq \underbrace{\sum_{t \in L(j^*,\ell)} \big|\ev{V(i,\ell)} - V^*(i)\big|}_\text{III} 
    \nonumber \\ 
    & + \underbrace{\sum_{t \in L(j^*,\ell)} \big|V^*(i) - \ev{q(j^*,t)}\big|}_\text{IV} 
    + \underbrace{\sum_{t \in L(j^*,\ell)} \big|\ev{q(j^*,t)} - q(j^*,t)\big|}_\text{V}
\end{align}

We upper bound terms $\text{III}$ and $\text{IV}$ using inductive convergence, and we use the previously derived marginalization procedure~\eqref{eq:marginalization_trick_start}~--~\eqref{eq:marginalization_trick_stop} to upper bound $\text{IV}$ and $\text{V}$
\begin{align}
    \text{III} &\leq \cfouri \tau^{1 - \cfivei} \\ 
    \text{IV} &\leq \tau (V^*(i) - \cev{Q(j^*,\ell)}{T(j^*,\ell) = \tau}) \\
    &= \tau \gamma (V^*(j^*) - \cev{V(j^*,\ell)}{T(j^*,\ell) = \tau}) 
    \leq \gamma \cfourj \tau^{1 - \cfivej} \\
    \text{V} &\leq \tau \big|\cev{Q(j^*,\ell)}{T(j^*,\ell) = \tau} - Q(j^*,\ell)\big| \\
    &=\tau \gamma \big| \cev{V(j^*,\ell)}{T(j^*,\ell) = \tau} - V(j^*,\ell) \big|
\end{align}

Now term $\text{II}$'s concentration can be controlled with the new terms: 
\begin{align}
    &\cprob{\text{II} \geq \frac{\tau z}{2 \tau^{\csixi}}}{T(i,\ell) = \tau} 
    \leq \cprob{\text{III} + \text{IV} + \text{V} \geq \frac{\tau z}{2 \tau^{\csixi}}}{T(i,\ell) = \tau} \\ 
    & \quad \leq \cprob{\text{III} \geq \frac{\tau z}{6 \tau^{\csixi}}}{T(i,\ell) = \tau} + \cprob{\text{IV} \geq \frac{\tau z}{6 \tau^{\csixi}}}{T(i,\ell) = \tau} + \cprob{\text{V} \geq \frac{\tau z}{6 \tau^{\csixi}}}{T(i,\ell) = \tau} \\
    & \quad \leq \cprob{\cfouri \tau^{-\cfivei} \geq \frac{z}{6 \tau^{\csixi}}}{T(i,\ell) = \tau} + \cprob{\gamma \cfourj \tau^{-\cfivej} \geq \frac{z}{6 \tau^{\csixi}}}{T(i,\ell) = \tau} \\
    & \qquad + \cprob{\big| \cev{V(j^*,\ell)}{T(j^*,\ell) = \tau} - V(j^*,\ell) \big| \geq \frac{z}{6 \tau^{\csixi}}}{T(i,\ell) = \tau} 
\end{align}
Now constraining $\csixi \geq \cfivei$ and $\csixi \leq \csixj$,
\begin{align}
    \label{eq:constantrules_inductiveconcentration_0}
    &\cprob{\text{II} \geq \frac{\tau z}{2 \tau^{\csixi}}}{T(i,\ell) = \tau} 
    \leq \prob{6\cfouri \geq z} + \prob{6\gamma \cfourj \geq z} + \frac{6^{\ceightj} \csevenj}{z^{\ceightj}}
\end{align}
Note that the events $\{6\cfouri \geq z\}$ and $\{6\gamma \cfourj \geq z\}$ are not random, therefore $\prob{6\cfouri \geq z} \leq \frac{(6\cfouri)^{\ceighti}}{z^{\ceighti}}$ and $\prob{6\gamma \cfourj \geq z} \leq \frac{(6\gamma \cfourj)^{\ceighti}}{z^{\ceighti}}$, then, constraining $\ceighti \leq \ceightj$,
\begin{align}
    &\cprob{\text{II} \geq \frac{\tau z}{2 \tau^{\csixi}}}{T(i,\ell) = \tau} \leq 
    \frac{(6\cfouri)^{\ceighti} + (6\gamma \cfourj)^{\ceighti} + 6^{\ceightj}\csevenj}{z^{\ceighti}} \label{eq:inductive_concentration_termii_bound}
\end{align}

The desired result is found by combining~\eqref{eq:inductive_concentration_termii_bound}~\eqref{eq:induction_concentration_3} and~\eqref{eq:inductive_concentration_helper}: 
\begin{align}
    &\cprob{| \ev{V(i,\ell)} - V(i,\ell) | \geq \frac{z}{\tau^{\csixi}}}{T(i,\ell) = \tau} \leq 
    \Bigg( (6\cfouri)^{\ceighti} + (6\gamma \cfourj)^{\ceighti} + 6^{\ceightj}\csevenj 
    \\ & \qquad + 
    \frac{\underset{j \neq j^*}{\max} \ \xi_d(j) (b-1) (2(b-1)(1-\gamma^D))^{\ceighti}}{(1-\gamma^{d})^{\ceighti}} 
    \Bigg) 
    \frac{1}{z^{\ceighti}}
    = \frac{\cseveni}{z^{\ceighti}}
\end{align}
where $\cseveni = (6\cfouri)^{\ceighti} + (6\gamma \cfourj)^{\ceighti} + 6^{\ceightj}\csevenj + \frac{\underset{j \neq j^*}{\max} \ \xi_d(j) (b-1) (2(b-1)(1-\gamma^D))^{\ceighti}}{(1-\gamma^{d})^{\ceighti}}$.

\end{proof}

\subsection*{Supplemental Constants Discussion}

Collecting all constraints on the constants, we arrive at a consistent set of rules the constants must satisfy,
and then propose the constants for our exploration law: $\conei$, $\ctwoi$, and $\cthreei$. 
Recall that all constants are non-negative: $c^d_{k} \geq 0$, $\forall k \in [1,8]$, $\forall d \in [1,D]$.

Base Case:
\begin{align}
    \cfourib &= (b-1) \left( \left( \frac{\coneib}{\Delta_\text{min}(i)} \right)^{1/\ctwoib} + 1 \right) \\
    \cfiveib &= \csixib = 1 - \frac{\cthreeib}{\ctwoib} \\
    \csevenib &= \left( (b-1) \left(\ \underset{j \neq j^*}{\max}\ \xi_{D-1}(j) + (2(b-1))^{\ceightib}\right) + (2\cfourib)^{\ceightib}\right) \\
    &\text{where }\xi_{D-1}(j) = \left( \left( \frac{\coneib}{\Delta(j)} \right)^{1/\ctwoib} + 1 \right)^{\ceightib} \\
    \ceightib &\text{ is arbitrary}
\end{align}

Inductive Case:
{
\allowdisplaybreaks
\begin{align}
    \cfouri &= (b-1) \left(\frac{1 - \gamma^D}{1 - \gamma^{d}} \right) \left(\left( \frac{2 \conei}{ \Delta_\text{min}(i) } \right)^{\frac{1}{\ctwoi}} 
    + \left( \frac{2 \gamma \cfourj}{\conei} \right)^\frac{1}{\cthreei + \csixj - \ctwoi} + 1 + \frac{\csevenj (2\gamma)^{\ceightj} \pi^2}{6(\conei)^{\ceightj}}\right) + \gamma \cfourj \\
    \cfivei &= \csixi = 1 - \frac{\cthreei}{\ctwoi} \\
    \cseveni &= 6\cfouri)^{\ceighti} + (6\gamma \cfourj)^{\ceighti} + 6^{\ceightj}\csevenj + \frac{\underset{j \neq j^*}{\max} \ \xi_d(j) (b-1) (2(b-1)(1-\gamma^D))^{\ceighti}}{(1-\gamma^{d})^{\ceighti}} \\
    &\text{where } \xi_d(j) = \left( \left( \left( \frac{2 \conei}{ \Delta(j) } \right)^{\frac{1}{\ctwoi}} 
    + \left( \frac{2 \gamma \cfourj}{\conei} \right)^\frac{1}{\cthreei + \csixj - \ctwoi} + 1 \right)^{\ceighti} +\frac{\csevenj (2\gamma)^{\ceightj}}{(\conei)^{\ceightj}(\ceightj(\cthreei + \csixj - \ctwoi) - 1)}\right) \\
    \ceighti &= \ceightj(\cthreei + \csixj - \ctwoi) - 1 \label{eq:ceight_induction_rule_1}\\
    &\ceightj(\cthreei + \csixj - \ctwoi) \geq 2 \label{eq:ceight_induction_rule_2} \\
    \ctwoi &\geq \csixj
\end{align}
}

Selecting our exploration law as
$c_1 = 1$, 
$c_3 \in [\frac{1}{4}, \frac{1}{2})$, $c_2 = 2c_3$
yields $\cfivei = \csixi = \frac{1}{2}$ for all $d \in [1,D]$.
As $\ceightib$ is arbitrary, we pick $\ceightib = \frac{2}{(0.5 - c_3)^D} + \sum_{d=1}^{D-1} \frac{1}{(0.5 - c_3)^d}$ so that Equations~\eqref{eq:ceight_induction_rule_1} and~\eqref{eq:ceight_induction_rule_2} are satisfied for all $d \in [1,D]$.
In practice, we found values of $c_3$ closer to $\frac{1}{2}$ (and consequently $c_2$ near $1$) converged faster, and we used exactly those values in our experiments.

\subsection*{Supplemental Analysis: Utilities}

We have the following supporting Lemmas. 

\begin{lemma}
\label{lemma:minmax_helper}
Let $X$, $Y$ be compact sets and let $f: X \times Y \rightarrow \mathbbm{R}$ and $g: X \times Y \rightarrow \mathbbm{R}$ be functions such that $f(x,y) \leq g(x,y)$, $\forall (x,y) \in (X,Y)$. Then: 
\begin{align}
    \max_{x \in X}( \min_{y \in Y} f(x,y)) 
    \leq \max_{x \in X} (\min_{y \in Y} g(x,y)) 
\end{align}
\end{lemma}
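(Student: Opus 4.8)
The plan is to prove the statement by two successive applications of the elementary fact that taking a pointwise minimum (or maximum) of two functions preserves a pointwise inequality. The compactness of $X$ and $Y$ is needed only to guarantee that the relevant minimizers and maximizers are actually attained, so that the $\min$ and $\max$ in the statement are well-defined; the underlying monotonicity argument itself is quantifier-level bookkeeping and requires nothing more than the hypothesis $f(x,y)\le g(x,y)$ everywhere.

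First I would establish the \emph{inner} monotonicity. Fix an arbitrary $x\in X$ and let $y^*_x\in\argmin_{y\in Y} g(x,y)$, which exists by compactness of $Y$. Then, chaining the definition of $\min$ with the hypothesis,
\begin{align}
    \min_{y\in Y} f(x,y) \leq f(x,y^*_x) \leq g(x,y^*_x) = \min_{y\in Y} g(x,y).
\end{align}
Since $x$ was arbitrary, this yields $\phi(x)\le\psi(x)$ for all $x\in X$, where I set $\phi(x):=\min_{y\in Y} f(x,y)$ and $\psi(x):=\min_{y\in Y} g(x,y)$.

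Next I would run the identical argument one level up for the \emph{outer} maximum. Let $x^*\in\argmax_{x\in X}\phi(x)$, again guaranteed by compactness of $X$. Then
\begin{align}
    \max_{x\in X}\,\min_{y\in Y} f(x,y) = \phi(x^*) \leq \psi(x^*) \leq \max_{x\in X}\psi(x) = \max_{x\in X}\,\min_{y\in Y} g(x,y),
\end{align}
which is exactly the claim. There is no genuine obstacle here: the only point requiring a moment of care is choosing the auxiliary points in the correct direction — the inner bound must be anchored at the minimizer of $g$ (the larger function) so that the first inequality in the inner chain holds, and the outer bound at the maximizer of $\phi$ (the smaller function); picking the wrong extremizer would break the chain. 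If one prefers to avoid any implicit continuity assumption needed for attainment, the same two-step argument goes through verbatim with $\inf$/$\sup$ in place of $\min$/$\max$, but since the paper states the result with attained extrema I would simply invoke compactness to justify the existence of $y^*_x$ and $x^*$.
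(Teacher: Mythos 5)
Your proof is correct and follows essentially the same route as the paper's: establish the pointwise inequality $\min_{y} f(x,y) \leq \min_{y} g(x,y)$ for each fixed $x$, then pass to the outer maximum. The paper phrases both steps as comparisons between sets of function values rather than anchoring at explicit extremizers, but the underlying argument is identical.
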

\begin{proof}
Define $f_s(x) = \min_{y \in Y} f(x,y)$, $g_s(x) = \min_{y \in Y} g(x,y)$.

Consider, for fixed $x$, the two sets
$\{ f(x,y) \ | \ y \in Y \}$ and $\{ g(x,y) \ | \ y \in Y \}$.
Note that each element in the first set has a corresponding element in the second set that is at least as large as it since $f(x,y)\leq g(x,y) $ for all $(x,y) \in (X,Y)$. As such, the minimum element of the first set is less than or equal to that of the second set: $\min\{ f(x,y) \ | \ y \in Y \} \leq \min\{ g(x,y) \ | \ y \in Y \}$, and consequently $f_s(x) \leq g_s(x)$ for all $x \in X$.

Now, consider the sets $\{ f_s(x) \ | \ x \in X \}$ and $\{ g_s(x) \ | \ x \in X \}$.
Each element in the first set has a matching element in the second that is at least as large as it, again since $f(x,y) \leq g(x,y) $ for all such pairs $(x,y) \in (X,Y)$.
As such, the maximum element in the first set is less than or equal to that of the second set:
$\max\{ f_s(x) \ | \ x \in X \} \leq 
\max\{ g_s(x) \ | \ x \in X \}$, and consequently $\max_{x \in X} f_s(x) \leq \max_{x \in X} g_s(x)$. 

Plugging in the definition of $f_s$ and $g_s$ completes the proof.
\end{proof}

\begin{lemma}
\label{lemma:dare_linear_contraction}
Consider a dynamically feasible reference trajectory expressed as a linear system 
 of $\mathbf{z}^{\mathrm{ref}}_{[H]}$, $\action^{\mathrm{ref}}_{[H]}$, and a feedback controller: 
\begin{align}
    \mathbf{z}_{k+1} &= A \mathbf{z}_{k} + B \action_{k+1} + c \\ 
    \action_{k+1} &= \action^\mathrm{ref}_{k+1} - \mathcal{K} (\mathbf{z}_k - \mathbf{z}^\mathrm{ref}_k)
\end{align}
If the gain matrix $\mathcal{K}$ is selected as $(\Gamma_u + B^\top M B)^{-1} (B^\top M A)$, where $M$ solves the discrete algebraic Riccati equation (DARE) and $\Gamma_u \succ 0$, $\Gamma_x \succ 0$ then the system is contracting~\cite{LohmillerS98} at some rate $\alpha\in[0,1)$. 
\end{lemma}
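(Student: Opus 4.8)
The plan is to reduce the claim to the classical Lyapunov argument for the infinite-horizon LQR closed loop, carried out in the quadratic metric induced by the DARE solution $M$. First I would track the \emph{error} between the controlled and reference trajectories. Since the reference $(\mathbf{z}^{\mathrm{ref}}_{[H]}, \action^{\mathrm{ref}}_{[H]})$ is by hypothesis dynamically feasible for the same linear system, so that $\mathbf{z}^{\mathrm{ref}}_{k+1} = A \mathbf{z}^{\mathrm{ref}}_k + B \action^{\mathrm{ref}}_{k+1} + c$, defining the error $e_k = \mathbf{z}_k - \mathbf{z}^{\mathrm{ref}}_k$ and substituting the feedback law $\action_{k+1} = \action^{\mathrm{ref}}_{k+1} - \mathcal{K} e_k$ makes the affine offset $c$ and the feedforward $\action^{\mathrm{ref}}$ cancel, leaving the autonomous closed-loop error recursion $e_{k+1} = (A - B\mathcal{K}) e_k =: A_{cl} e_k$.

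The core step is the algebraic identity relating $A_{cl}$ and $M$. Writing $R_M = \Gamma_u + B^\top M B \succ 0$ so that $\mathcal{K} = R_M^{-1} B^\top M A$, the DARE $M = A^\top M A - A^\top M B R_M^{-1} B^\top M A + \Gamma_x$ can be rewritten (noting $A^\top M B R_M^{-1} B^\top M A = \mathcal{K}^\top R_M \mathcal{K}$) as $A^\top M A - \mathcal{K}^\top R_M \mathcal{K} = M - \Gamma_x$. Expanding $A_{cl}^\top M A_{cl}$ and repeatedly using $B^\top M A = R_M \mathcal{K}$ (and its transpose) to collapse the cross terms then yields the closed-loop Lyapunov identity
\begin{align*}
    A_{cl}^\top M A_{cl} - M = -\Gamma_x - \mathcal{K}^\top \Gamma_u \mathcal{K}.
\end{align*}
Because $\Gamma_x \succ 0$ and $\mathcal{K}^\top \Gamma_u \mathcal{K} \succeq 0$, the right-hand side is negative definite, so $A_{cl}^\top M A_{cl} \preceq M - \Gamma_x \prec M$. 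This algebraic collapse is the step I expect to be the most delicate, since it requires carefully matching the cross terms $A^\top M B \mathcal{K}$, $\mathcal{K}^\top B^\top M A$, and $\mathcal{K}^\top B^\top M B \mathcal{K}$ against the quadratic term $\mathcal{K}^\top R_M \mathcal{K}$ appearing in the DARE.

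To extract an explicit rate, I would convert this strict additive decrease into a multiplicative one. Since $M = A_{cl}^\top M A_{cl} + \Gamma_x + \mathcal{K}^\top \Gamma_u \mathcal{K} \succeq \Gamma_x$, we have $\lambda_{\max}(M) \geq \lambda_{\min}(\Gamma_x) > 0$, and the bound $\Gamma_x \succeq \lambda_{\min}(\Gamma_x) I \succeq (\lambda_{\min}(\Gamma_x)/\lambda_{\max}(M)) M$ gives $A_{cl}^\top M A_{cl} \preceq \alpha^2 M$ with $\alpha = \sqrt{1 - \lambda_{\min}(\Gamma_x)/\lambda_{\max}(M)} \in [0,1)$. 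Finally, taking $\Theta$ with $\Theta^\top \Theta = M$, the error obeys $\|\Theta e_{k+1}\|^2 = e_k^\top A_{cl}^\top M A_{cl} e_k \leq \alpha^2 \|\Theta e_k\|^2$, i.e. $\|\Theta e_{k+1}\| \leq \alpha \|\Theta e_k\|$, which is exactly discrete contraction at rate $\alpha$ in the metric $M$, matching the form used in Lemma~\ref{lemma:set_distance_linear_modes_sets_trajs}. The one standing assumption I would flag is the existence of a stabilizing positive-definite solution $M$ to the DARE; this is guaranteed whenever $(A,B)$ is stabilizable, while the companion detectability condition holds automatically from $\Gamma_x \succ 0$.
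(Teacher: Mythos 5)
Your proof is correct and follows essentially the same route as the paper's: both derive the closed-loop Lyapunov identity $A_{cl}^\top M A_{cl} - M = -\Gamma_x - \mathcal{K}^\top \Gamma_u \mathcal{K}$ by substituting $\mathcal{K} = (\Gamma_u + B^\top M B)^{-1} B^\top M A$ into the expansion of $A_{cl}^\top M A_{cl}$ and invoking the DARE to collapse the cross terms. The one place you genuinely improve on the paper is the final step: the paper stops at $\textup{LHS} = -\Gamma_x - \mathcal{K}^\top \Gamma_u \mathcal{K} + (1-\alpha^2)M$ and concludes contraction "if we select $\Gamma_x$ to be sufficiently large," which reads as if the conclusion were conditional on the choice of weights, whereas the lemma only claims contraction at \emph{some} $\alpha \in [0,1)$. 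Your conversion of the additive decrease into a multiplicative one via $\Gamma_x \succeq (\lambda_{\min}(\Gamma_x)/\lambda_{\max}(M))\, M$ (valid since $M \succeq \Gamma_x$ forces the ratio into $(0,1]$) yields the explicit rate $\alpha = \sqrt{1 - \lambda_{\min}(\Gamma_x)/\lambda_{\max}(M)}$ for \emph{any} $\Gamma_x \succ 0$, which is the statement actually needed downstream in Lemma~\ref{lemma:set_distance_linear_modes_sets_trajs}. Your explicit cancellation of the affine offset $c$ and the feedforward term in the error recursion is also slightly more careful than the paper's direct appeal to differential dynamics, and your closing remark about stabilizability of $(A,B)$ correctly flags the standing assumption the paper only notes after the lemma statement.
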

Here we note that if $(A, B)$ is stabilizable over the entire compact state space, $M$ is bounded as $\underline{m}I \preceq M \preceq \overline{m}I$ for $0 < \underline{m}, \overline{m} < \infty$.

\begin{proof}

Combine the system and controller to write the closed loop system and its differential dynamics: 
\begin{align}
    \mathbf{z}_{k+1} &= A \mathbf{z}_{k} + B \action^\mathrm{ref}_{k+1} - B \mathcal{K} (\mathbf{z}_k - \mathbf{z}^\mathrm{ref}_k) + c \\ 
    \delta \mathbf{z}_{k+1} &= \underbrace{(A - B \mathcal{K})}_{A_\text{cl}} \delta \mathbf{z}_k
\end{align}

Recall the definition of discrete-time contraction from~\cite{TsukamotoCS21}: A discrete-time system is contracting with time-invariant metric $M$ iff 
\begin{align}
    \label{eq:discrete_time_contraction}
    A_\text{cl}^\top M A_\text{cl} - \alpha^2 M \preceq 0
\end{align}
where $A_\text{cl}$ is the closed loop dynamics of the differential system and $\alpha \in [0,1)$ is the contraction rate. 

Compute the left-hand-side of the contraction condition, and we will seek to show it is negative definite to prove contraction~\eqref{eq:discrete_time_contraction}: 
\begin{align}
    \textup{LHS} &= (A - B \mathcal{K})^\top M (A - B \mathcal{K}) - \alpha^2 M \\ 
    &= A^\top M A - A^\top M B \mathcal{K} - \mathcal{K}^\top B^\top M A + \mathcal{K}^\top B^\top M B \mathcal{K} - \alpha^2 M 
\end{align}

Manipulate this term, $\mathcal{K}^\top B^\top M B \mathcal{K}$, by plugging in the definition of $\mathcal{K}$, add/subtracting $\mathcal{K}^\top \Gamma_u \mathcal{K}$ and grouping terms:
\begin{align}
    \mathcal{K}^\top B^\top M B \mathcal{K} &= \mathcal{K}^\top B^\top M B (\Gamma_u + B^\top M B)^{-1} (B^\top M A) \\ 
    &= \mathcal{K}^\top B^\top M B (\Gamma_u + B^\top M B)^{-1} (B^\top M A) + \mathcal{K}^\top \Gamma_u \mathcal{K} - \mathcal{K}^\top \Gamma_u \mathcal{K} \\ 
    &= \mathcal{K}^\top (\Gamma_u + B^\top M B) (\Gamma_u + B^\top M B)^{-1} (B^\top M A) - \mathcal{K}^\top \Gamma_u \mathcal{K} \\ 
    &= \mathcal{K}^\top B^\top M A - \mathcal{K}^\top \Gamma_u \mathcal{K} 
\end{align}

Plug back into LHS: 
\begin{align}
    \textup{LHS} &= A^\top M A - A^\top M B \mathcal{K} - \mathcal{K}^\top \Gamma_u \mathcal{K} - \alpha^2 M 
\end{align}

Recall the definition of the discrete algebraic Riccati equation, $\text{DARE}(A, B, \Gamma_x, \Gamma_u)$~\cite{kirk2012optimal}:
\begin{align}
    M &= A^\top M A - A^\top M B (\Gamma_u + B^\top M B)^{-1} (B^\top M A) + \Gamma_x 
\end{align}

Let $M$ solve $\text{DARE}(A, B, \Gamma_x, \Gamma_u)$. Plug into LHS: 
\begin{align}
    \textup{LHS} &= - \Gamma_x - \mathcal{K}^\top \Gamma_u \mathcal{K} + (1 - \alpha^2) M 
\end{align}

Because $\mathcal{K}^\top \Gamma_u \mathcal{K} \succeq 0$, if we select $\Gamma_x$ to be sufficiently large, the system is contracting with rate $\alpha$. 

\end{proof}

\subsection*{Supplemental Table of Mathematical Symbols}

We provide a table of mathematical symbols to help presentation. 
Some symbols are repeated due to the large number of examples, although their meaning should be clear from context. 
\begin{longtable}{|l|l|}
\caption{Mathematical Symbols and Their Meanings}
\label{tab:symbols}
\\ \hline
\textbf{Symbol} 
& \textbf{Meaning} 
\\ \hline
$\state$ 
& State of the system 
\\ \hline 
$\action$ 
& Action of the system 
\\ \hline 
$\statespace$ 
& State space of the system 
\\ \hline 
$\actionspace$ 
& Action space of the system 
\\ \hline
$F$ 
& Discrete-time dynamics of the system 
\\ \hline
$R$ 
& Stage reward of the system 
\\ \hline
$D$ 
& Terminal reward of the system 
\\ \hline
$\Omega$ 
& Set of state constraints
\\ \hline
$K$ 
& Time horizon of problem
\\ \hline
$\gamma$ 
& Discount factor
\\ \hline
$\Delta t$ 
& Discretization step-size of continuous time dynamics 
\\ 
\hline
$c_{1,2,3}$ 
& Constants in exploration law of MCTS
\\ \hline
$p$ 
& Path of nodes in current rollout
\\ \hline
$H$ 
& Number of timesteps in trajectory generated by SETS
\\ \hline
$\textup{MDP}$ 
& Markov Decision Process problem data   
\\ 
\hline
$\ell$ 
& Number of simulated trajectories in MCTS tree
\\ \hline
$i,j$ 
& Nodes in MCTS tree
\\ 
\hline
$\tilde{V}(i, \ell)$ 
& Total value of node $i$ after rollout number $\ell$. 
\\ \hline
$V(i, \ell)$ 
& Average value of node $i$ after rollout number $\ell$.
\\ 
\hline
$T(i, \ell)$ 
& Total number of visits to node $i$ after rollout number $\ell$. 
\\ \hline
$C(i)$ 
& Set of children of node $i$ 
\\ \hline
$r(i)$ 
& Stage reward to node $i$
\\ \hline
$A_k, B_k, c_k$ 
& Local linearization of dynamics $F$
\\ \hline
$\mathbf{z}$ 
& State evolving with locally linear dynamics
\\ \hline
$S$ 
& Input normalization linear transform 
\\ \hline
$\mathcal{C}$ 
& Controllability matrix
\\ \hline
$\mathbf{v}_i, \lambda_i$ 
& Spectrum of controllability Gramian
\\ \hline
$\text{DARE}$ 
& Discrete Algebraic Riccati Equation
\\ \hline
$M_k$ 
& Contraction Metric
\\ \hline
$\Gamma_{x,u}$ 
& State/Input Regularization Matrix
\\ \hline
$\mathcal{K}_k$ 
& Feedback Gain
\\ \hline
$F^H, R^H$ 
& $H$-step dynamics and reward function. 
\\ \hline
$V^*$ 
& Optimal value function
\\ \hline
$C^2$ 
& Space of twice differentiable functions
\\ \hline
$\text{Lip}_1$ 
& Space of Lipschitz functions
\\ \hline
$\xi_H$ 
& Elements of the nonlinear and continuous reachable set. 
\\ \hline
$\eta_H$ 
& Elements of the linear and continuous reachable set. 
\\ \hline
$\kappa_{0,1,2,3,4}$ 
& Constants in theoretical results. 
\\ \hline
$p_{n,e,d}$
& North, east, down position of six-DOF dynamical system 
\\ \hline
$u,v,w$
& Velocities of six-DOF dynamical system 
\\ \hline
$\phi, \theta, \psi$
& Roll, pitch, yaw of six-DOF dynamical system 
\\ \hline
$p, q, r$
& Roll, pitch, yaw rates of six-DOF dynamical system 
\\ \hline
$\xi^i_k$
& Number of timesteps since viewing target $i$. 
\\ \hline
$\mathcal{G}^i$
& Goal region for $i$th target. 
\\ \hline
$f_{w,x} f_{w,y} f_{w,z}, n_w, m_w, l_w$
& Forces and torques for external wind effects. 
\\ \hline
$x,y,\theta,v,\omega$
& State of tracked vehicle. 
\\ \hline
$v_d, \omega_d$
& Action of tracked vehicle. 
\\ \hline
$\tau_v, \tau_w$
& Timescales in tracked vehicle dynamics. 
\\ \hline
$\{a_1,a_2,a_3,a_4\}$
& Adaptive control parameters in tracked vehicle
\\ \hline
$(p_x^i,p_y^i,v_x^i,v_y^i)$
& State of each particle in spacecraft experiment
\\ \hline
$F_{\text{net},i}$
& Net tension force on net segment $i$. 
\\ \hline
$l_n, k_n, c_n$
& \makecell[l]{Net parameters: nominal length, spring constant, \\ and damping constant.}
\\ \hline
$F_{\text{contact},i}$
& Contact force of net segment $i$. 
\\ \hline
$r_t, k_c, c_c$
& \makecell[l]{Contact force parameters: target radius, \\ spring constant, damping constant.}
\\ \hline
$s(d,a)$ 
& Normalization function. 
\\ \hline
$\delta_e, \delta_a, \delta_r$ 
& Actions for glider system: elevator, aileron, rudder deflections. 
\\ \hline
$V_a$ 
& Relative wind speed
\\ \hline
$S, c, b$ 
& \makecell[l]{Planar area of wind surface, mean chord length, \\ and wind span of the drone.}
\\ \hline
$C_L, C_D, C_m, C_Y, C_l, C_n$ 
& Aero model for lift, drag, and yaw force, and external moments. 
\\ \hline
$\alpha, \beta$ 
& Angle of attack and slip. 
\\ \hline
$p_{x,g}, p_{y,g}, p_{z,g}$ 
& Goal x,y,z position. 
\\ \hline
$\mathcal{T}_{1,2}$
& Bellman value operators for input sets $U_1$, $U_2$
\\ \hline
$L_V$
& Lipschitz constant of optimal value function
\\ \hline
$L_Q$
& Lipschitz constant of the state-action function
\\ \hline
$\varepsilon_H$
& \makecell[l]{Maximum distance between linearization point \\ and rollout up to horizon $H$}
\\ \hline
$\sigma_\text{max}(\cdot)$
& Largest singular value of matrix
\\ \hline
$L_{\nabla F}$
& Lipschitz constant of dynamics gradient
\\ \hline
$\Theta_k$
& Matrix decomposition of contraction metric $M_k$
\\ \hline
$b$
& Branching factor
\\ \hline
$\mu(i,\ell)$
& Child selection of node $i$ at time $\ell$
\\ \hline
$\Delta(j)$
& Gap between suboptimal node $j$ and optimal sibling node
\\ \hline
$\Delta_\text{min}(i)$
& Smallest gap among all suboptimal children of node $i$
\\ \hline
$q(i, \ell)$ 
& State-action of node $i$ associated with rollout $\ell$
\\ \hline
$Q(i, \ell)$ 
& Average state-action of node $i$ after rollout $\ell$
\\ \hline
$Q^*(i)$ 
& Optimal state-action of node $i$
\\ \hline
$L(i,\ell)$
& Visit times to node $i$, when the root has been visited $\ell$ times
\\ \hline
$c_i^d, \xi_d(j)$
& \makecell[l]{Convergence and concentration rates for node $j$ at depth $d$ \\ $i=1, \dots, 8$}
\\ \hline
$c_i^{D-1}, \xi_{D-1}(j)$
& \makecell[l]{Convergence and concentration rates for leaf nodes $j$ \\ $i=1, \dots, 8$}
\\ \hline
$\overline{m}, \underline{m}$
& \makecell[l]{Upper and lower bounds on the eigenvalues \\ of the contraction metric}
\\ \hline
$\alpha$
& Contraction rate
\\ \hline
\end{longtable}

\end{document}